%% file: neurips_2026.tex
\documentclass{article}

\usepackage[preprint]{neurips_2026}
\usepackage{amsmath}
\usepackage{graphicx}
\usepackage{mathtools}

\usepackage[utf8]{inputenc} 
\usepackage[T1]{fontenc}    
\usepackage{hyperref}       
\usepackage{url}            
\usepackage{booktabs}       
\usepackage{amsfonts}       
\usepackage{nicefrac}       
\usepackage{microtype}      
\usepackage{xcolor}         
\usepackage{amsthm} 
\usepackage{wrapfig}
\usepackage{adjustbox}
\usepackage{algorithm}
\usepackage{algorithmic}

\newtheorem{theorem}{Theorem}[section]
\newtheorem{proposition}[theorem]{Proposition}

\theoremstyle{definition}

\newcommand{\ProblemDefinitionBox}{%
  \begingroup
  \setlength{\fboxsep}{4pt}%
  \setlength{\fboxrule}{0.6pt}%
  \noindent\hfill\fbox{%
    \begin{minipage}{0.97\columnwidth}
      \textbf{Problem setting.}\par
      \textbf{Provided:} models $(f_{\theta},h_{\theta})$, observation data $\{Y_{1:T}^{(n)}\}^{N}_{n=1}$.\par
  \textbf{Goal:} compute the smoother distribution $p_{\theta}(X_{1:T}\!\mid\!Y_{1:T})$ and the best parameters $\theta$ based on models $(f_{\theta},h_{\theta})$.\par
      The trajectory $\{X^{(n)}_{1:T}\}^{N}_{n=1}$ is \textbf{not} available for training.
    \end{minipage}%
  }\hfill\relax
  \endgroup
}

\title{PR-Smoother: Simulator-Preserving Non-Gaussian Smoothing for Data Assimilation}

\author{%
  Yuta Tarumi \\
  iTHEMS\\
  RIKEN\\
  Kobe, Japan \\
  \texttt{yuta.tarumi@riken.jp} \\
}

\begin{document}

\maketitle

\begin{abstract}
Many physical data assimilation (DA) workflows require smoothing methods that represent non-Gaussian posteriors over physical state variables, scale to high-dimensional simulators, train from observation windows alone, and remain compatible with calibration of the prescribed simulator.
We introduce PR-Smoother, a simulator-preserving amortized smoother designed for this prescribed-simulator DA regime.
Its key design principle is to keep the prescribed simulator explicit in both the evidence lower bound and the variational family: rather than learning replacement dynamics or a learned trajectory prior, PR-Smoother learns only future-conditioned corrections around the prescribed rollout.
This yields an explicit non-Gaussian smoothing distribution over physical trajectories and supports joint state, parameter, and sensor-bias learning from observations alone.
The variational family contains the exact smoother in deterministic and linear-Gaussian limits.
Empirically, PR-Smoother captures multimodal posteriors in 4-dimensional Lorenz–96, remains accurate under ambiguous nonlinear observations and process noise in 40-dimensional Lorenz–96, and scales to joint state-parameter-bias inference in 16,384-dimensional Kolmogorov flow.
\end{abstract}

\section{Introduction}

Data assimilation (DA) infers latent physical states from partial, noisy observations under a prescribed dynamical model. We focus on smoothing, which estimates a posterior distribution over a physical trajectory using all observations within a time window. We formulate this task as inference in a state-space model (SSM) with prescribed dynamics and observation operators \citep[e.g.][]{Carassi_DAreview}.
DA underlies reanalysis systems such as ERA5 \citep{ERA5}, which enabled the training of AI-based weather prediction models \citep{FourCastNet,PanguWeather,GraphCast}. DA is increasingly used in related domains, including Venusian atmospheric modeling \citep{VenusDA} and space weather \citep{SpaceWeather1, SpaceWeather2}.

In many physical DA settings, the dynamics and observation operators $(f_\theta,h_\theta)$ are prescribed by scientific knowledge, while their parameters $\theta$ may be unknown. We call inference \emph{simulator-preserving} when these operators remain explicit in the generative model and inference objective, allowing their physically meaningful parameters to be calibrated.
Many DA settings are observation-only: one has access to $Y_{1:T}$, but not to state trajectories $X_{1:T}$ for supervision. If $\theta$ is unknown, surrogate training requires choosing parameter values to generate synthetic trajectories. Inference remains difficult because nonlinear $f_\theta$ and $h_\theta$ induce smoothing posteriors that are often non-Gaussian and sometimes multimodal, while in high dimensions sequential Monte Carlo is often impractical. Operational methods therefore rely on Gaussian analyses or windowed MAP estimates \citep{2010Bocquet}. These approximations can work well in near-linear regimes, but become brittle under many-to-one or strongly nonlinear observations, such as saturating radiance operators or scatterometry \citep{VarQC1,VarQC2,2019Geer_AllSky}. This motivates a physics-preserving alternative that keeps $(f_\theta,h_\theta)$ explicit and differentiable in the objective while using a flexible variational posterior to represent the smoothing distribution.

These considerations define a practically important regime of prescribed-simulator DA. In this regime, inference should simultaneously \textbf{(i) represent non-Gaussian smoothing posteriors in physical space, (ii) scale to high-dimensional systems, (iii) learn from observation windows alone, and (iv) keep the prescribed simulator explicit} so that physically meaningful parameters can be estimated. Taken separately, these goals are familiar. Taken together, they exclude many otherwise successful approaches: methods that preserve the simulator typically rely on Gaussian analyses or windowed MAP estimates, whereas methods with richer learned posteriors often move inference to learned latent states or learned trajectory priors. This paper targets precisely this intersection.

We propose Physics Rollout (PR)-Smoother, a simulator-preserving non-Gaussian smoother for prescribed-simulator DA. Its key design choice is to build inference around the simulator rollout itself: rather than learning replacement dynamics or an auxiliary latent transition model, PR-Smoother learns only future-conditioned corrections around $f_\theta(X_{t-1})$. Concretely, it models $q_\phi(X_1 \mid Y_{1:T})$ with a conditional normalizing flow (NF) and parameterizes each transition $q_\phi(X_t \mid X_{t-1}, Y_{t:T})$ as a future-conditioned Gaussian correction around the simulator rollout. This mirrors the smoother factorization while keeping $(f_\theta,h_\theta)$ explicit in both the evidence lower bound (ELBO) and the variational family, yielding an explicit non-Gaussian density over physical trajectories. 
The observation-only ELBO jointly trains the inference network and calibrates unknown physical and sensor parameters. Inference is \emph{amortized}: after training, a new observation window is processed by an inference-network forward pass and simulator rollout.

\textbf{Our contributions are as follows.}
\begin{itemize}
    \item We introduce PR-Smoother, a simulator-preserving amortized smoother for prescribed-simulator DA. It defines a flexible non-Gaussian density over physical trajectories by combining a normalizing-flow initial-state posterior with simulator-rollout-centered transition kernels. The model is trained from observation windows alone using a variational-inference objective, while the prescribed models $(f_\theta,h_\theta)$ remain explicit in the ELBO, enabling calibration of generative-model parameters.
    \item We show that this variational family is well aligned with the target inference problem: it contains the exact smoother in (i) deterministic dynamics limit and (ii) linear-Gaussian limit with the standard universality assumption on NF.
    \item We validate this design across three experiments: controlled multimodality in 4-dimensional Lorenz--96 (Sec.~\ref{sec:Lorenz96_multimodal}), ambiguous nonlinear observations and process noise in 40-dimensional Lorenz--96 (Sec.~\ref{sec:Lorenz96}), and joint state--parameter--bias inference in a 16{,}384-dimensional Kolmogorov flow from observation windows alone (Sec.~\ref{sec:Kolmogorov}).
\end{itemize}

\section{Problem: Prescribed-Simulator Data Assimilation}
\label{sec:problem-setting}

We consider a Markovian SSM with prescribed, differentiable dynamics and observation operators:
\begin{align}
  X_{t+1} &= f_{\theta_\mathrm{dyn}}(X_t) + \eta_t, \label{eq:dynamics}\\
  Y_t &= h_{\theta_\mathrm{obs}}(X_t) + \xi_t, \label{eq:observation}
\end{align}
where $X_t \in \mathbb{R}^d$ are latent states, $Y_t \in \mathbb{R}^p$ are observations, and $\eta_t$ and $\xi_t$ are process
and observation noise, respectively.
We assume
$\eta_t \sim \mathcal{N}(0, Q)$ and $\xi_t \sim \mathcal{N}(0, R)$ unless stated otherwise.\footnote{More generally, non-Gaussian observation noise can be handled without change, as $p_{\theta_{obs}}(Y_t\!\mid\!X_t)$ appears only in the ELBO and does not constrain $q_{\phi}$.}
The parameters $\theta_{\mathrm{dyn}}$ and $\theta_{\mathrm{obs}}$ collect unknown physical or bias parameters.

With these models, the generative model factorizes as
\begin{align}
    p_{\theta}(X_{1:T},Y_{1:T})=p(X_1)\prod_{t=2}^Tp_{\theta_{dyn}}(X_t\!\mid\! X_{t-1})\prod_{t=1}^Tp_{\theta_{obs}}(Y_t\!\mid\! X_t),
    \label{eq:generative model}
\end{align}
where $p(X_1)$ is a prior over the initial state. DA seeks the filtering posterior $p_\theta(X_{T}\!\mid\! Y_{1:T})$ or smoothing posterior $p_\theta(X_{1:T}\!\mid\! Y_{1:T})$ and, when applicable, estimates of the model parameters $\theta = (\theta_{\mathrm{dyn}}, \theta_{\mathrm{obs}})$.
Because $f_{\theta_{\mathrm{dyn}}}$ and $h_{\theta_{\mathrm{obs}}}$ are nonlinear in general,
this posterior is not available in closed form.

\ProblemDefinitionBox

Within each dataset, physical parameters and persistent sensor biases are shared across observation windows, while each window has its own latent trajectory. Unknown shared parameters are optimized jointly with the inference network in a single training run. The trained smoother is reused across windows from that system; deployment in a different physical regime may require adaptation.

\section{Related Work}
\label{sec:related-methods}

\paragraph{Simulator-preserving DA in physical space.}
A first family keeps the prescribed dynamics and observation operators inside inference. This includes 4D-Var and weak-constraint variants, which solve windowed MAP problems \citep{4DVar1,4DVar2}; ensemble Kalman filters and smoothers, which propagate Gaussian analyses under the prescribed model \citep{EnKF_Evensen,IEnKS}; particle/sequential Monte Carlo methods, which target non-Gaussian posteriors using particle representations \citep{PFF07,2013reich,2019vanLeeuwen_PFReview}; and nonlinear ensemble-transport smoothers, which use transport-map ensemble updates for non-Gaussian smoothing in physical space \citep{Ramgraber23a,Ramgraber23b}. Recent learned model-based DA methods also keep the prescribed model in the loop while learning inverse maps, update rules, or solver components \citep{Fablet21,Beauchamp23,2021frerix,KalmanNet,RTSNet,EKFNet}. In the high-dimensional observation-only settings we study, however, scalable methods in this simulator-preserving regime typically rely on Gaussian analyses, MAP objectives, or learned update components rather than an amortized explicit density over physical trajectories. PR-Smoother instead learns a non-Gaussian smoothing density centered on the prescribed rollout from an observation-only ELBO.

\paragraph{Latent-space DA and latent SSM inference.}
A second family gains posterior flexibility by moving inference to a learned latent representation. Amortized VI methods for latent SSMs jointly learn transition, emission, and recognition models from observations \citep{DVAE_review,DKF,Krishnan2017,DVBF,KVAE,Doerr2018,RKN}. In DA, latent-space assimilation has used autoencoders, latent surrogates, and learned atmospheric representations \citep{Peyron2021,Fan2025LDA}. \citep{DBF} is closest in DA spirit: for nonlinear dynamics it constructs a latent space in addition to the physical space and computes the filtering posterior there. LEVDA \citep{LEVDA} performs per-window ensemble-variational optimization for joint state--parameter estimation using a differentiable latent-dynamics surrogate pretrained on physical state trajectories. PR-Smoother learns an amortized non-Gaussian smoother directly in physical space from observation windows alone, retaining the prescribed dynamics and likelihood in the ELBO. 

\paragraph{Learned-prior and generative DA.}
A third family learns a trajectory prior or transition model from simulated trajectories and conditions on observations at inference time. Score-based DA learns a score model of state trajectories from simulated segments, while FlowDAS learns stepwise transition dynamics using stochastic interpolants \citep{SDA,FlowDAS}. These methods are attractive when the goal is to learn a flexible generative prior from the simulator. Our goal is different: in prescribed-simulator DA, the simulator itself is the generative model to preserve. We therefore keep the simulator on-graph and place flexibility only in the smoothing posterior around the prescribed rollout, aligning it with simulator-preserving DA rather than learned-prior simulation-based inference \citep{SBI_review}.

\paragraph{Relation to generic variational SSM inference.}
At the inference level, PR-Smoother is related to variational filtering and smoothing for nonlinear SSMs and continuous-time models \citep{Frigola2014,Hirt2019,ryder18,ryder21}, sequential VI objectives—including particle-filter bounds, future-likelihood estimators, and twisted smoothing targets—such as FIVO, VSMC, VIFLE, and SIXO \citep{FIVO, Naesseth2018, VIFLE, SIXO}, and to Gaussian variational state / parameter estimation and system-identification methods \citep{Courts2023,Dutra2025}. Normalizing Kalman Filters also combine NFs with state-space structure to obtain flexible non-Gaussian time-series models with tractable Kalman-style inference, but target transformed linear-Gaussian forecasting models rather than simulator-preserving physical-space DA and parameter calibration \citep{NormalizingKalmanFilter}. PR-Smoother contributes a prescribed-simulator DA instantiation of this broader inference line, targeted at the requirements described in the previous section. Contemporaneously, AFSF \citep{AFSF} learns filtering/backward-kernel flows from simulated state--observation trajectories; PR-Smoother trains from observation windows via a prescribed-simulator ELBO.

More broadly, \citet{Campbell2021} develop online variational filtering and parameter learning using reverse-time decompositions $q(X_T|Y_{1:T})\prod_{t=1}^{T-1}q(X_t|X_{t+1},Y_{1:t})$, and \citet{ABVI} analyze additive smoothing error for this backward family. PR-Smoother uses the complementary forward/future-conditioned factorization $q(X_1|Y_{1:T})\prod_{t=1}^{T-1}q(X_{t+1}|X_t,Y_{t+1:T})$, which is tailored to prescribed-simulator DA: it allows an expressive NF posterior over $X_1$ while keeping every transition centered on the physical rollout $f_{\theta}$ and retaining $h_{\theta}$ inside the observation-only ELBO.

\paragraph{Positioning.} PR-Smoother targets prescribed-simulator DA, where $f_{\theta}$ and $h_{\theta}$ are the scientific model to be calibrated rather than a data generator to be replaced. Classical DA methods satisfy this requirement but typically use Gaussian analyses or windowed MAP estimates. Latent SSM and latent DA methods gain posterior flexibility by learning auxiliary latent dynamics, emissions, or recognition models from observations. This can improve tractability, but the calibrated quantities are no longer the prescribed physical operators $(f_{\theta}, h_{\theta})$ themselves. Learned-prior DA and SBI-style methods train trajectory priors or transition models from simulated states and condition on observations afterward. PR-Smoother contributes a prescribed-simulator DA instantiation of this broader inference line, providing flexible yet scalable smoothing posteriors via normalizing flows, without giving up the prescribed models $(f_\theta,h_\theta)$. 

\section{Method}
\label{sec:method in a nutshell}

PR-Smoother targets prescribed-simulator DA by keeping the original SSM explicit in both the ELBO and the variational family, and learning only future-conditioned corrections around the prescribed rollout. We model $q_{\phi}(X_1\!\mid\! Y_{1:T})$ with a conditional NF and $q_{\phi}(X_t\!\mid\!X_{t-1},Y_{t:T})$ with future-conditioned Gaussian transitions centered at $f_{\theta_{\mathrm{dyn}}}(X_{t-1})$, yielding a non-Gaussian smoother in physical state space and enabling learning of $\theta$ from observation windows alone.

\subsection{Objective}

For an observation sequence $Y_{1:T}$ we define the ELBO:
\begin{align}
    &\mathcal{L}(\theta,\phi;Y_{1:T}) =
    \int q_{\phi}(X_{1:T}\!\mid\! Y_{1:T})\log \biggl[\frac{p_{\theta}(X_{1:T},Y_{1:T})}{q_{\phi}(X_{1:T}\!\mid\! Y_{1:T})}\biggr]dX_{1:T},
    \label{eq:elbo-high-level}
\end{align}
which lower-bounds $\log p_\theta(Y_{1:T})$.
Given $\{Y_{1:T}^{(n)}\}_{n=1}^N$, we maximize $\frac{1}{N}\sum_{n=1}^N \mathcal{L}(\theta,\phi;Y_{1:T}^{(n)})$ using minibatches, training from observation windows alone under the prescribed SSM. Maximizing this ELBO jointly learns the amortized smoother and point
estimates of the unknown shared physical and sensor parameters. 

\subsection{Simulator-Preserving Smoother Construction and ELBO}
\label{sec: variational family}

We describe two nested simulator-preserving variational families. The deterministic case uses the prescribed rollout directly, so the transition terms cancel in the ELBO (Eq.~\ref{eq:elbo-det}). The noisy case keeps the same rollout-centered structure but adds future-conditioned Gaussian corrections, matching the smoother conditional factorization (Eqs.~\ref{eq:Markov-decompose}-\ref{eq:q-corr}).

\subsubsection{Initial-Time Posterior $q_{\phi}(X_1\!\mid\!Y_{1:T})$}
To enable flexible modeling of the variational distribution, we use a conditional NF for the initial-time posterior $q_{\phi}(X_1\!\mid\!Y_{1:T})$. An encoder $c=E_{\phi}(Y_{1:T})$ summarizes observations over $T$ steps. With $z\sim\mathcal{N}(0,I)$ and a conditional NF $g_{\phi}$,
\begin{align}
X_1&=g_{\phi}(z;\,c),\qquad
\log q_{\phi}(X_1\!\mid\!Y_{1:T})= \log \mathcal{N}(z;0,I) - \log\!\left|\det \frac{\partial g_{\phi}}{\partial z}\right|.
\label{eq:condflow}
\end{align}
Eq.~\eqref{eq:condflow} provides a computationally efficient way to calculate the KL term in Eq.~\eqref{eq:elbo-det}. We use a conditional NF (RealNVP; \citealt{RealNVP}) rather than diffusion or flow-matching models mainly because it gives direct access to $\log q_{\phi}$, which we need in the ELBO; see also Appendix~\ref{sec:anticausal encoder}.

\subsubsection{Deterministic Dynamics Model}
\label{sec:deterministic}
When process noise is negligible, we introduce a vanishing-noise transition kernel
\begin{align}
p_{\theta_{\mathrm{dyn}},Q}(X_{t}\!\mid\! X_{t-1})
=\mathcal{N}\!\big(X_{t};\,f_{\theta_{\mathrm{dyn}}}(X_{t-1}),\,Q\big),
\label{eq:det-transition-Q}
\end{align}
and construct the variational posterior by pairing this kernel with the distribution for the first step:
\begin{align}
q_{\phi}(X_{1:T}\!\mid\! Y_{1:T})
= q_{\phi}(X_1\!\mid\! Y_{1:T}) \prod_{t=2}^{T} p_{\theta_{\mathrm{dyn}},Q}(X_{t}\!\mid\! X_{t-1}).
\label{eq:q-factorization-Q}
\end{align}
Because the transition factors in \eqref{eq:det-transition-Q} appear identically in both the joint distribution $p_{\theta}(X_{1:T},Y_{1:T})$ (Eq.~\ref{eq:generative model}) and $q_{\phi}$, the log-transition terms cancel in the ELBO for any  $Q>0$. 
Taking $Q\!\to\!0$ yields the deterministic rollout
\begin{align}
X_1 \sim q_{\phi}(X_1 \!\mid\! Y_{1:T}),
X_{t+1}=f_{\theta_{\mathrm{dyn}}}(X_t)\; (1\leq t\leq T{-}1)
\label{eq:rollout-det}
\end{align}
so $q_\phi(X_{1:T}\mid Y_{1:T})$ is the pushforward of $q_\phi(X_1\mid Y_{1:T})$
through $f_{\theta_{\rm dyn}}$, and the limiting ELBO is
\begin{align}
\mathcal{L}_{\mathrm{det}}(Y_{1:T},\theta,\phi) = \int q_{\phi}(X_{1:T} \!\mid\! Y_{1:T})\Big[\sum_{t=1}^{T}\log p_{\theta_{\mathrm{obs}}}(Y_t \!\mid\! X_t)\Big]dX_{1:T} - \mathrm{KL}\!\big(q_{\phi}(X_1 \!\mid\! Y_{1:T})\,\|\,p(X_1)\big).
\label{eq:elbo-det}
\end{align}

\subsubsection{Noisy Dynamics Model}
\label{sec:noisy}
With non-negligible process noise, we use a Markov per-step extension.
We first factorize the posterior in the forward direction:
\begin{align}
    p_{\theta}(X_{1:T}\!\mid\! Y_{1:T}) = p_{\theta}(X_1\!\mid\!Y_{1:T})\prod_{t=2}^T p_{\theta}(X_t\!\mid\! X_{1:t-1},Y_{1:T}).
    \label{eq:forward-factorize}
\end{align}
By the Markov property of the SSM, 
\begin{align}
    p_{\theta}(X_t\!\mid\! X_{1:t-1},Y_{1:T})=p_{\theta}(X_t\!\mid\! X_{t-1},Y_{t:T}).
    \label{eq:Markov-decompose}
\end{align}
We mirror this factorization by augmenting $q_{\phi}$ with future-aware noise terms:
\begin{align}
q_{\phi}(X_{1:T}\!\mid\!Y_{1:T})
= q_{\phi}(X_1\!\mid\!Y_{1:T})\prod_{t=2}^{T} q_{\phi}\!\big(X_{t}\!\mid\!X_{t-1}, Y_{t:T}\big),
\label{eq:q-markov} \\
q_{\phi}\!\big(X_{t}\!\mid\!X_{t-1}, Y_{t:T}\big)
= \mathcal{N}\!\big(f_{\theta_{\mathrm{dyn}}}(X_{t-1}){+}m_t,\,S_t\big),
\label{eq:q-corr}
\end{align}
where corrections $(m_t,S_t)$ depend on the previous state $X_{t-1}$ and future observations $Y_{t:T}$. 
Transitions no longer cancel; the ELBO for this family is

\begin{align}
\mathcal{L}_{noisy}(Y_{1:T},\theta,\phi) &= \int q_{\phi}(X_{1:T} \!\mid\! Y_{1:T})\Big[\sum_{t=1}^{T}\log p_{\theta_{\mathrm{obs}}}(Y_t \!\mid\! X_t)\Big]dX_{1:T} - \mathrm{KL}\!\big(q_{\phi}(X_1 \!\mid\! Y_{1:T})\,\|\,p(X_1)\big) \nonumber\\
&+\sum_{t=2}^{T}\int q_{\phi}(X_{1:T}\!\mid\!Y_{1:T})\log\biggl[\frac{p_{\theta_{\mathrm{dyn}}}(X_{t}\!\mid\!X_{t-1})}{q_{\phi}(X_{t} \!\mid\! X_{t-1},Y_{t:T})}\biggl]dX_{1:T}.
\label{eq:elbo-noisy}
\end{align}

Setting $m_t=0$ and $S_t=Q$ recovers the deterministic variational family of Eq.~\eqref{eq:q-factorization-Q}, 
so the noisy-dynamics variant strictly generalizes the deterministic family by adding future-aware Gaussian corrections around $f_{\theta_{dyn}}(X_{t-1})$.
For Gaussian transitions Eqs.~\eqref{eq:det-transition-Q} and \eqref{eq:q-corr}, each per-step KL is closed-form, encouraging $(m_t,S_t)$ to match the process noise in the dynamics model. 
Importantly, these corrections parametrize only the variational posterior; the generative transition $p_{\theta_{\mathrm{dyn}}}(X_t\!\mid\!X_{t-1})$ and therefore the prescribed simulator are used unchanged inside the ELBO \eqref{eq:elbo-noisy}.

\subsection{Consistency Checks in Tractable Regimes}
\label{sec:standpoint}
Here, we verify that the proposed variational family is posterior-inclusive in two limiting cases.

\begin{proposition}[Posterior inclusiveness in two limits]
\label{prop:posterior-inclusive}
Assume (i) the conditional flow $q_{\phi}(X_1\!\mid\!Y_{1:T})$ can represent
any smooth density over $X_1$, and (ii) in the noisy-dynamics variant, the
Gaussian corrections $q_{\phi}(X_t\!\mid\!X_{t-1},Y_{t:T})$ can realize any
mean and covariance of the form in Eq.~\ref{eq:q-corr}. Then, for the SSM in
Eqs.~\eqref{eq:dynamics}-\eqref{eq:generative model}, the exact smoothing posterior
$p_{\theta}(X_{1:T}\!\mid\!Y_{1:T})$ lies in our variational family in two
cases: (a) deterministic dynamics $X_{t+1}=f_{\theta}(X_t)$ and (b)
linear-Gaussian SSMs. Hence there
exists $\phi^\star$ with
$q_{\phi^\star}(X_{1:T}\!\mid\!Y_{1:T}) = p_{\theta}(X_{1:T}\!\mid\!Y_{1:T})$ for these cases.
\end{proposition}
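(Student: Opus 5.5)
The plan is to treat the two cases separately. In each, I write the exact smoother in the forward factorization of Eqs.~\eqref{eq:forward-factorize}--\eqref{eq:Markov-decompose}, and then check that every factor can be matched by the corresponding factor of the variational family, Eq.~\eqref{eq:q-factorization-Q} or Eqs.~\eqref{eq:q-markov}--\eqref{eq:q-corr}. Since the variational family has exactly the same product structure, matching factor by factor gives equality of the joint densities over $X_{1:T}$. The value $\phi^\star$ is then whatever parameter realizes the matched factors, which exists by assumptions (i)--(ii).

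\textbf{Case (a), deterministic dynamics.} With $X_{t+1}=f_\theta(X_t)$, the joint law of $X_{1:T}$ under the prior is the pushforward of $p(X_1)$ under the map $X_1\mapsto (X_1, f_\theta(X_1), \dots, f_\theta^{\circ(T-1)}(X_1))$. Conditioning on $Y_{1:T}$ reweights this law by $\prod_t p_{\theta_{\rm obs}}(Y_t\mid f_\theta^{\circ(t-1)}(X_1))$, which depends only on $X_1$. The posterior is therefore again a pushforward, now of
\[
p_\theta(X_1\mid Y_{1:T}) \propto p(X_1)\prod_{t=1}^T p_{\theta_{\rm obs}}\bigl(Y_t\mid f_\theta^{\circ(t-1)}(X_1)\bigr).
\]
This density is smooth whenever $p(X_1)$, $f_\theta$ and $h_\theta$ are smooth and the noise is Gaussian. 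By (i), some $\phi^\star$ has $q_{\phi^\star}(X_1\mid Y_{1:T})=p_\theta(X_1\mid Y_{1:T})$. The $Q\to0$ family, Eq.~\eqref{eq:rollout-det}, is exactly the pushforward of $q_\phi(X_1\mid Y_{1:T})$ under the same rollout map, so the two trajectory laws coincide. I will phrase the equality as equality of pushforward measures (the law is degenerate on a $d$-dimensional manifold, so there is no joint density on $\mathbb{R}^{dT}$). This also makes the $Q\to0$ limit harmless.

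\textbf{Case (b), linear-Gaussian.} Take $f(x)=Ax$, $h(x)=Hx$, a Gaussian prior, and Gaussian $Q$ and $R$. The joint $(X_{1:T},Y_{1:T})$ is Gaussian, so every conditional is Gaussian. The initial factor $p_\theta(X_1\mid Y_{1:T})$ is a Gaussian, hence smooth, and (i) realizes it; alternatively, an affine flow suffices. For the transitions, I use Eq.~\eqref{eq:Markov-decompose} and Bayes:
\[
p(X_t\mid X_{t-1},Y_{t:T})\propto \mathcal{N}(X_t; AX_{t-1},Q)\,\beta_t(X_t),
\]
where $\beta_t(X_t)=p(Y_{t:T}\mid X_t)$ is the backward-information likelihood. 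In the linear-Gaussian model this is an unnormalized Gaussian in $X_t$, of the form $\exp(-\tfrac12 X_t^\top \Omega_t X_t + \omega_t^\top X_t)$ with $\Omega_t\succeq0$, and I would derive it by the standard backward information-filter recursion. Completing the square gives a Gaussian with covariance $S_t=(Q^{-1}+\Omega_t)^{-1}$ and mean $S_t(Q^{-1}AX_{t-1}+\omega_t)$. Writing this mean as $AX_{t-1}+m_t$ gives
\[
m_t=S_t\omega_t-S_t\Omega_t A X_{t-1},
\]
using $S_tQ^{-1}=I-S_t\Omega_t$. This $m_t$ depends only on $X_{t-1}$ and $Y_{t:T}$, which is exactly the form in Eq.~\eqref{eq:q-corr}, so (ii) realizes it. Multiplying the matched factors gives the equality of the joint posteriors.

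The main obstacle is the transition step in case (b). The point to establish is that the exact conditional is Gaussian with mean \emph{centered at the rollout plus a correction}, where the correction is allowed to depend on $X_{t-1}$. This needs a clean argument that $\beta_t$ is Gaussian-exponential; induction backward from $\beta_{T+1}\equiv1$ handles it, with possibly degenerate $\Omega_t$, for which $S_t$ is still well defined because $Q\succ0$. A subtler point to flag is that (ii) must allow $m_t$ to be affine in $X_{t-1}$. The paper's wording, that the corrections depend on $X_{t-1}$, allows this, and I will state it explicitly.
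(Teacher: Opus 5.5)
Your proposal is correct and takes essentially the same route as the paper's appendix proof. In case (a) you identify the smoother as the pushforward of $p_\theta(X_1\mid Y_{1:T})$ through the shared rollout; in case (b) you use backward induction on Gaussian messages to show each future-conditioned transition is Gaussian with mean affine in $X_{t-1}$, rewritten as $AX_{t-1}+m_t(X_{t-1})$. The only differences are cosmetic: you fold $p(Y_t\mid X_t)$ into the backward message, giving $S_t=(Q^{-1}+\Omega_t)^{-1}$ where the paper writes $(Q^{-1}+C^\top R^{-1}C+\Lambda_t)^{-1}$, and you write $m_t$ in closed form, whereas the paper leaves it as $(M_t-A)X_{t-1}+b_t$.
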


\noindent\textbf{Proof sketch.}
In the deterministic case the trajectory is a function of $X_1$, so the
smoother is the pushforward of $p_{\theta}(X_1\!\mid\!Y_{1:T})$; by (i) the
flow matches this marginal and, since both models roll out
$X_{t+1}=f_\theta(X_t)$, we obtain $q_{\phi}=p_{\theta}$. 
In the linear-Gaussian case, Kalman-RTS smoothing yields Gaussian conditionals $p_{\theta}(X_t \mid X_{t-1}, Y_{t:T})$ in the future-conditioned factorization of Eq.~\ref{eq:forward-factorize}, of the form in Eq.~\ref{eq:q-corr}; by (ii) the corrections in $q_{\phi}$ can reproduce these, yielding $q_{\phi}=p_{\theta}$. Full derivations appear in Appendix~\ref{sec: proof for variational family including true posterior}.

\section{Experiments}
\label{sec:experiments}

\paragraph{Experimental design.} We test three claims: expressive non-Gaussian smoothing in 4D Lorenz–96 (Sec.~\ref{sec:Lorenz96_multimodal}), robustness under ambiguous nonlinear observations and process noise in 40D Lorenz–96 (Sec.~\ref{sec:Lorenz96}), and scalable joint state–parameter–bias inference in 16,384-D Kolmogorov flow (Sec.~\ref{sec:Kolmogorov}). Training uses only $Y_{1:T}$; implementation and runtime details are in Appendix~\ref{sec:experimental_settings, appendix}. Unless otherwise stated, all reported $\pm$ values denote mean $\pm$ standard deviation over five independent random seeds. Offline training takes approximately 1 GPU-hour for 4D Lorenz–96, 11–18 GPU-hours for 40D Lorenz–96, and 19–20 GPU-hours for Kolmogorov flow. After training, our Flow implementation processes a new window in 0.37–0.38 s for 40D Lorenz–96 and 1.89 s for Kolmogorov flow; full timing protocols and baseline comparisons are given in Appendix~\ref{app:runtime}. Code is available at \url{https://github.com/Yuta-Tarumi/PRSmoother_Neurips2026}.

\subsection{Multimodality Test: 4-Dimensional Lorenz-96}
\label{sec:Lorenz96_multimodal}

\begin{wrapfigure}{r}[0pt]{0.25\linewidth}
\vspace{-5mm}
 \centering
   \includegraphics[width=\linewidth]{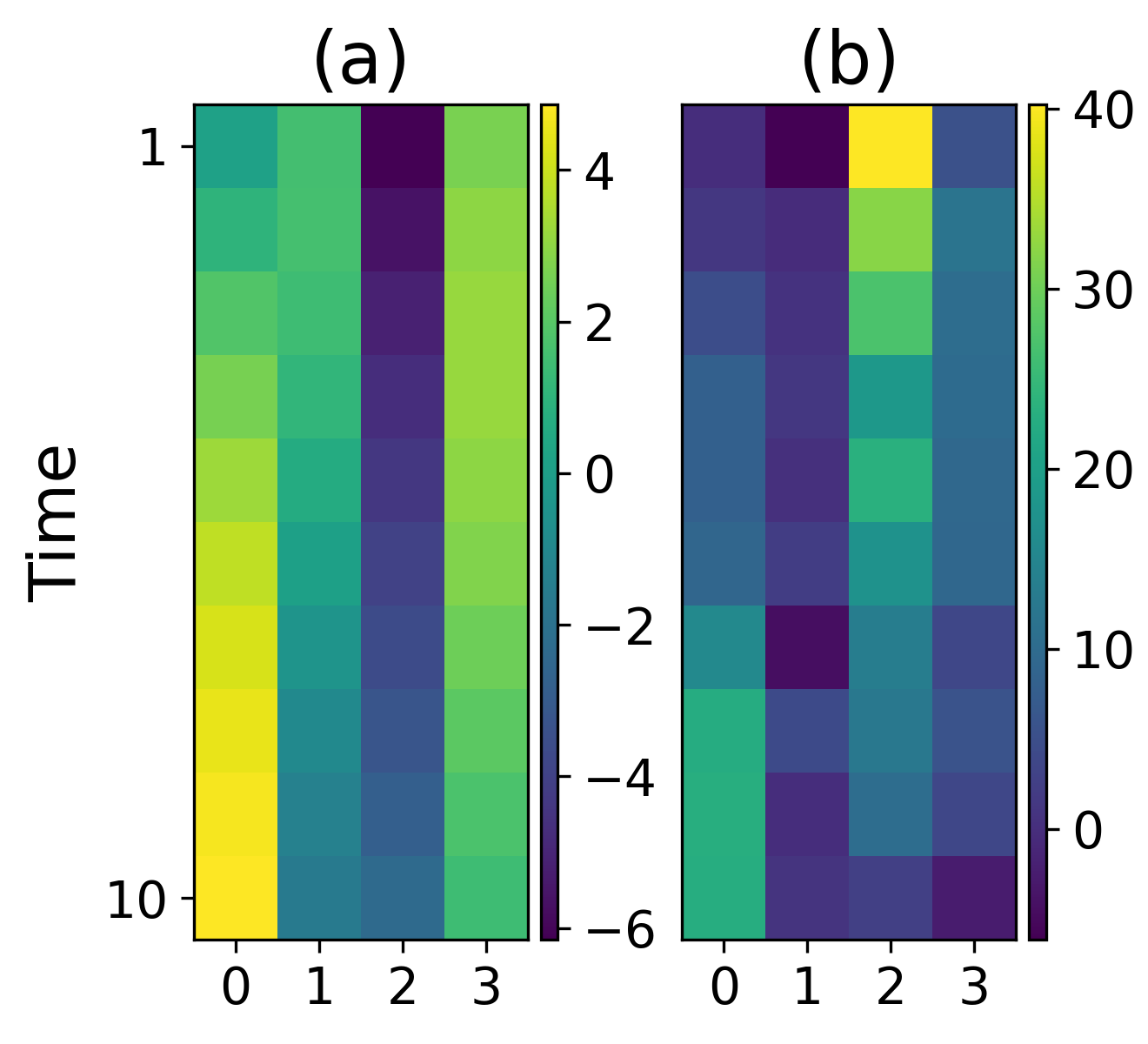}
 \caption{(a) True state $X_{1:T}$ (heatmap over time and space). (b) Observations under $h(x)=x^2$ with additive noise $\sigma=3$.}
 \label{fig:Lorenz96, 4D, example}
\end{wrapfigure}

The Lorenz-96 system is defined on a periodic ring of $N$ variables with time evolution
\begin{equation}
\frac{dx_i}{dt} = (x_{i+1}-x_{i-2})\,x_{i-1} - x_i + F, \ i=1,\dots,N
\label{eq:Lorenz96}
\end{equation}

with cyclic indexing. We take $(N,F)=(4,8)$ for this experiment. Observations are componentwise squares with additive Gaussian noise, 
\begin{align*}
    Y_t &= X_t^2 + \varepsilon_t,  \\
    \varepsilon_t &\sim \mathcal{N}(0,\sigma^2 I),\ \sigma=3,
\end{align*}
so that $x$ and $-x$ are indistinguishable at a single time point. As a result, the true posterior for one-step smoothing is at most $2^4$-modal. Using the same simulated paths, we study a 1‑step smoother $p(X_{10}\!\mid\! Y_{10})$ and a 10-step smoother $p(X_{10}\!\mid\! Y_{1:10})$, where the dynamics help resolve the sign ambiguity.

We instantiate $q_\phi(X_1 \!\mid\! Y_{1:T})$ with two models: a flow and a diagonal Gaussian. We further compare against a mean-field baseline (MF) that factorizes $q_{\phi}(X_{1:T}\!\mid\!Y_{1:T})$ over time and space (see Appendix~\ref{sec:experimental_settings, appendix} for the ELBO), and classical ensemble Kalman methods (EnKF; \citealt{EnKF_Evensen}, ETKF; \citealt{ETKF}). As a reference “truth,” we run a bootstrap PF with $5\times 10^6$ particles to represent $p(X_T \!\mid\! Y_{1:T})$. We use deterministic PR‑Smoother in this experiment; adding process noise would not change the multimodal structure qualitatively and would introduce extra complexity. Noisy variants are evaluated in Sections \ref{sec:Lorenz96}--\ref{sec:Kolmogorov}.

\begin{figure*}[htbp]
\centering
\makebox[\textwidth][c]{%
  \begin{minipage}[t]{0.23\textwidth}
    
  \centering

    \vspace{2pt}

    \includegraphics[width=0.99\linewidth]{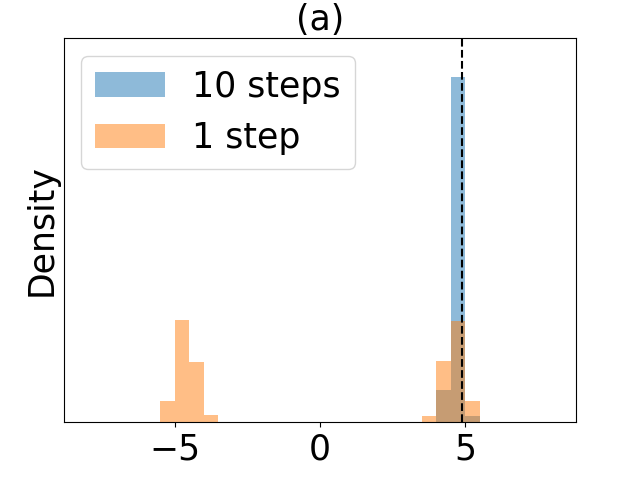}\hfill
    \includegraphics[width=0.99\linewidth]{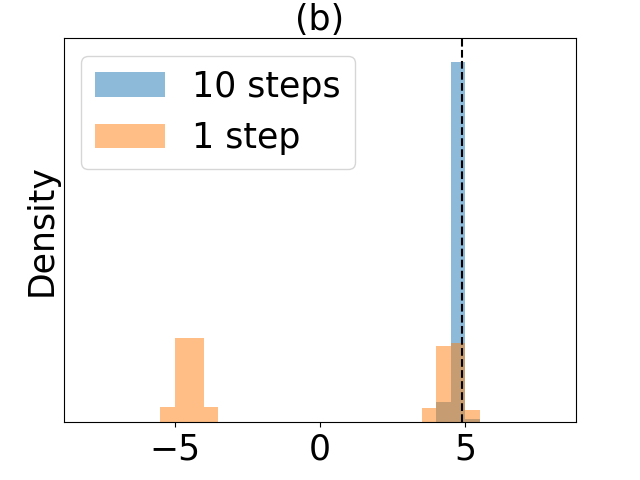}
  \end{minipage}%
  \hspace{0.02\textwidth}
  \begin{minipage}[t]{0.72\textwidth}
    \vspace{10mm}
    \centering
    \small
    \setlength{\tabcolsep}{4pt}\renewcommand{\arraystretch}{1.15}
    \begin{tabular}{lcccc}
      \toprule
      & \multicolumn{2}{c}{1-step} & \multicolumn{2}{c}{10-step} \\
      \cmidrule(lr){2-3}\cmidrule(lr){4-5}
      Method & ED & $W_1$ & ED & $W_1$ \\
      \midrule
      flow & \textbf{0.20 $\pm$ 0.04} & \textbf{0.90 $\pm$ 0.09} & \textbf{0.13 $\pm$ 0.04} & \textbf{0.22 $\pm$ 0.02} \\
      Gauss   & 1.38 $\pm$ 0.24 & 3.63 $\pm$ 0.67 & 0.19 $\pm$ 0.02 & 0.29 $\pm$ 0.01 \\
      MF      & 1.37 $\pm$ 0.24 & 3.62 $\pm$ 0.67 & 0.57 $\pm$ 0.05 & 0.68 $\pm$ 0.07 \\
      EnKF    & 0.70 $\pm$ 0.09 & 2.63 $\pm$ 0.31 & 1.42 $\pm$ 0.26 & 2.57 $\pm$ 0.58 \\
      ETKF    & 0.70 $\pm$ 0.09 & 2.62 $\pm$ 0.32 & 1.50 $\pm$ 0.25 & 2.90 $\pm$ 0.56 \\
      \bottomrule
    \end{tabular}
  \end{minipage}%
}
\caption{Results for the Lorenz-96 experiments with $(N,F)=(4,8)$ and $h(x)=x^2$, $\sigma=3$. Panel (a): posterior distributions $p_{true}(X_{10}\!\mid\! Y_{10})$ (``1 step'') and $p_{true}(X_{10}\!\mid\! Y_{1:10})$ (``10 steps'') obtained by PF. Panel (b): variational distributions $q_{\phi}(X_{10}\!\mid\! Y_{10})$ (``1 step'') and $q_{\phi}(X_{10}\!\mid\! Y_{1:10})$ (``10 steps'') by our flow model. The table on the right shows quantitative comparisons of algorithms.}
\label{fig:Lorenz96, 4D}
\end{figure*}

Fig.~\ref{fig:Lorenz96, 4D} compares PF posteriors with PR-Smoother approximations. The one-step PF posterior is bimodal due to the sign ambiguity of $h(x)=x^2$; conditioning on $Y_{1:10}$ resolves this ambiguity and yields a sharply concentrated distribution. The flow-based $q_\phi$ reproduces the same transition from multimodal to concentrated. Quantitatively, it shows the smallest energy distance (ED) and 1-Wasserstein ($W_1$) distance to the PF reference (Fig.~\ref{fig:Lorenz96, 4D}, right). Posterior histograms for each method are in Appendix~\ref{sec:Lorenz96, inference examples, appendix}.

\textbf{Takeaways.}
(1) When information is insufficient (one step), flow-based $q_\phi$ \emph{represents} the required bimodality instead of collapsing to a Gaussian; (2) as information accumulates (ten steps), PR-Smoother \emph{uses dynamics to correctly disambiguate} and narrows down the variational distribution.

\subsection{Nonlinear Observation Operator and Dynamics Noise Test: 40-Dimensional Lorenz-96}
\label{sec:Lorenz96}
We consider Lorenz-96 with $(N,F)=(40,8)$ in three settings: linear observations $h(x)=x$, nonlinear observations $h(x)=\min(x^2,5)$, and the same nonlinear observation operator with Gaussian dynamics noise $\sigma=0.1$ per step. The observation window is set to 50 steps. Observations have additive Gaussian noise $\sigma=1$. We compare four PR-Smoother variants (Flow/Gauss, with/without noisy-transition corrections) against 4D-Var (10 restarts), IEnKS (\citealt{IEnKS}; 512 members), and a MF VI baseline (see Appendix~\ref{sec:L96, 40D, appendix}). We report RMSE averaged over state variables and time steps. Noisy-transition variants use isotropic learned scales ($S_t=\sigma_S^2 I$, $Q=\sigma_Q^2 I$); see Appendix~\ref{sec:experimental_settings, appendix} for details.

\begin{figure*}[htbp]
\centering
\begin{minipage}[t]{0.65\textwidth}
  \centering

  \vspace{1.6\baselineskip}

  {\small
  \setlength{\tabcolsep}{4pt}
  \renewcommand{\arraystretch}{1.15}
  \begin{tabular}{lccc}
    \toprule
    Method & Linear & Nonlinear & Nonlinear, noisy dyn \\
    \midrule
    Flow & $0.516\pm0.006$ & $0.738\pm0.011$ & $0.909\pm0.015$ \\
    Noisyflow & $0.287\pm0.001$ & {\bf 0.549 $\pm$ 0.084} & {\bf 0.498 $\pm$ 0.003} \\
    Gauss & $0.597\pm0.027$ & $0.827\pm0.009$ & $1.020\pm0.014$ \\
    NoisyGauss & $0.343\pm0.002$ & $0.797\pm0.019$ & $0.958\pm0.006$ \\
    MF & $0.413\pm0.001$ & $3.170\pm0.004$ & $3.019\pm0.030$ \\
    4D-Var & {\bf 0.142 $\pm$ 0.004} & $6.488\pm0.202$ & $3.516\pm0.024$ \\
    IEnKS & $0.176\pm0.004$ & $5.675\pm0.175$ & $5.603\pm0.177$ \\
    \bottomrule
    \end{tabular}
  }
\end{minipage}\hfill
\begin{minipage}[t]{0.34\textwidth}
  \centering
  \adjustbox{valign=t}{\includegraphics[width=\linewidth]{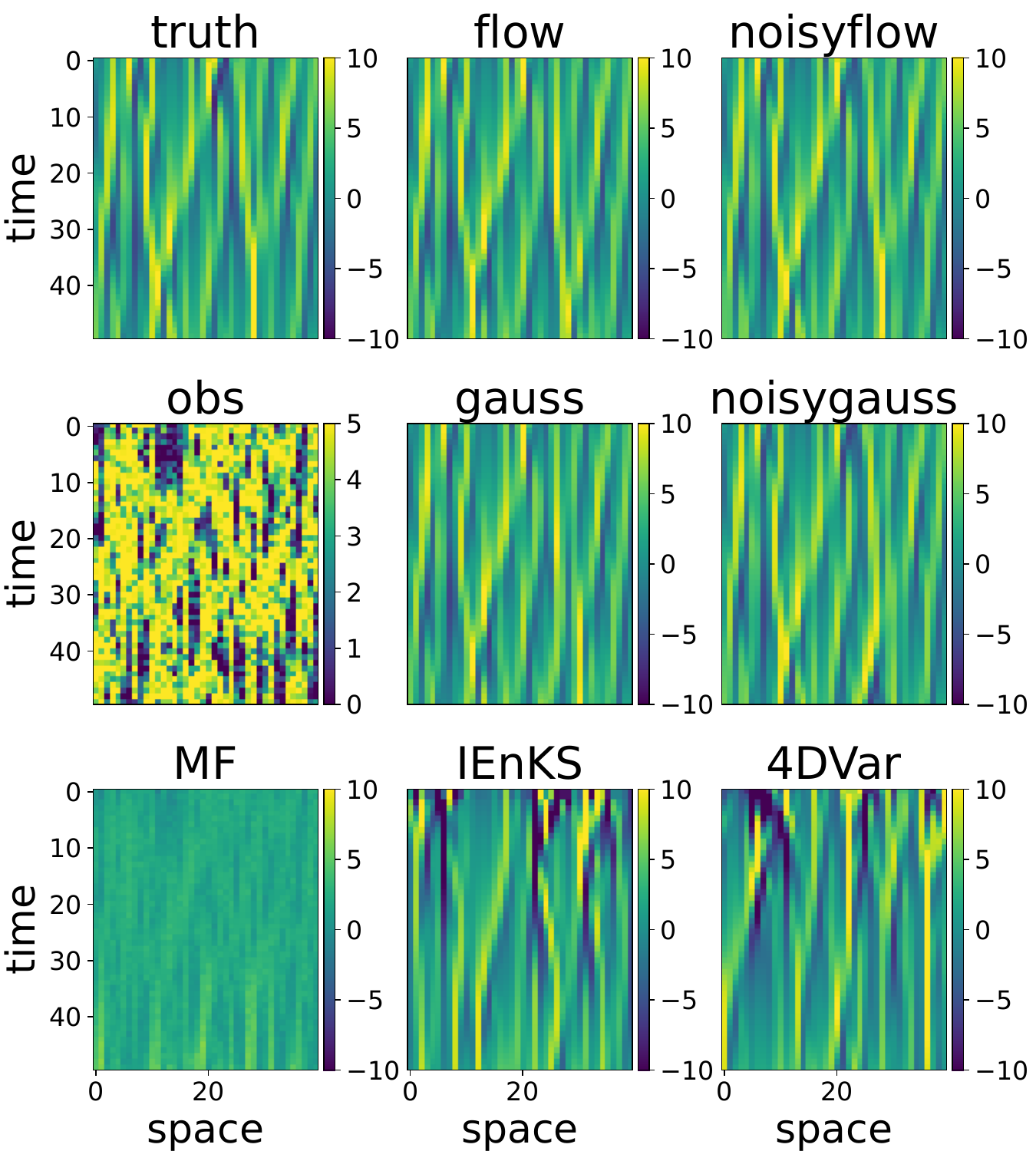}}
\end{minipage}

\caption{Lorenz-96 experiment results. The figure on the right shows reconstructions by methods. The table shows RMSE of samples.}
\label{fig:l96_capsule}
\end{figure*}

Fig.~\ref{fig:l96_capsule} reports RMSEs. With linear observations, all methods are stable and IEnKS/4D-Var perform best; the posterior is close to Gaussian, leaving limited headroom for PR-Smoother. Closing the remaining gap would likely require substantial additional training. With nonlinear $h(x)=\min(x^2,5)$, the observation map is saturating and many-to-one, so single-time data are ambiguous and information must be propagated through time. Gaussian and MF baselines then degrade: IEnKS' local-Gaussian approximation can lose effective temporal correlations, 4D-Var's MAP objective becomes highly nonconvex and initialization-sensitive \citep{ECMWF_4DVar_difficulty, 2018Bonavita}, and MF's factorization precludes trajectory-level dependence.

PR-Smoother couples time steps through the prescribed dynamics and performs well under the many-to-one observations. 
Among PR-Smoother variants, Noisyflow achieves the lowest RMSE across all settings.
In the noisy-dynamics dataset (true per-step process noise $\sigma_Q^{\text{True}}=0.1$), Noisyflow also recovers the noise level at data generation ($\ln\sigma_Q=-2.29\pm0.04$ vs.\ $\ln 0.1=-2.30$; Appendix~\ref{sec:sigmaq_appendix}, Table~\ref{tab:sigma_Q}).
Across regimes, Flow outperforms Gauss and Noisyflow outperforms NoisyGauss; moreover, noisy-transition variants improve on their noiseless counterparts even when the data are generated with deterministic dynamics, suggesting that the added flexibility can ease optimization.

\textbf{Takeaways.}
(i) With informative, near-linear observations, classical methods perform best; PR-Smoother offers limited gains without substantial training. (ii) Under ambiguous nonlinear observations, trajectory-level correlations matter: IEnKS and 4D-Var can diverge, whereas PR-Smoother's dynamics-coupled posterior remains accurate. (iii) Within PR-Smoother, a more flexible variational family (Flow vs.\ Gauss, especially Noisyflow) further reduces RMSE while still respecting the prescribed physics.

\subsection{High State Dimension and Model Parameter Training Test: Kolmogorov Flow}
\label{sec:Kolmogorov}

We consider a 16,384-dimensional Kolmogorov flow (2-D incompressible Navier–Stokes with sinusoidal forcing; Appendix~\ref{sec:Kolmogorov, appendix}), on a $128\times128$ grid. We test four observation operators: full (all grid points), half (upper half), sparse (subsampled points), and realistic (mixed dense/sparse coverage). The forcing amplitude $F$ and Reynolds number $Re$ are unknown (true $(F,Re)=(1.0,1000)$; init $(0.1,100)$). The observation window is set to 10 steps. Observations include Gaussian noise ($\sigma=3.0$) and an additive, time-independent bias field $b$—drawn once at data generation with $b_i\sim\mathcal{N}(0,3^2)$ and shared across all windows (sensor calibration)—yielding 16,386 parameters under “full” observations. 

\begin{figure*}[htbp]
\centering
\begin{minipage}[t]{0.4146\linewidth}
  \vspace{0pt}\centering
  \includegraphics[width=\linewidth]{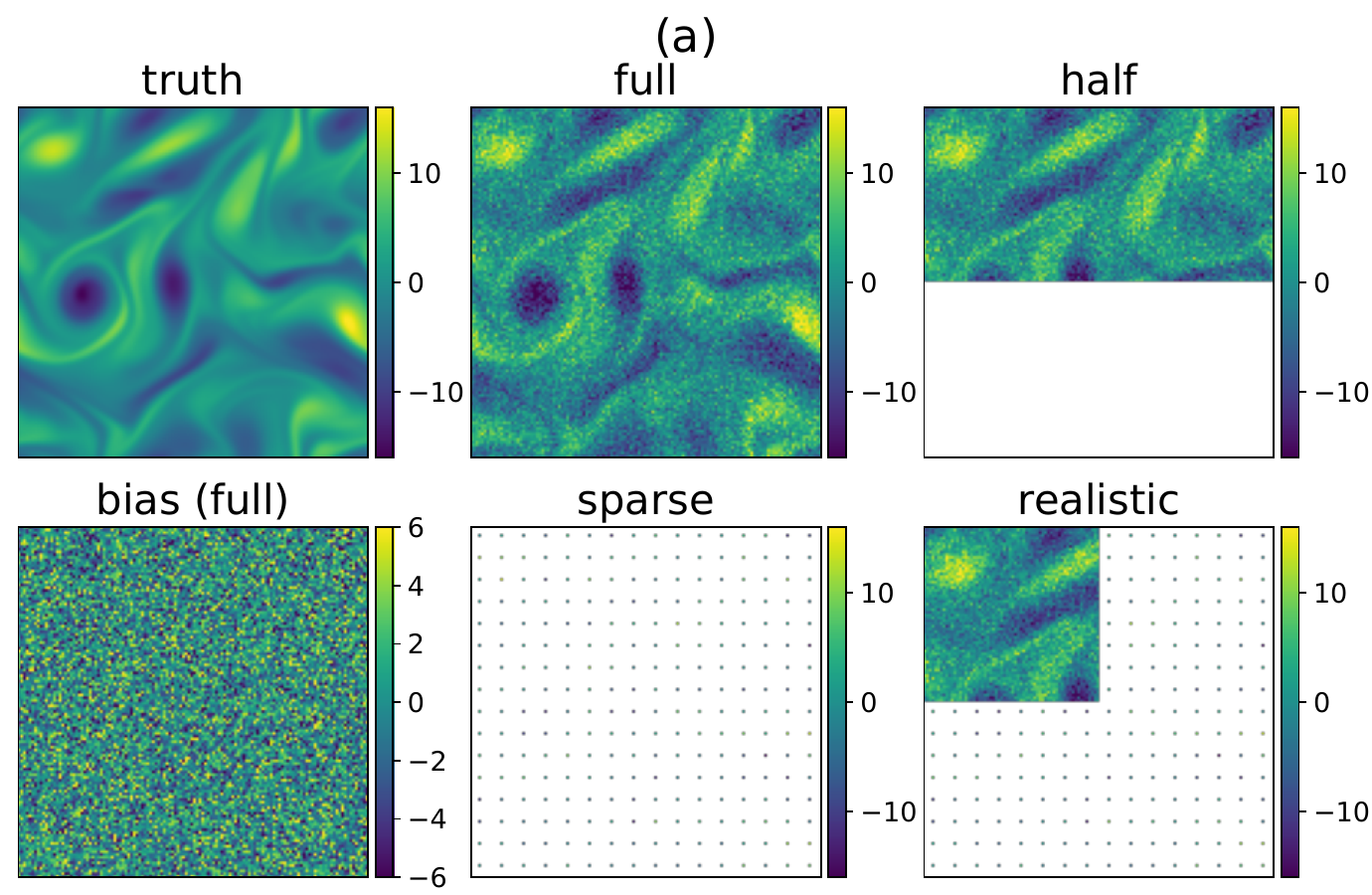}
\end{minipage}
\begin{minipage}[t]{0.27\linewidth}
  \vspace{0pt}\centering
  \includegraphics[width=\linewidth]{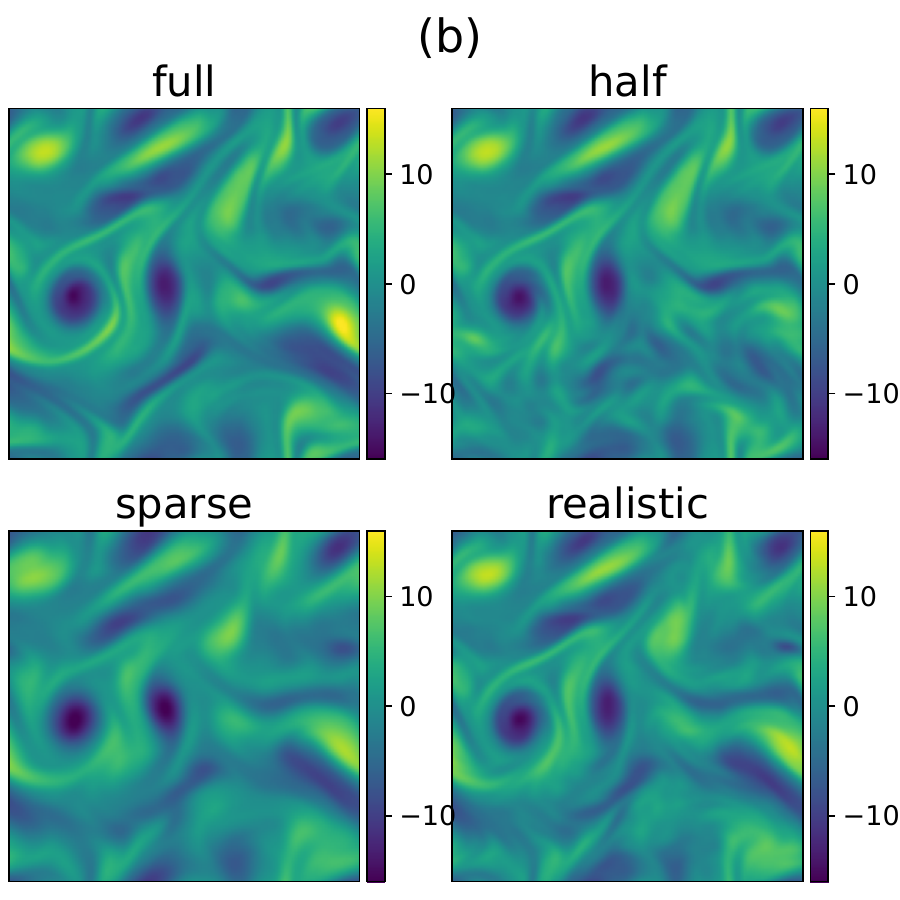}
\end{minipage}
\begin{minipage}[t]{0.27\linewidth}
  \vspace{0pt}\centering
  \includegraphics[width=\linewidth]{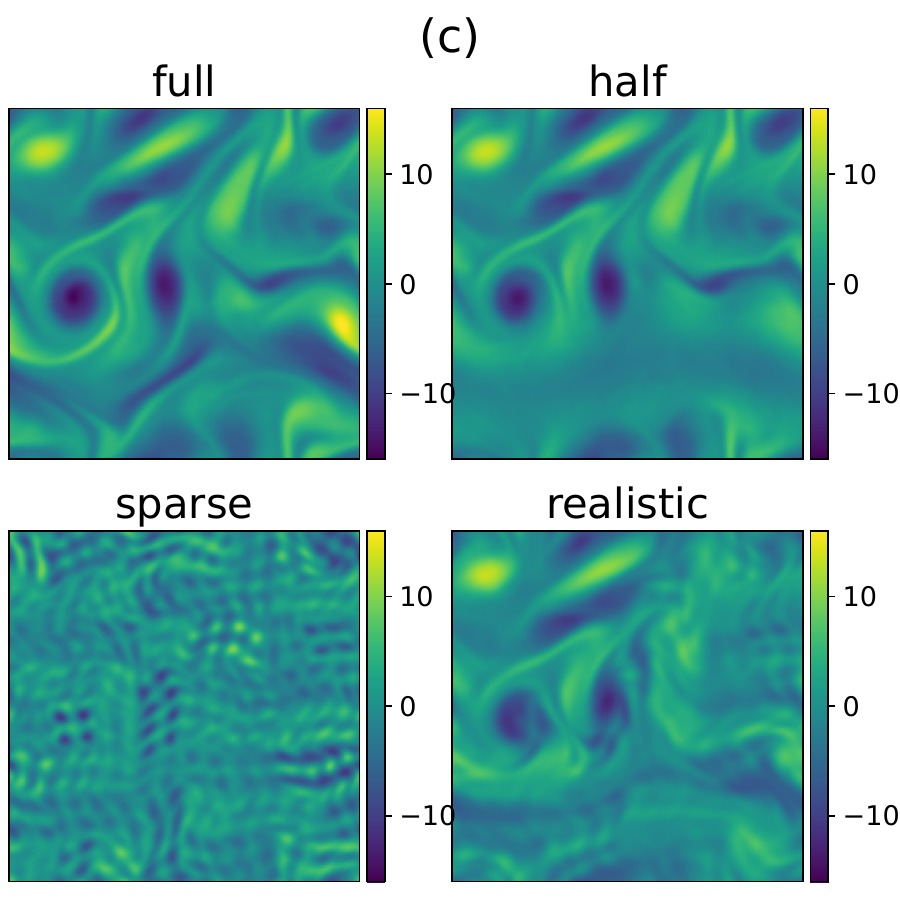}
\end{minipage}
\caption{Panel (a): observation operators in Kolmogorov experiment. Panel (b): inference results of the flow variational family. Panel (c): inference results of the Gauss variational family.}
\label{fig:Kolmogorov}
\end{figure*}

\begin{table*}[htbp]
\centering
\setlength{\tabcolsep}{4pt}
\renewcommand{\arraystretch}{1.10}
\caption{State RMSE across observation patterns (lower is better). Full tables with parameter metrics, including $|\Delta \log Re|$, are in Appendix~\ref{sec: kolmogorov, additional results}.}
\label{tab:Kolmogorov}
\begin{tabular}{lcccc}
\toprule
Method & Full & Half & Sparse & Realistic \\
\midrule
flow & \textbf{0.28 $\pm$ 0.00} & 2.31 $\pm$ 0.09 & \textbf{1.92 $\pm$ 0.12} & \textbf{1.60 $\pm$ 0.02} \\
Noisyflow & 0.65 $\pm$ 0.01 & \textbf{2.16 $\pm$ 0.10} & 2.18 $\pm$ 0.11 & 1.84 $\pm$ 0.07 \\
Gauss & 0.32 $\pm$ 0.00 & \textbf{2.16 $\pm$ 0.10} & 4.95 $\pm$ 0.10 & 2.99 $\pm$ 0.13 \\
NoisyGauss & 0.70 $\pm$ 0.01 & 2.40 $\pm$ 0.10 & 4.84 $\pm$ 0.10 & 2.91 $\pm$ 0.13 \\
MF & 0.55 $\pm$ 0.17 & 2.40 $\pm$ 0.02 & N/A & 3.63 $\pm$ 0.00 \\
IEnKS & 4.04 $\pm$ 0.12 & 4.13 $\pm$ 0.18 & 4.10 $\pm$ 0.18 & 4.02 $\pm$ 0.19 \\
4D-Var & N/A & N/A & 4.39 $\pm$ 0.15 & N/A \\
\midrule
IEnKS (oracle) & 1.54 $\pm$ 0.01 & 5.07 $\pm$ 0.30 & 6.13 $\pm$ 0.16 & 5.23 $\pm$ 0.19 \\
4D-Var (oracle) & 0.31 $\pm$ 0.00 & 2.42 $\pm$ 0.10 & 3.81 $\pm$ 0.14 & 3.06 $\pm$ 0.16 \\
\bottomrule
\end{tabular}
\end{table*}

\begin{table}[t]
\centering
\small
\setlength{\tabcolsep}{3.5pt}
\renewcommand{\arraystretch}{1.10}
\caption{Representative parameter-estimation errors in the ``realistic'' setting (lower is better). 
Full parameter tables across observation patterns are given in Appendix~G.}
\label{tab:Kolmogorov_small}
\begin{tabular}{@{}lccc@{}}
\toprule
Method & $|\Delta F|$ & $|\Delta \log Re|$ & bias \\
\midrule
flow       & \textbf{(2.3 $\pm$ 1.6)$\times 10^{-3}$} & \textbf{0.31 $\pm$ 0.01} & \textbf{0.57 $\pm$ 0.02} \\
Noisyflow  & $(1.1 \pm 0.7)\times 10^{-2}$         & $0.91 \pm 0.03$          & $0.98 \pm 0.02$ \\
Gauss      & $0.07 \pm 0.01$                       & $1.29 \pm 0.01$          & $0.66 \pm 0.01$ \\
NoisyGauss & $0.12 \pm 0.02$                       & $1.56 \pm 0.02$          & $1.06 \pm 0.01$ \\
MF         & $0.75 \pm 0.00$                       & $2.99 \pm 0.00$          & $0.66 \pm 0.02$ \\
IEnKS      & $0.89 \pm 0.00$                       & $2.31 \pm 0.00$          & $3.02 \pm 0.01$ \\
4D-Var     & N/A                                   & N/A                      & N/A \\
\bottomrule
\end{tabular}
\end{table}

Table~\ref{tab:Kolmogorov} reports state RMSEs (additional parameter/bias metrics are in Appendix~\ref{sec: kolmogorov, additional results}). Under “full” observations, PR-Smoother achieves state RMSE well below the observation noise level and matches oracle 4D-Var (true parameters and bias field; $10$ restarts). IEnKS ($16$ members; localization radius tuned over ${32,16,8,4}$ for each observation setting; Appendix~\ref{sec:Kolmogorov, appendix}; Fig.~\ref{fig:IEnKS_members} shows limited gains beyond $M{=}16$) is less competitive in this regime. When jointly estimating the full bias field and physical parameters, 4D-Var was less stable in our setup, and both baselines yielded larger errors.

Under “half” observations, both flow- and Gauss-based PR-Smoother recover the state accurately in the observed region, but neither recovers the unobserved half reliably (Gauss oversmooths, while flow remains sharper but inaccurate). In the “sparse” setting, flow-based models still reconstruct the state reasonably well, whereas Gauss-based models deteriorate and can produce extreme $\log Re$ values. Even for flow models, $\log Re$ remains biased, consistent with an identifiability issue: sparse point measurements do not resolve the small-scale structures controlled by the Reynolds number. In the “realistic” configuration, the flow-based family performs well on both state and parameters: the densely observed subregion identifies $F$ and $\log Re$, while the sparse network provides global coverage so the calibrated dynamics can propagate information across the domain. 

In the realistic observation setting, deterministic variants (Flow/Gauss) match or outperform noisy-transition variants under a matched wall-clock budget (Table~\ref{tab:Kolmogorov},~\ref{tab:Kolmogorov_small}; Appendix~\ref{sec: kolmogorov, additional results} for the full parameter learning results), unlike Sec.~\ref{sec:Lorenz96}. Because the ground-truth dynamics here are deterministic—and are contained in the noisy family as the zero-process-noise limit—we interpret this gap as slower convergence of the larger noisy search space when state, bias, and global parameters are learned jointly. Table~\ref{tab:Kolmogorov_small} reports recovery of $F$ and the bias field.

\textbf{Takeaways.}
(i) PR-Smoother can jointly learn high-dimensional sensor biases and global physical parameters at Kolmogorov scale.
(ii) With mixed coverage, dense regions support identifying $(F,\log Re)$ and calibrated dynamics transfer this information to sparsely observed regions.
(iii) Completely unobserved regions and parameters lacking small-scale observational support (e.g., $Re$ in the sparse case) remain difficult due to limited identifiability.

\section{Practical Scope and Limitations}
\label{sec:limitations}
PR-Smoother targets repeated-window DA with prescribed differentiable dynamics and observation models. Its offline training cost is amortized across windows from the same system. For legacy solvers, existing adjoint routines offer a route to gradient-based training while retaining the original forward model. Undertraining or distribution shift can reduce amortized inference accuracy. Appendix~\ref{app: number of training data} shows that, under a fixed training-compute budget, PR-Smoother remains competitive with oracle 4D-Var even with $N=288$ Kolmogorov-flow windows, though most gains occur by $N\simeq 10^5$--$10^6$. For applications requiring a point estimate, PR-Smoother could initialize per-window refinement with 4D-Var.

The reported state posteriors are conditional on point estimates of the shared physical and sensor parameters; a prior and a global variational factor $q_\psi(\theta)$ provide a natural extension for parameter uncertainty. Our Kolmogorov experiments highlight the importance of observation coverage for parameter identifiability. Changes in physical parameters or observation layouts may require adaptation of the trained smoother, with held-out predictive checks supporting deployment assessment. Evaluation on operational numerical weather prediction systems is an important next step.

\section{Conclusion}

We introduce PR-Smoother, a simulator-preserving non-Gaussian smoother for prescribed-simulator DA. By learning future-conditioned corrections around the prescribed rollout, PR-Smoother represents multimodal smoothing posteriors directly in physical space while keeping the prescribed dynamics and observation operator explicit inside a single observation-only ELBO for joint state, parameter, and bias learning. Across Lorenz–96 and 16,384-dimensional Kolmogorov flow, it captures multimodal posteriors and scales to high-dimensional nonlinear assimilation. Its broader impact is indirect: scientific deployment still requires domain-specific validation and calibration.

\begin{ack}
This work was supported by JST, CREST Grant Number JPMJCR24Q3 including AIP challenge program, Japan, and RIKEN TRIP initiative (RIKEN Prediction Science).
This research is also partially supported by Exploratory Collaborative Research Project, Information Technology Center, The University of Tokyo.
\end{ack}

\bibliography{example_paper}
\bibliographystyle{plainnat}


\appendix

\section{On the Variational Family Including the True Posterior in Two Limiting Cases}
\label{sec: proof for variational family including true posterior}

This appendix shows that our variational family contains the exact smoothing posterior in two important limits:
(i) deterministic dynamics and (ii) linear-Gaussian state-space models (LG-SSMs).
The arguments rely only on the Markov structure of the model and on the fact that our conditional NF
and ``shock head'' are rich enough to represent the relevant Gaussian conditionals. We use the notation and factorization
in Eqs.~\eqref{eq:dynamics}--\eqref{eq:generative model} and \eqref{eq:det-transition-Q}--\eqref{eq:q-markov} in the main text.

Throughout we assume:
\begin{itemize}
  \item the conditional flow $g_\phi$ used in $q_\phi(X_1 \mid Y_{1:T})$ is expressive enough to represent any smooth
        posterior density over $X_1$ (a standard universality assumption for NFs), and
  \item the shock head used in $q_\phi(X_t \mid X_{t-1}, Y_{t:T})$ can represent any mean and covariance in the
        Gaussian family of Eq.~\eqref{eq:q-corr}.
\end{itemize}
Under these assumptions we show that the true smoother $p_\theta(X_{1:T} \mid Y_{1:T})$ lies in our variational family
in the two limits.

\subsection{Deterministic Dynamics}
\label{sec:deterministic-limit}

Consider a SSM with deterministic dynamics
\begin{equation}
  X_{t+1} = f_\theta(X_t), \qquad t = 1,\dots,T-1,
\end{equation}
and observation model $Y_t \sim p_\theta(Y_t \mid X_t)$ as in Eq.~\eqref{eq:observation}.
The transition kernel is a Dirac measure
\begin{equation}
  p_\theta(X_t \mid X_{t-1}) = \delta\!\bigl(X_t - f_\theta(X_{t-1})\bigr),
\end{equation}
or, equivalently, the vanishing-noise limit $Q \to 0$ of Eq.~\eqref{eq:det-transition-Q}.

Let $F_t : \mathbb{R}^d \to \mathbb{R}^d$ denote the $t$-step deterministic rollout,
\begin{equation}
  F_1(x) = x, \qquad F_{t+1}(x) = f_\theta(F_t(x)), \qquad X_t = F_t(X_1).
\end{equation}
Thus the entire trajectory $X_{1:T}$ is a deterministic function of the initial state $X_1$.
The generative smoothing posterior therefore satisfies
\begin{equation}
  p_\theta(X_{1:T} \mid Y_{1:T})
  = p_\theta(X_1 \mid Y_{1:T}) \prod_{t=2}^T
      \delta\!\bigl(X_t - f_\theta(X_{t-1})\bigr),
\end{equation}
i.e., it is exactly the pushforward of the one-step posterior $p_\theta(X_1 \mid Y_{1:T})$ through the deterministic dynamics.

Our deterministic variational family (Eqs.~\ref{eq:q-factorization-Q}--\ref{eq:elbo-det}) is
\begin{equation}
  q_\phi(X_{1:T} \mid Y_{1:T})
  = q_\phi(X_1 \mid Y_{1:T}) \prod_{t=2}^T
      p_{\theta,\text{dyn},Q}(X_t \mid X_{t-1}),
\end{equation}
with
\begin{equation}
  p_{\theta,\text{dyn},Q}(X_t \mid X_{t-1})
  = \mathcal{N}\!\bigl(X_t; f_\theta(X_{t-1}), Q\bigr),
\end{equation}
and in the $Q \to 0$ limit this reduces to the sampling scheme
\begin{equation}
  X_1 \sim q_\phi(X_1 \mid Y_{1:T}), \qquad
  X_{t+1} = f_\theta(X_t), \quad t = 1,\dots,T-1.
  \label{eq:deterministic-variational-rollout-appendix}
\end{equation}

Because the $\delta$-factors coincide with those of the generative model, matching the full trajectory posterior reduces to
matching the initial-state posterior. By expressivity of the conditional flow, there exists a parameter setting $\phi^\star$ such that
\begin{equation}
  q_{\phi^\star}(X_1 \mid Y_{1:T}=y_{1:T}) = p_\theta(X_1 \mid Y_{1:T}=y_{1:T})
\end{equation}
for any $y_{1:T} \in {\rm supp}\bigl(p_{\theta}(Y_{1:T})\bigr)$. 
Plugging this into Eq.~\eqref{eq:deterministic-variational-rollout-appendix} shows
\begin{equation}
  q_{\phi^\star}(X_{1:T} \mid Y_{1:T})
  = p_\theta(X_{1:T} \mid Y_{1:T}),
\end{equation}
so the true deterministic smoother lies in our variational family.

In other words, in the deterministic (strong-constraint) limit the only nontrivial posterior degree of freedom is
$p_\theta(X_1 \mid Y_{1:T})$, and our conditional flow can represent it exactly; the rest of the trajectory is obtained by the
same deterministic rollout in both the generative and variational models.

\subsection{Linear-Gaussian SSMs}
\label{app:lgssm}

We now consider the classical LG-SSM
\begin{align}
  X_1 &\sim \mathcal{N}(m_0, P_0), \\
  X_t \mid X_{t-1} &\sim \mathcal{N}(A X_{t-1}, Q), \qquad t \ge 2, \\
  Y_t \mid X_t &\sim \mathcal{N}(C X_t, R), \qquad t \ge 1,
\end{align}
which is a special case of Eqs.~\eqref{eq:dynamics}--\eqref{eq:generative model} with $f_\theta(x) = A x$ and $h_\theta(x) = C x$.
It is well known that in this setting all filtering and smoothing distributions are Gaussian; the forward (Kalman) filter and
backward Rauch-Tung-Striebel (RTS) smoother can be derived as Gaussian message passing on the linear factor graph.

Our noisy variational family (Sec.~\ref{sec:noisy}) factorizes as
\begin{equation}
  q_\phi(X_{1:T} \mid Y_{1:T})
  = q_\phi(X_1 \mid Y_{1:T})
    \prod_{t=2}^T q_\phi(X_t \mid X_{t-1}, Y_{t:T}),
  \label{eq:variational-backward-factorization}
\end{equation}
with
\begin{equation}
  q_\phi(X_t \mid X_{t-1}, Y_{t:T})
  = \mathcal{N}\bigl(X_t; A X_{t-1} + m_t,\, S_t\bigr),
  \label{eq:variational-transition}
\end{equation}
where $m_t = m_t(X_{t-1}, Y_{t:T})$ and $S_t = S_t(X_{t-1}, Y_{t:T})$ are produced by the future-aware shock-head network
(Sec.~\ref{sec:anticausal encoder}). We show that, in an LG-SSM, the exact conditionals $p_\theta(X_t \mid X_{t-1}, Y_{t:T})$ are Gaussian and of the form
\eqref{eq:variational-transition}. Combined with the Gaussianity of $p_\theta(X_1 \mid Y_{1:T})$, this implies that the exact
smoother belongs to our variational family.

\subsubsection{Gaussian Backward Messages}

Define the \emph{backward message}
\begin{equation}
  \beta_t(x_t) \;\propto\; p_\theta(Y_{t+1:T}=y_{t+1:T} \mid X_t=x_t), \qquad t = 1,\dots,T-1,
\end{equation}
with terminal condition $\beta_T(x_T) \equiv 1$.
Using the Markov property one obtains the standard backward recursion
\begin{equation}
  \beta_t(x_t)
  \;\propto\;
  \int p_\theta(x_{t+1} \mid x_t)\,
       p_\theta(y_{t+1} \mid x_{t+1})\,
       \beta_{t+1}(x_{t+1})\, \mathrm{d}x_{t+1}.
  \label{eq:backward-recursion}
\end{equation}

We show by induction that every $\beta_t(x_t)$ is (unnormalized) Gaussian in $x_t$.

\paragraph{Base case.}
$\beta_T(x_T) \equiv 1$ is a constant function, which we can regard as a Gaussian with zero information (precision) matrix.

\paragraph{Inductive step.}
Assume
\begin{equation}
  \beta_{t+1}(x_{t+1})
  \;\propto\;
  \exp\Bigl(
    -\tfrac{1}{2}\, x_{t+1}^\top \Lambda_{t+1} x_{t+1}
    + \eta_{t+1}^\top x_{t+1}
  \Bigr)
\end{equation}
for some precision matrix $\Lambda_{t+1} \succeq 0$ and vector $\eta_{t+1}$.
In the LG-SSM, the two other factors in Eq.~\eqref{eq:backward-recursion} are
\begin{align}
  p_\theta(x_{t+1} \mid x_t)
  &\propto
    \exp\Bigl(
      -\tfrac{1}{2}\, (x_{t+1} - A x_t)^\top Q^{-1} (x_{t+1} - A x_t)
    \Bigr), \\
  p_\theta(y_{t+1} \mid x_{t+1})
  &\propto
    \exp\Bigl(
      -\tfrac{1}{2}\, (y_{t+1} - C x_{t+1})^\top R^{-1} (y_{t+1} - C x_{t+1})
    \Bigr).
\end{align}

The integrand in Eq.~\eqref{eq:backward-recursion} is thus the product of three Gaussians in the common variable $x_{t+1}$,
hence itself an (unnormalized) Gaussian in $x_{t+1}$. Integrating a joint Gaussian over $x_{t+1}$ produces an exponential
quadratic function in the remaining variable $x_t$; that is, there exist $\Lambda_t \succeq 0$ and $\eta_t$ such that
\begin{equation}
  \beta_t(x_t)
  \;\propto\;
  \exp\Bigl(
    -\tfrac{1}{2}\, x_t^\top \Lambda_t x_t
    + \eta_t^\top x_t
  \Bigr).
\end{equation}
Thus $\beta_t$ is Gaussian for all $t$ by induction.

\subsubsection{The Conditional $p(X_t \mid X_{t-1}, Y_{t:T})$ is Gaussian}

Using the backward message, the exact conditional appearing in Eq.~\eqref{eq:forward-factorize} can be written as
\begin{equation}
  p_\theta(x_t \mid x_{t-1}, y_{t:T})
  \;\propto\;
  p_\theta(x_t \mid x_{t-1})\,
  p_\theta(y_t \mid x_t)\,
  \beta_t(x_t).
  \label{eq:smoothing-conditional}
\end{equation}
We have just shown that all three factors are Gaussian in $x_t$. Their product is therefore an (unnormalized) Gaussian in $x_t$,
so
\begin{equation}
  p_\theta(x_t \mid x_{t-1}, y_{t:T})
  = \mathcal{N}\bigl(x_t; \mu_t(x_{t-1}), S_t\bigr),
  \label{eq:true-conditional-gaussian}
\end{equation}
with covariance
\begin{equation}
  S_t
  = \Bigl(Q^{-1} + C^\top R^{-1} C + \Lambda_t\Bigr)^{-1},
\end{equation}
and mean
\begin{equation}
  \mu_t(x_{t-1})
  = S_t \Bigl(Q^{-1} A x_{t-1} + C^\top R^{-1} y_t + \eta_t\Bigr).
  \label{eq:true-conditional-mean}
\end{equation}

The right-hand side of Eq.~\eqref{eq:true-conditional-mean} is an affine function of $x_{t-1}$.
Hence there exist a matrix $M_t$ and vector $b_t$ such that
\begin{equation}
  \mu_t(x_{t-1}) = M_t x_{t-1} + b_t.
\end{equation}
Writing $M_t x_{t-1} + b_t$ as
\begin{equation}
  M_t x_{t-1} + b_t
  = A x_{t-1}
    + \underbrace{\bigl((M_t - A) x_{t-1} + b_t\bigr)}_{=: m_t(x_{t-1})},
\end{equation}
we can put Eq.~\eqref{eq:true-conditional-gaussian} in the same form as our variational transition
Eq.~\eqref{eq:variational-transition}:
\begin{equation}
  p_\theta(x_t \mid x_{t-1}, y_{t:T})
  = \mathcal{N}\bigl(x_t; A x_{t-1} + m_t(x_{t-1}),\, S_t\bigr).
  \label{eq:true-conditional-variational-form}
\end{equation}

Because the shock head in Eq.~\eqref{eq:q-markov} is allowed to depend on $(X_{t-1}, Y_{t:T})$, 
there exists $\phi^\star$ such that its outputs $(m_t, S_t)$ coincide with those in
Eq.~\eqref{eq:true-conditional-variational-form} for all $t$. In other words, the exact conditional
$p_\theta(X_t \mid X_{t-1}, Y_{t:T})$ is a member of our Gaussian transition family.

\subsubsection{Putting the Pieces Together}

Finally, in an LG-SSM the smoothed marginal over the initial state is also Gaussian:
\begin{equation}
  p_\theta(X_1 \mid Y_{1:T}) = \mathcal{N}(m_{1 \mid T}, P_{1 \mid T}),
\end{equation}
with $(m_{1 \mid T}, P_{1 \mid T})$ given by the standard RTS backward recursion on top of the Kalman filter.

Since $q_\phi(X_1 \mid Y_{1:T})$ is parameterized by a conditional NF with Gaussian base distribution,
it can represent this Gaussian exactly for some $\phi^\star$. Choosing
\begin{align}
  q_{\phi^\star}(X_1 \mid Y_{1:T})
    &= p_\theta(X_1 \mid Y_{1:T}), \\
  q_{\phi^\star}(X_t \mid X_{t-1}, Y_{t:T})
    &= p_\theta(X_t \mid X_{t-1}, Y_{t:T})
       \quad \text{for } t \ge 2,
\end{align}
and plugging into the factorization \eqref{eq:variational-backward-factorization}, we obtain
\begin{equation}
  q_{\phi^\star}(X_{1:T} \mid Y_{1:T})
  = p_\theta(X_{1:T} \mid Y_{1:T}),
\end{equation}
because Eq.~\eqref{eq:forward-factorize} implies that the exact smoother admits the same
future-conditioned parameterization as our variational family. 

Thus, for linear-Gaussian SSMs, PR-Smoother's Markov-structured Gaussian variational family is \emph{posterior‑inclusive}:
the exact smoothing posterior lies within the family and can be recovered by choosing the shock-head parameters to reproduce
the Gaussian backward messages of the RTS smoother.

\section{Algorithm}
For an observation window $Y_{1:T}$, PR-Smoother proceeds as follows.
\begin{algorithm}[htb]
  \caption{PR-Smoother (training-time inference)}
  \label{alg:pr_smoother}
  \begin{algorithmic}
    \STATE {\bfseries Input:} dataset $\{Y_{1:T}^{(n)}\}_{n=1}^N$; dynamics $f_{\theta_{\mathrm{dyn}}}$; observation operator $h_{\theta_{\mathrm{obs}}}$; prior $p(X_1)$
    \STATE {\bfseries Output:} learned parameters $(\hat{\theta}, \hat{\phi})$ for amortized smoother $q_{\hat{\phi}}(X_{1:T} \!\mid\! Y_{1:T})$
    \STATE Initialize $\theta, \phi$
    \REPEAT
      \STATE Sample a minibatch $\{Y_{1:T}^{(b)}\}_{b=1}^B$
      \FOR{each window $Y_{1:T}^{(b)}$}
        \STATE Encode $c^{(b)} \leftarrow E_\phi(Y_{1:T}^{(b)})$
        \STATE Sample $X_1^{(b)} \sim q_\phi(\cdot \!\mid\! Y_{1:T}^{(b)})$ via conditional flow $g_\phi$
        \FOR{$t = 1$ {\bfseries to} $T-1$}
          \STATE Predict $\tilde{X}_{t+1}^{(b)} \leftarrow f_{\theta_{\mathrm{dyn}}}(X_t^{(b)})$
          \IF{deterministic (strong-constraint) variant}
            \STATE $X_{t+1}^{(b)} \leftarrow \tilde{X}_{t+1}^{(b)}$
          \ELSE \STATE Sample $X_{t+1}^{(b)} \sim q_\phi(\cdot \!\mid\! X_t^{(b)}, Y_{t+1:T}^{(b)})$ \hfill (future-aware noisy transition)
          \ENDIF
        \ENDFOR
      \ENDFOR
      \STATE Estimate ELBO $\widehat{\mathcal{L}}(\theta,\phi)$ on the minibatch and take a gradient-ascent step on $\widehat{\mathcal{L}}$
    \UNTIL{convergence}
  \end{algorithmic}
\end{algorithm}

\section{Architecture for the Anti-Causal Encoder}
\label{sec:anticausal encoder}
\paragraph{Future‑aware Transformer encoder.} We build per‑step ``anti‑causal'' features that summarize only future observations. Let $\tilde{Y}$ be the time‑reversed observation sequence. A standard causal Transformer encoder on $\tilde{Y}$ (masking each token to attend only to its past) yields hidden states $\tilde{H}$. Flipping back to original time, $H_t =\tilde{H}_{T+1-t}$ depends on $Y_{t:T}$ only. We take per‑step contexts $c_t =W_s H_{t}$  so that $c_t$ summarizes $Y_{t:T}$. The shock head then defines $q_{\phi} (X_{t} \!\mid\! X_{t-1} ,Y_{t:T})=N(f_{\theta} (X_{t-1})+m_t ,S_t )$, with $(m_t ,S_t)$ predicted from $[X_{t-1} ,c_t ,\tau_t ]$, where $\tau_t = t/T$. This preserves model‑based transparency---likelihoods are evaluated with $h_{\theta}$, transitions with $f_{\theta}$---while allowing future observations to explain stepwise corrections under a closed‑form per‑step KL to the physical noise $N(0,Q)$.

\section{Use of NF over DM or FM}
\label{sec:NF over DM/MF}
\paragraph{Diffusion models (DM).} Conditional score‑based diffusion models can be used to represent $q_{\phi}(X_{1} \!\mid\! Y_{1:T})$ and are often more permissive geometrically than NFs. However, our training objective requires sample‑wise 
log $q_{\phi}$. Standard DMs optimize a likelihood bound on data but do not provide tractable posterior density values for arbitrary $X_1$ at test time. One can recover densities through the probability‑flow ODE, but this entails solving an ODE and integrating the divergence (e.g., with Hutchinson estimators) per sample. This added NFE (number of function evaluations) materially increases wall‑clock cost.

\paragraph{Flow matching (FM) / continuous normalizing flows (CNF).} FM trains an ODE transport whose trajectories allow density tracking via the instantaneous change‑of‑variables formula. This makes FM more compatible with our ELBO than DMs. The practical trade‑off is that evaluating 
log $q_{\phi}$ still requires integrating a divergence along the path and sampling requires an ODE solve; both are substantially heavier than a single NF evaluation unless very small NFE integrators are used.

\section{Full ELBO Tables}
\label{sec:elbo_appendix}

\paragraph{Evaluation protocol.}
We report held-out ELBOs for all variational methods (flow / Gauss / Noisyflow / NoisyGauss / MF).
Each entry is computed on a full test sequence (smoothing window) and reported as mean $\pm$ std over five random seeds, evaluated on the test set.
The ELBO is estimated via Monte Carlo sampling from the variational posterior with one sample.

\paragraph{Reporting convention.}
We report the ELBO per sequence (i.e., the ELBO value for a full window, summed over time steps and observed components).
Therefore, ELBO magnitudes should be compared \emph{only within the same experimental setting} (same window length and observation pattern);
comparisons across columns with different $T$ or different observation densities are not meaningful.


\begin{table}[ht]
\centering
\vspace{1.5mm}
\caption{Lorenz96, 40D: ELBO for the test set (higher is better). 
}
\label{tab:Lorenz96_elbo_small}
{\small
\setlength{\tabcolsep}{4pt}
\renewcommand{\arraystretch}{1.10}
\begin{tabular}{lccc}
  \toprule
  Method & Linear & Nonlinear & Nonlinear, noisy \\
  \midrule
Flow       & -3287 $\pm$ 5 & -3603 $\pm$ 8 & -3908 $\pm$ 12 \\
Noisyflow  & \textbf{-3024 $\pm$ 0} & \textbf{-3439 $\pm$ 72} & \textbf{-3326 $\pm$ 9} \\
Gauss      & -3378 $\pm$ 32 & -3711 $\pm$ 6 & -4051 $\pm$ 14 \\
NoisyGauss & -3055 $\pm$ 1 & -3687 $\pm$ 16 & -3991 $\pm$ 9 \\
MF         & -3519 $\pm$ 1 & -4132 $\pm$ 4 & -4416 $\pm$ 3 \\
  \bottomrule
\end{tabular}
}
\end{table}

\begin{table}[ht]
\centering
\small
\setlength{\tabcolsep}{4pt}
\renewcommand{\arraystretch}{1.10}
\caption{Kolmogorov flow: ELBO for the test set (higher is better) for observation patterns. 
}
\label{tab:Kolmogorov_elbo_small}
\begin{tabular}{lcccc}
\toprule
Method/ELBO & Full & Half & Sparse & Realistic \\
\midrule
flow      & \textbf{-259171} $\pm$ \textbf{1214} & \textbf{-129962} $\pm$ \textbf{639} & \textbf{-8062} $\pm$ \textbf{71} & \textbf{-71863} $\pm$ \textbf{363} \\
Noisyflow & -295524 $\pm$ 4439 & -220524 $\pm$ 2060 & -8865 $\pm$ 61 & -84129 $\pm$ 664 \\
Gauss     & -263759 $\pm$ 628 & -134580 $\pm$ 630 & -15280 $\pm$ 65 & -87547 $\pm$ 320 \\
NoisyGauss& -303450 $\pm$ 2263 & -230927 $\pm$ 1609 & -15992 $\pm$ 66 & -95050 $\pm$ 594 \\
MF        & -266806 $\pm$ 398 & -140893 $\pm$ 343 & N/A & -91767 $\pm$ 86 \\
\bottomrule
\end{tabular}
\end{table}

\section{Trained $\sigma_Q$ Values in Noisyflow and Noisygauss Models}
\label{sec:sigmaq_appendix}
\paragraph{Learned process-noise scale in the noisy-transition variants.}
In the Lorenz-96 (40D) experiments of Sec.~\ref{sec:Lorenz96}, the noisy-transition variants (Noisyflow/NoisyGauss) treat the process-noise covariance in the generative dynamics as isotropic,
\(p(x_{t+1}\mid x_t)=\mathcal{N}(f(x_t), Q)\) with \(Q=\sigma_Q^2 I\),
and parameterize the variational transition as
\(q_\phi(x_{t+1}\mid x_t,y_{t+1:T})=\mathcal{N}(f(x_t)+m_t, S_t)\) with \(S_t=\sigma_S^2 I\) (shared across time).
Both \(\sigma_Q\) and \(\sigma_S\) are learned jointly with the inference-network parameters by maximizing the ELBO for each experimental regime.
Table~\ref{tab:sigma_Q} reports the resulting \(\ln\sigma_Q\) values (mean \(\pm\) standard deviation across successful random seeds).

\paragraph{Interpretation.}
For the ``linear'' and ``nonlinear'' columns, the data are generated with deterministic dynamics (no injected process noise), so the corresponding ground-truth value is \(\sigma_Q^{\text{True}}=0\) (reported as \(\ln\sigma_Q^{\text{True}}=-\infty\)).
In these deterministic regimes, \(\sigma_Q\) is not strictly identifiable: within the noisy-transition ELBO, \(\sigma_Q\) primarily sets the strength of the per-step KL regularization
\(\mathrm{KL}\!\left[q_\phi(x_{t+1}\mid x_t,y_{t+1:T})\,\|\,p(x_{t+1}\mid x_t)\right]\),
so the learned \(\sigma_Q\) should be read as an \emph{effective} model-error scale selected by the ELBO trade-off rather than as a physical noise estimate.
In contrast, in the ``noisy dynamics'' setting the data are generated with true per-step process noise \(\sigma_Q^{\text{True}}=0.1\) (\(\ln\sigma_Q^{\text{True}}=\ln 0.1=-2.30\)).
Here, Noisyflow recovers the planted noise level (\(\ln\sigma_Q=-2.29\pm 0.04\)), while NoisyGauss substantially underestimates it (\(\ln\sigma_Q=-4.61\pm 0.16\)).
This diagnostic complements the RMSE results in Fig.~\ref{fig:l96_capsule}: the more expressive Noisyflow posterior not only improves state accuracy but also yields more accurate calibration of the process-noise scale under variational-EM training.

\begin{table}[ht]
\centering
\small
\setlength{\tabcolsep}{4pt}
\renewcommand{\arraystretch}{1.10}
\caption{Trained $\sigma_Q$ values in the "noisyflow" and "noisygauss" models in Lorenz96 experiments.}
\label{tab:sigma_Q}
\begin{tabular}{lccc}
\toprule
Method & linear & nonlinear & noisy dynamics \\
\midrule
Noisyflow & $-2.58 \pm 0.00$ & $-3.59 \pm 0.53$ & \textbf{-2.29} $\pm$ \textbf{0.04} \\
NoisyGauss& $-2.12 \pm 0.01$ & $-5.06 \pm 0.08$ & $-4.61 \pm 0.16$ \\
True values & $-\infty$ & $-\infty$ & $-2.30$ \\
\bottomrule
\end{tabular}
\end{table}

\section{Additional Results for the Kolmogorov Experiments}
\label{sec: kolmogorov, additional results}

\begin{table*}[ht]
\centering
\small
\setlength{\tabcolsep}{3pt}
\renewcommand{\arraystretch}{1.15}
\caption{Parameter estimation errors across observation patterns (lower is better), for the \textbf{Full} and \textbf{Half} settings. Three metrics per pattern: $|\Delta F|$, $|\Delta \log Re|$, and observation bias. If no models out of five initial seeds succeed in training or optimization, the results are treated as N/A. The oracle models are removed because they do not estimate $F$, $\log Re$, nor the observation bias: they are prescribed.}
\label{tab:Kolmogorov_additional_full_half}
\begin{tabular}{l *{2}{ccc}}
\toprule
& \multicolumn{3}{c}{Full} & \multicolumn{3}{c}{Half} \\
\cmidrule(lr){2-4}\cmidrule(lr){5-7}
Method &
\makebox[0pt][c]{$|\Delta F|$} &
\makebox[0pt][c]{$|\Delta \log Re|$} &
\makebox[0pt][c]{bias} &
\makebox[0pt][c]{$|\Delta F|$} &
\makebox[0pt][c]{$|\Delta \log Re|$} &
\makebox[0pt][c]{bias} \\
\midrule
flow       & \textbf{(3.3 $\pm$ 3.6)$\times 10^{-4}$} & \textbf{0.14 $\pm$ 0.01} & \textbf{0.53 $\pm$ 0.01} & (1.1 $\pm$ 0.8)$\times 10^{-3}$ & \textbf{0.14 $\pm$ 0.01} & 0.53 $\pm$ 0.01 \\
Noisyflow  & (6.3 $\pm$ 1.3)$\times 10^{-3}$ & 0.72 $\pm$ 0.03 & 0.94 $\pm$ 0.03 & (1.0 $\pm$ 0.2)$\times 10^{-2}$ & 0.27 $\pm$ 0.00 & 0.96 $\pm$ 0.03 \\
Gauss      & (1.1 $\pm$ 0.3)$\times 10^{-3}$ & 0.23 $\pm$ 0.00 & 0.57 $\pm$ 0.01 & \textbf{(8.8 $\pm$ 6.4)$\times 10^{-4}$} & 0.27 $\pm$ 0.00 & 0.60 $\pm$ 0.01 \\
NoisyGauss & (7.1 $\pm$ 2.2)$\times 10^{-3}$ & 0.86 $\pm$ 0.02 & 0.86 $\pm$ 0.02 & (1.5 $\pm$ 0.2)$\times 10^{-3}$ & 0.99 $\pm$ 0.03 & 1.01 $\pm$ 0.02 \\
MF         & (3.4 $\pm$ 0.1)$\times 10^{-2}$ & 1.82 $\pm$ 0.01 & 0.40 $\pm$ 0.01 & 0.08 $\pm$ 0.00 & 2.44 $\pm$ 0.00 & \textbf{0.44 $\pm$ 0.01} \\
IEnKS      & 0.89 $\pm$ 0.00 & 2.31 $\pm$ 0.00 & 2.84 $\pm$ 0.01 & 0.89 $\pm$ 0.00 & 2.30 $\pm$ 0.02 & 2.96 $\pm$ 0.01 \\
4D-Var     & N/A & N/A & N/A & N/A & N/A & N/A \\
\bottomrule
\end{tabular}
\end{table*}

\begin{table*}[ht]
\centering
\small
\setlength{\tabcolsep}{3pt}
\renewcommand{\arraystretch}{1.15}
\caption{Same as Table~\ref{tab:Kolmogorov_additional_full_half}, but for the \textbf{Sparse} and \textbf{Realistic} settings.}
\label{tab:Kolmogorov_additional_sparse_realistic}
\begin{tabular}{l *{2}{ccc}}
\toprule
& \multicolumn{3}{c}{Sparse} & \multicolumn{3}{c}{Realistic} \\
\cmidrule(lr){2-4}\cmidrule(lr){5-7}
Method &
\makebox[0pt][c]{$|\Delta F|$} &
\makebox[0pt][c]{$|\Delta \log Re|$} &
\makebox[0pt][c]{bias} &
\makebox[0pt][c]{$|\Delta F|$} &
\makebox[0pt][c]{$|\Delta \log Re|$} &
\makebox[0pt][c]{bias} \\
\midrule
flow       & \textbf{0.10 $\pm$ 0.02} & 2.47 $\pm$ 0.02 & \textbf{0.62 $\pm$ 0.14} & \textbf{(2.3 $\pm$ 1.6)$\times 10^{-3}$} & \textbf{0.31 $\pm$ 0.01} & \textbf{0.57 $\pm$ 0.02} \\
Noisyflow  & 0.13 $\pm$ 0.05 & 2.80 $\pm$ 0.02 & 1.00 $\pm$ 0.12 & (1.1 $\pm$ 0.7)$\times 10^{-2}$ & 0.91 $\pm$ 0.03 & 0.98 $\pm$ 0.02 \\
Gauss      & 0.83 $\pm$ 0.03 & 18.18 $\pm$ 0.08 & 1.46 $\pm$ 0.10 & 0.07 $\pm$ 0.01 & 1.29 $\pm$ 0.01 & 0.66 $\pm$ 0.01 \\
NoisyGauss & 0.66 $\pm$ 0.07 & 12.76 $\pm$ 0.04 & 1.43 $\pm$ 0.15 & 0.12 $\pm$ 0.02 & 1.56 $\pm$ 0.02 & 1.06 $\pm$ 0.01 \\
MF         & N/A & N/A & N/A & 0.75 $\pm$ 0.00 & 2.99 $\pm$ 0.00 & 0.66 $\pm$ 0.02 \\
IEnKS      & 0.89 $\pm$ 0.01 & \textbf{2.30 $\pm$ 0.01} & 3.03 $\pm$ 0.01 & 0.89 $\pm$ 0.00 & 2.31 $\pm$ 0.00 & 3.02 $\pm$ 0.01 \\
4D-Var     & 1.35 $\pm$ 0.19 & 26.95 $\pm$ 1.66 & 3.61 $\pm$ 0.13 & N/A & N/A & N/A \\
\bottomrule
\end{tabular}
\end{table*}

\section{Compute-budgeted data scaling}
\label{app: number of training data}
\begin{figure}
    \centering
    \includegraphics[width=\linewidth]{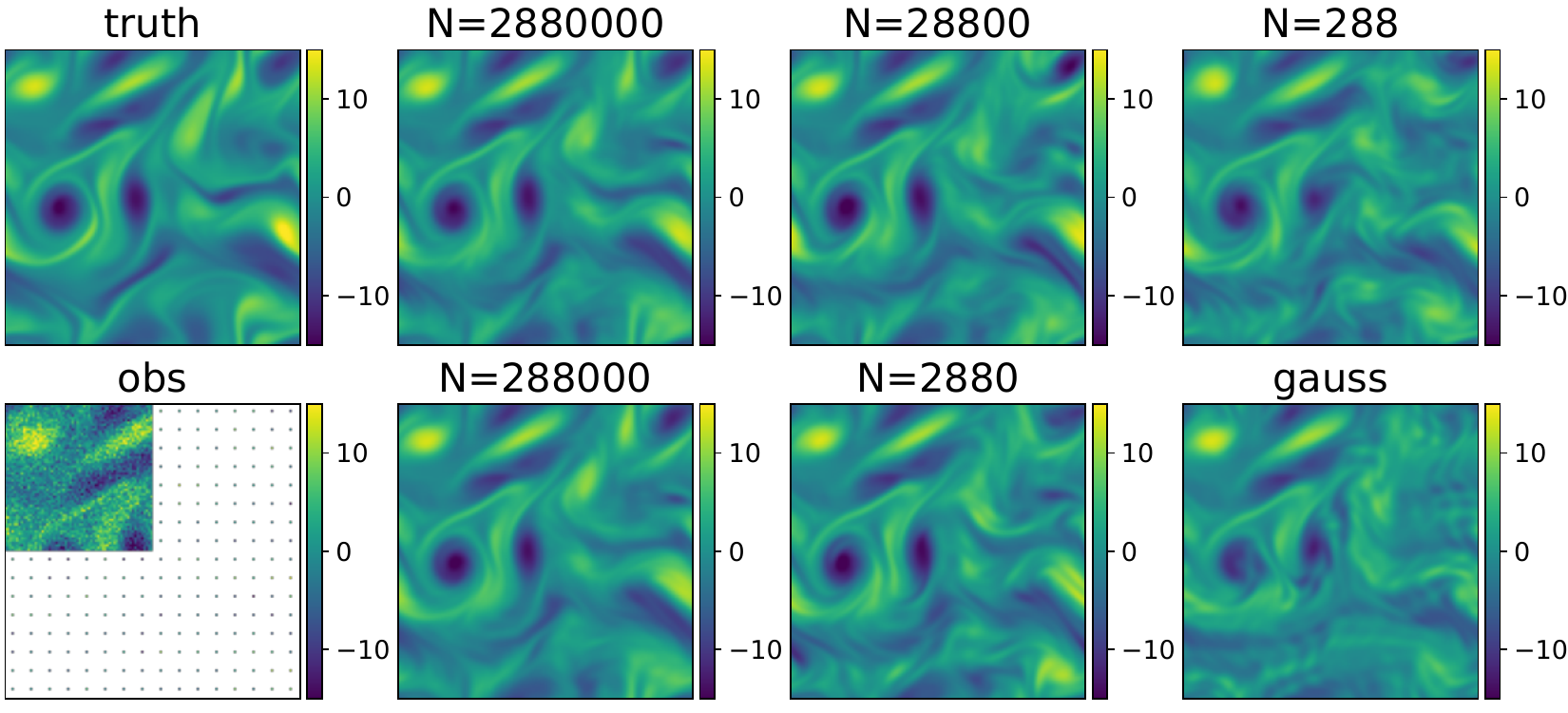}
    \caption{Kolmogorov flow (realistic observation network): qualitative compute-budgeted data scaling. Left column: ground-truth vorticity field (truth) and partial observations under the “realistic” network (obs). Other panels: PR-Smoother reconstructions (Flow variant) on the same held-out window when trained with $N \in \{2.88\times 10^6, 2.88\times 10^5, 2.88\times 10^4, 2.88\times 10^3, 2.88\times 10^2\}$ observation windows (panel titles), keeping the number of gradient updates fixed across N (compute-budgeted training). The final panel (“gauss”) shows the diagonal-Gaussian variant of $N=2.88\times 10^6$ for reference. Even at $N = 2.88\times 10^2 (288)$ windows, the Flow model preserves coherent vortical structure; quantitatively, its state RMSE remains below the oracle 4D-Var reference in Fig.~\ref{fig:ablation}(a).}
    \label{fig:ablation_visualize}
\end{figure}

\begin{figure}[htbp]
    \centering
    \includegraphics[width=0.33\linewidth]{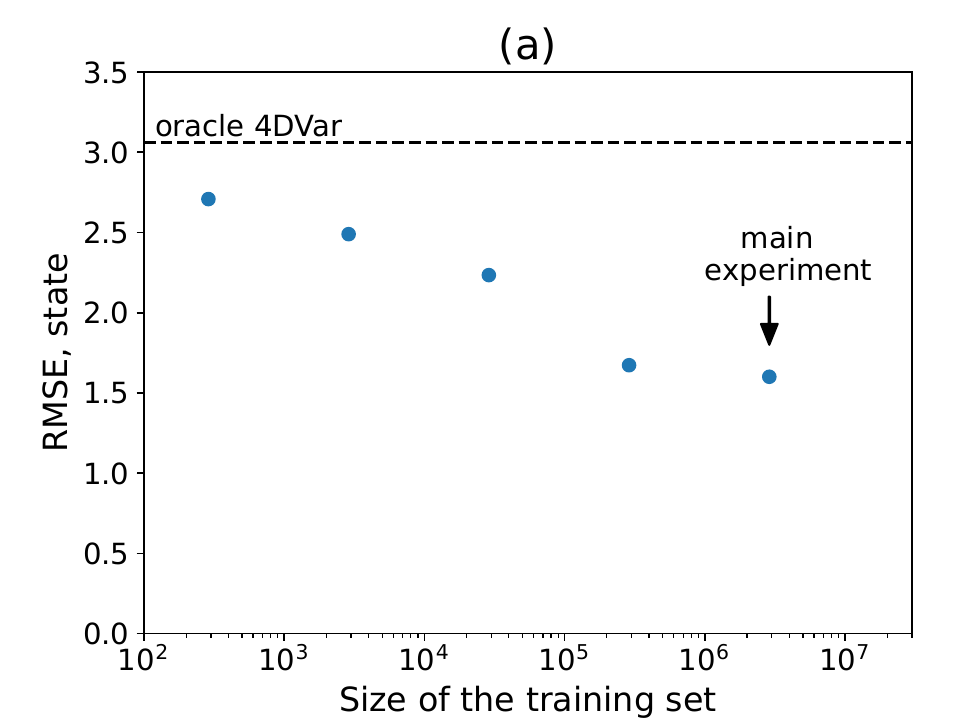}
    \includegraphics[width=0.33\linewidth]{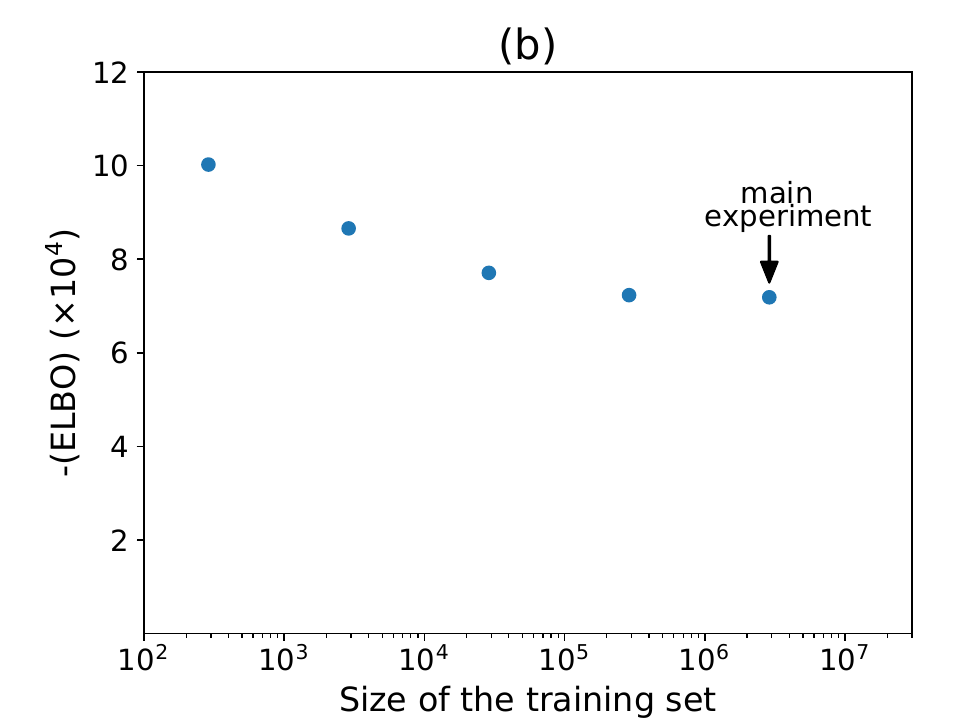}
    \includegraphics[width=0.33\linewidth]{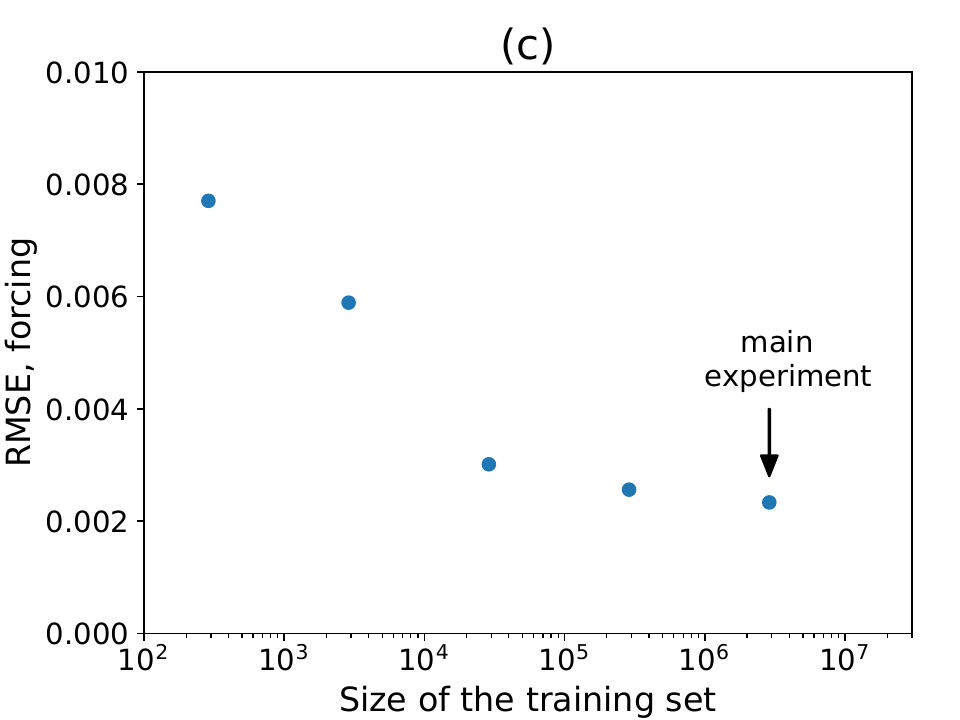}
    \includegraphics[width=0.33\linewidth]{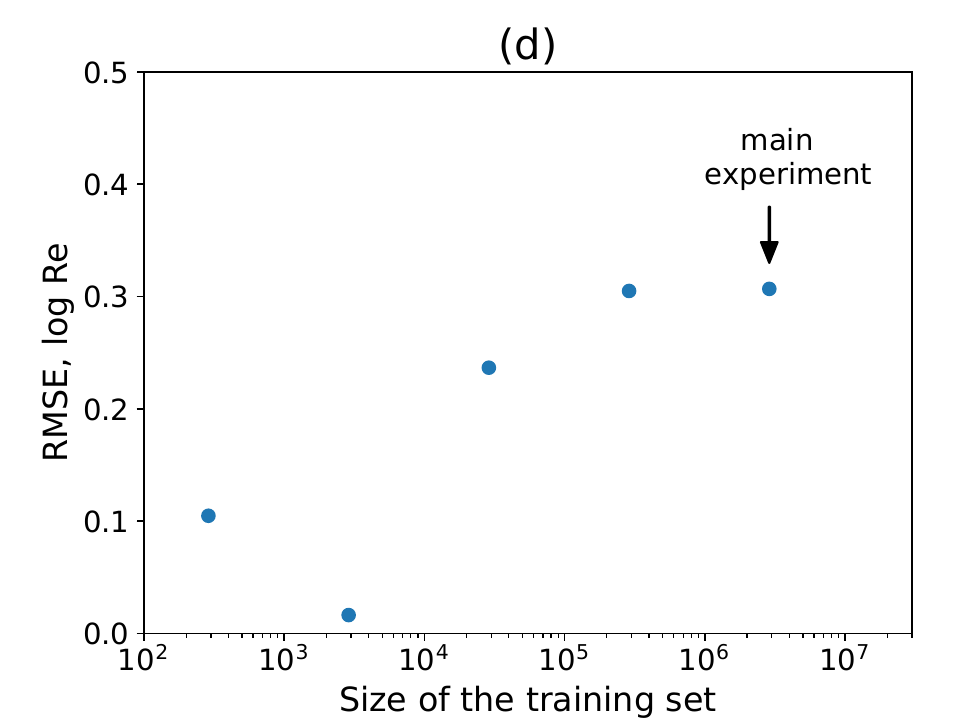}
    \includegraphics[width=0.33\linewidth]{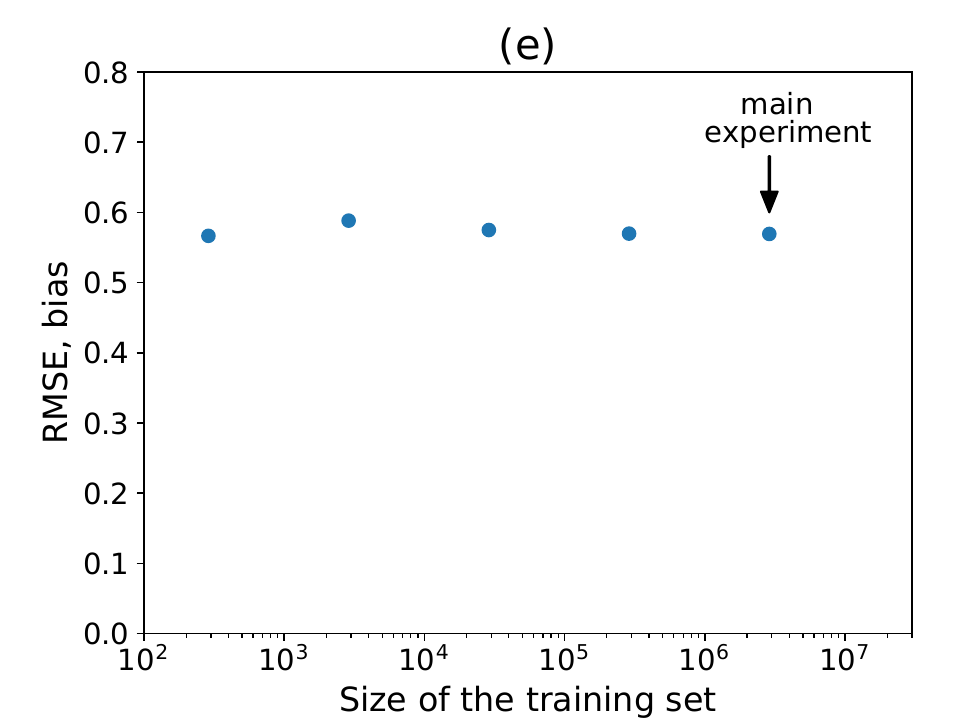}
    \caption{Ablation study results. Panel (a): RMSE of the state inference, (b): ELBO of the test set, (c) RMSE of the estimated forcing parameter $F$, (d) RMSE of the estimated log-Reynolds number $\log Re$, and (e): RMSE of the estimated bias field. The error bars for five initial conditions are smaller than the size of each marker.}
    \label{fig:ablation}
\end{figure}

A practical question for amortized smoothing is: \emph{how many observation windows are needed to obtain useful
performance under a fixed training compute budget?} In many deployments, collecting $10^{6}$--$10^{7}$ windows may be
impractical. We therefore repeat the Kolmogorov-flow experiment under the \emph{realistic} observation network while
subsampling the training set to
$N\in\{2.88\times10^6\ (100\%),\,2.88\times10^5\ (10\%),\,2.88\times10^4\ (1\%),\,2.88\times10^3\ (0.1\%),\,2.88\times10^2\ (0.01\%)\}$.
To isolate data effects from optimization cost, we keep the number of gradient updates fixed across all settings and increase
the number of epochs for smaller datasets. All metrics are computed on a held-out test set.

Fig.~\ref{fig:ablation} shows that the Flow variant degrades smoothly as N decreases under compute-budgeted training. Performance with $N \approx 10^5$ windows is already close to the main experiment, and the most noticeable
increase in state RMSE occurs when going below $\approx 10^4$ windows (Fig.~\ref{fig:ablation}(a)). Notably, even in the extreme low-data regime $N = 2.88\times 10^2 (288)$ windows, training remains stable and the Flow model’s state RMSE remains below the oracle 4D-Var reference in Fig.~\ref{fig:ablation}(a) (dotted line). Figure~\ref{fig:ablation_visualize} provides representative reconstructions across dataset sizes and illustrates that the Flow model still recovers coherent large-scale vortical structure at $N = 288$ (with the Gauss variant shown for comparison).

For parameter learning, the forcing amplitude $F$ is robustly estimated across dataset sizes (Fig.~\ref{fig:ablation} c). In contrast, the $\log Re$ estimate shows weaker sensitivity (and mild non-monotonicity) across dataset sizes (Fig.~\ref{fig:ablation} d). The bias-field RMSE varies only modestly across $N$ (Fig.~\ref{fig:ablation} e).
Overall, these results indicate that on the order of $10^5$ windows can suffice for near-main-experiment state accuracy in
this setting, while PR-Smoother remains competitive even with $10^2$--$10^3$ windows.

\section{Experimental Settings}
\label{sec:experimental_settings, appendix}
We summarize the experimental settings here. Code for reproducing the experiments is provided as supplementary material.

\paragraph{Computational resources.} We have used two clusters: a cluster of GH200 GPUs and another computer with an RTX5090. Each GPU has memory of 96GB (GH200) or 32GB (RTX5090).

\paragraph{Hyperparameters for training.} For all experiments, we have used Adam optimizer with default parameters. Hyperparameters employed in our experiments are provided in each section in the appendix. Each training is conducted with an RTX5090 (Lorenz96, 4D) or an GH200 GPU (Lorenz96, 40D and Kolmogorov flow).

\paragraph{Learning rate scheduling.}
For all the experiments, we warm-up the learning rate from $10^{-3}$ times the maximum lr, and it continuously increases up to the maximum lr for that experiment. The first 1\% (Lorenz96) / 5\% (Kolmogorov flow) of the total iterations is the warm-up period. Once it reaches the maximum, it continuously decreases exponentially, finishing the training with $0.1$ times the maximum lr at the final iteration. 

\paragraph{On the noise covariance matrix $S_t$ in Eq.~\eqref{eq:q-corr}}
For noisyFlow and noisyGauss we parameterize a residual correction $m_t(x_{t-1},y_{t:T})$ and use isotropic covariances with learned scales: $Q=\sigma_Q^2 I$ in the generative dynamics and $S_t=\sigma_S^2 I$ in the residual transition (shared across $t$). This preserves stability while eliminating manual noise calibration. The construction is nested: when $m_t\equiv 0$ and $S_t=Q$, the noisy variants reduce exactly to the corresponding flow/Gauss models.

\begin{table}[htbp]
    \centering
    \caption{Dataset summary.}
    \vspace{3mm}
    \begin{tabular}{cccc}
         & $X_{dim}$ & steps & $N_{data, train}$ \\ \hline
        Lorenz-96, 4D & $4$ & $1$ or $10$ & $1.0\times 10^7$ \\ 
        Lorenz-96, 40D & $40$ & $50$ & $1.0\times 10^7$ \\
        Kolmogorov flow & $16384$ & $10$ & $2.9\times 10^6$\\
    \end{tabular}
    \label{tab:dataset summary}
\end{table}

\subsection{Lorenz-96, 4D}
\label{sec:L96, 4D, appendix}

\begin{table}[htbp]
    \centering
    \caption{Training summary for Lorenz-96 (4D) experiment}
    \vspace{3mm}
    \begin{tabular}{ccccc}
         & lr & batch size & Epochs & train time per model\\ \hline
        Flow & $5\times 10^{-4}$ & 102400 & 100 & 1hr $\times$ 1GPU \\ 
        Gauss & $5\times 10^{-4}$ & 102400 & 100 & 1hr $\times$ 1GPU \\
        Mean Field & $5\times 10^{-4}$ & 102400 & 100 & 1hr $\times$ 1GPU \\
    \end{tabular}
    \label{tab:training summary for Lorenz96 4D}
\end{table}

\paragraph{Dataset:}
For the multimodality experiment in Sec.~\ref{sec:Lorenz96_multimodal} we use the four-dimensional Lorenz-96 system with periodic boundary
conditions. The continuous-time dynamics follow
\[
  \frac{dx_i}{dt}
  = (x_{i+1} - x_{i-2}) x_{i-1} - x_i + F,
  \quad i = 1,\dots,4,
\]
with cyclic indexing and forcing $F=8$.  We do not use any spin-up: the initial condition is drawn directly from a controllable Gaussian prior
\[
  X_1 \sim \mathcal{N}(0, (2.0)^2 I),
\]
so each coordinate has mean $0$ and standard deviation $2$. This prior is used both in the generative model and as the reference prior when evaluating approximate posteriors.

The dynamics are integrated deterministically with a fourth-order Runge-Kutta scheme.  We use an internal time step $\texttt{internal\_dt}=0.006$ and output states every five internal steps, giving an effective observation interval $\Delta t = 0.03$. For each draw of $X_1$ we run the simulation for $10$ observation steps, producing a latent path $X_{1:10} = (X_1,\dots,X_{10})$.

Observations are generated with a componentwise square and additive Gaussian noise,
\[
  Y_t = h(X_t) + \varepsilon_t,
  \qquad
  h(x) = x^{\!2},\quad
  \varepsilon_t \sim \mathcal{N}(0, \sigma^2 I_4),
\]
with observation noise level $\sigma = 3.0$.  To probe multimodality, we consider two conditioning problems built from
the same simulated paths: a $T=1$ setting that uses only the final time step $Y_{10}$, and a $T=10$ setting
that uses the full sequence $Y_{1:10}$.  In both cases the dynamics are strictly deterministic, so all
stochasticity comes from the Gaussian prior on $X_1$ and the observation noise.  For evaluation, we approximate the
“ground truth’’ posterior $p(X_{1:T}\!\mid\! Y_{1:T})$ with a very large bootstrap PF of $5\times 10^6$ particles and treat the resulting weighted particle cloud as an empirical reference distribution.
\paragraph{Models.}
For the 4-dimensional Lorenz-96 multimodality test in Sec.~\ref{sec:Lorenz96_multimodal} we instantiate PR-Smoother with three variational
families for the initial-state posterior $q_\phi(X_1 \!\mid\! Y_{1:T})$: a RealNVP flow (\texttt{Flow}), a diagonal Gaussian
(\texttt{Gauss}), and a MF variational smoother (\texttt{Mean-field}).  All three share the same deterministic
Lorenz-96 dynamics and squared observation operator described above; the only randomness in the model comes from the
Gaussian prior on $X_1$ and the Gaussian observation noise.

\paragraph{Observation encoder (periodic Conv1D).}
For the 4D Lorenz-96 experiment, we encode the observation window with a periodic 1-D convolutional network
$E_\phi$ (Conv1D variant). The input is a minibatch $y \in \mathbb{R}^{B \times T \times 4}$, where $T \in \{1,10\}$ is the
window length and the spatial index corresponds to the 4 cyclic Lorenz-96 coordinates.
We treat the time axis as channels and convolve along the (cyclic) state axis of length $4$ using circular padding.

\emph{Conv trunk.} Let $h^{(0)} = y$. We apply an 8-block residual Conv1D stack with channel schedule
\[
[T,\,128,\,128,\,128,\,128,\,128,\,128,\,128,\,C_{\mathrm{out}}],
\]
kernel size $k=3$, stride $1$, and circular padding (so the spatial length stays 4 throughout).
Each residual block has the form
\[
u = \mathrm{ReLU}(\mathrm{GN}(\mathrm{Conv1D}(h))), \qquad
h \leftarrow u + \mathrm{Skip}(h),
\]
where $\mathrm{GN}$ is GroupNorm with 1 group (equivalently, a per-location normalization over channels),
and $\mathrm{Skip}$ is identity when the channel counts match and otherwise a $1\times 1$ Conv1D projection.

\emph{ScaledTanh head.} The conv trunk output is passed through a $1\times 1$ Conv1D followed by a softly bounding
scaled $\tanh$ nonlinearity:
\[
\mathrm{Head}(v) \;=\; s \,\tanh(v/s),
\]
where $s>0$ is learnable and initialized to $s=10$. The $1\times 1$ head convolution is initialized at zero so the encoder
starts near zero output.

\paragraph{Flow model (RealNVP).}
For the flow model, we set $C_{\mathrm{out}}=64$, producing features $h \in \mathbb{R}^{B \times 64 \times 4}$.
We flatten $h$ to $\mathbb{R}^{B \times 256}$ and apply a 2-layer MLP to obtain a global context
$c \in \mathbb{R}^{256}$:
\[
c = \mathrm{MLP}( \mathrm{vec}(h) ), \qquad
\mathrm{MLP}:\; 256 \rightarrow 256 \xrightarrow{\mathrm{SiLU}} 256 .
\]
This context conditions all coupling layers of the conditional RealNVP flow
$q_\phi(x_1 \mid y_{1:T})$ on $\mathbb{R}^4$.

\paragraph{Gauss model.}
For the diagonal-Gaussian model, we set $C_{\mathrm{out}}=2$ and interpret the encoder output
$h \in \mathbb{R}^{B \times 2 \times 4}$ as $(\mu_\phi(y_{1:T}), \log \sigma_\phi(y_{1:T}))$ for each coordinate.
We sample $x_1 = \mu_\phi + \sigma_\phi \odot \varepsilon$ with $\varepsilon \sim \mathcal{N}(0,I_4)$ and evaluate the
closed-form diagonal-Gaussian log density.

\paragraph{Mean-field model.}
The \texttt{Mean-field} baseline removes the dynamical coupling from the variational family altogether and factorizes over
both time and coordinates:
\[
  q_\phi(X_{1:T} \!\mid\! Y_{1:T})
  = \prod_{t=1}^T \prod_{i=1}^4
      \mathcal{N}\!\bigl(x_{t,i};\,\mu_{t,i}(Y_{1:T}),
                                \sigma_{t,i}^2(Y_{1:T})\bigr).
\]
In implementation we reuse the same observation encoder as above, but instead of sampling $X_1$ and pushing it through
the dynamics, we attach lightweight MLP heads that take the encoder output and directly produce means and log-standard
deviations $\{\mu_{t,i}, \log \sigma_{t,i}\}$ for each $(t,i)$.  The Lorenz-96 model still enters the training objective
through the likelihood $p(Y_{1:T} \!\mid\! X_{1:T})$, but the variational posterior no longer enforces the Markov structure of
the SSM.  This makes the mean-field model a dynamics-agnostic reference for PR-Smoother in the low-dimensional,
multimodal Lorenz-96 test.
\paragraph{Ensemble Kalman filter (EnKF).}
As a classical Gaussian baseline we use an ensemble Kalman filter (EnKF) built on the same 4D Lorenz-96 dynamics and
squared observation operator as in the generative model.  We draw an ensemble of initial states
$\{X_1^{(n)}\}_{n=1}^{N_{\text{ens}}}$ i.i.d.\ from the Gaussian prior
$\mathcal{N}(0,4 I_4)$ and then propagate each member deterministically with the RK4 integrator, using the same
time-stepping scheme ($\texttt{internal\_dt}=0.006$, outputs every five internal steps).  At each observation time we
apply the nonlinear observation operator $h(x)=x^{\!2}$ to the forecast ensemble, compute the sample mean and covariance
of both the state and observation ensembles, and perform the standard EnKF analysis update in ensemble space with
observation noise covariance $R = 9 I_4$ (corresponding to $\sigma=3.0$).  For the $T=1$ setting this reduces to a
single forecast-analysis cycle; for $T=10$ we run EnKF for ten steps and treat the resulting ensemble of trajectories
$\{X_{1:T}^{(n)}\}$ as approximate samples from the posterior over the path.

\paragraph{Ensemble transform Kalman filter (ETKF).}
The ETKF baseline uses the same ensemble setup and dynamical/observation models as EnKF, but performs the analysis step
via a deterministic square-root transform instead of perturbing the observations.  In each window, we form the forecast
ensemble anomalies in state space, map them through the observation operator to obtain observation anomalies, and compute
the low-dimensional analysis transform in ensemble space that matches the Kalman posterior covariance.  Applying this
transform to the anomalies and updating the ensemble mean yields the analysis ensemble without additional observation
noise.  This typically reduces sampling noise and can give more stable posteriors in low-dimensional settings like the
4D Lorenz-96 system.  As with EnKF, we use the ETKF analysis ensemble over the window (either the final time only for
$T=1$ or the full sequence for $T=10$) as an empirical approximation to the posterior over the states.

\paragraph{Particle filter (PF).}
To obtain a high-accuracy reference distribution we run a bootstrap particle filter (PF) with the same prior, dynamics,
and observation model.  Particles are initialized from $X_1^{(n)} \sim \mathcal{N}(0,4 I_4)$, propagated deterministically
through the Lorenz-96 integrator, and reweighted at each observation time by the Gaussian likelihood
$p(Y_t \!\mid\! X_t^{(n)}) = \mathcal{N}(Y_t; h(X_t^{(n)}), 9 I_4)$ with $h(x)=x^{\!2}$.  We monitor the effective sample
size and resample adaptively to avoid severe weight degeneracy.  The resulting weighted particle cloud over $X_{1:T}$
(for $T=1$ and $T=10$) is treated as an empirical approximation to the true posterior $p(X_{1:T} \!\mid\! Y_{1:T})$ and is
used only for evaluation, i.e., as a “ground truth’’ reference when comparing the variational posteriors and ensemble
Kalman methods in this low-dimensional example.

\paragraph{Training variables:}
We do not assume any trainable model parameters in this experiment. Training variables are the parameters in the inference network, i.e., network parameters.

\subsection{Lorenz-96, 40D}
\label{sec:L96, 40D, appendix}

\begin{table}[htbp]
    \centering
    \caption{Training summary for Lorenz-96 experiment}
    \vspace{3mm}
    \begin{tabular}{ccccc}
         & lr & batch size & Epochs & train time per model\\ \hline
        Flow & $2\times 10^{-4}$ & 51200 & 400 & 18hr $\times$ 1GPU \\ 
        Noisyflow & $4\times 10^{-4}$ & 10240 & 50 & 16hr $\times$ 1GPU \\
        Gauss & $2\times 10^{-4}$ & 51200 & 400 & 16hr $\times$ 1GPU \\
        NoisyGauss & $4\times 10^{-4}$ & 10240 & 50 & 11hr $\times$ 1GPU \\
        Mean Field & $2\times 10^{-4}$ & 51200 & 400 & 14hr $\times$ 1GPU \\
    \end{tabular}
    \label{tab:training summary for Lorenz96}
\end{table}

\paragraph{Dataset:}
The dataset is synthesized by running simulations starting from a random field $X_1$:
\begin{align}
    X_{1,i} = 20\epsilon_i -10,\quad \epsilon_i\sim \mathcal{U}[0, 1],
\end{align}
where $i$ runs from 1 to 40. The time evolution is described by a set of ordinary differential equations:
\begin{equation}
\frac{dx_i}{dt} = (x_{i+1}-x_{i-2})\,x_{i-1} - x_i + F, \ i=1,\dots,N
\label{eq:L96, appendix}
\end{equation}
where we assume $F=8, N=40$ for our experiments. The time evolution is computed using the fourth-order Runge-Kutta (RK4) method with the step size of 0.006 and observations are produced every 0.03, both in the model time unit. We run the simulation for 100 observation steps (3.0 model time) and discard the first 50 observations (1.5 model time) as the spin-up period.

In the noisy dynamics case, we add a noise term at each step. The dynamics equation now reads:
\begin{align}
X_{t+1}=f(X_t)+\eta_t,
\end{align}
where $f$ represents the 5 steps of RK4 mapping function and $\eta_t\sim \mathcal{N}(\mu=0, \sigma^2=0.1^2)$ is the noise term added every $t=0.03$. This corresponds to the dynamics model of 
\begin{align}
    p(X_{t+1}\!\mid\! X_t)=\mathcal{N}[X_{t+1};f(X_t),Q],
\end{align}
where $Q=(0.1)^2I$. The dynamics noise is independent between time steps and the physical grids.

We adopt two observation operators to generate the observation data: (i) the linear operator $h(X_t) = X_t$, and (ii) the nonlinear operator $h(X_t) = min(X_t^2, 5)$. Both operators work componentwise. For either setting, we add a noise term $\epsilon_t\sim \mathcal{N}(\mu=0, \sigma^2=1.0^2)$ for the observation process. Noise added at an observation step is independent across steps. 

\paragraph{Models:} We have seven models: (i) flow model, (ii) noisy flow model, (iii) Gauss model, (iv) noisy Gauss model, (v) mean-field model, (vi) 4D-Var, and (vii) IEnKS.
\subparagraph{(i) Flow model}
We use a periodic 1-D conv encoder with global self-attention along the state axis $L$. Input is $y$ with shape $(B, steps, L)$, where $steps=50$ and $L=40$. Channels progress as $[steps, 128, 128, 128, 128, 128, 128, 128, out\_channels]$. Each hidden stage is a residual conv block: Conv1d(k=3, stride=1, circular padding) → LayerNorm(1) → ReLU; the skip is identity when channels match, otherwise a $1\times1$ projection. After selected 128-channel blocks ($attn\_at=(2, 5)$) we insert AttnBlock1D, which treats $L$ as the sequence: it transposes $(B, C, L) \rightarrow (B, L, C)$, applies LayerNorm, adds FourierPositionalEncoding1D(C) (sin/cos features tied to $L$ for circular domains), runs MultiheadAttention, adds a residual, applies another LayerNorm and a 2-layer MLP $(C \rightarrow 2C \rightarrow C)$ with GELU, then adds a second residual and transposes back to $(B, C, L)$. The stack ends with a $1\times 1$ Conv1d to $out\_channels$ and a ScaledTanh head initialized with bound 10, which softly bounds outputs while staying near-linear around zero. The design preserves spatial length, uses convolutions for local coupling, and uses attention to mix information globally across $L$ without downsampling.

\subparagraph{(ii) Noisy-flow model}

The noisy-flow variant uses exactly the same periodic Conv1D+attention encoder as the flow model to summarize the observations $Y_{1:T}$, and the same conditional RealNVP flow for the initial state $q_\phi(X_1 \!\mid\! Y_{1:T})$.  In addition, we instantiate a ``future-aware'' Transformer encoder over the observation sequence. This network takes the normalized observations $Y_{1:T}$ and returns per-step context vectors $c_t$ for $t=1,\dots,T-1$.  Given a rollout state $X_t$ from the physical dynamics and the context $(c_t, t/T)$, a small CNN/MLP “shock head” outputs a mean and diagonal log-variance for an additive correction $\delta_t$ to the dynamics noise. 

The full variational family is
\[
q_\phi(X_{1:T}\!\mid\! Y_{1:T}) = q_\phi(X_1\!\mid\! Y_{1:T}) \prod_{t=1}^{T-1}
q_\phi(\delta_t \!\mid\! X_t, Y_{t+1:T}),
\qquad
X_{t+1} = f_\theta(X_t) + \delta_t,
\]
with all log-densities and per-step KL terms evaluated in closed form.

\subparagraph{(iii) Gauss model}
The Gauss model keeps the same Conv1D+attention encoder but replaces the flow with a diagonal Gaussian posterior for the initial state.  The encoder outputs two channels of size $D=40$, which are interpreted as a mean and log standard deviation $(\mu(Y), \log\sigma(Y))$ for each grid point.  The variational family for the first time step is
\[
q_\phi(X_1\!\mid\! Y_{1:T}) = \mathcal{N}\!\bigl(X_1;\,\mu(Y_{1:T}),\operatorname{diag}(\sigma^2(Y_{1:T}))\bigr),
\]
implemented by a standard reparameterization $X_1=\mu+\sigma\odot\varepsilon$, $\varepsilon\sim\mathcal{N}(0,I)$.

\subparagraph{(iv) Noisy-gauss model}
The noisy-gauss model combines the diagonal-Gaussian initial posterior with the future-aware noisy transition family of Sec.~\ref{sec:noisy}.  As in the Gauss model, the encoder produces $(\mu,\log\sigma)$ and we sample $X_1\sim\mathcal{N}(\mu,\operatorname{diag}(\sigma^2))$.  In addition, the same Transformer future encoder used in the noisy-flow model provides per-step contexts $c_t$ from $Y_{t:T}$.  At each time step $t$, we form a feature vector from $(X_t, c_t, t/T)$ and feed it into a Gaussian shock head, which outputs a mean and diagonal log-variance for a correction $\delta_t$; this yields
\[
q_\phi(X_{1:T}\!\mid\! Y_{1:T})
= q_\phi(X_1\!\mid\! Y_{1:T})\prod_{t=1}^{T-1}
\mathcal{N}\!\bigl(\delta_t;\,m_t(X_t,c_t),\operatorname{diag}(s_t^2(X_t,c_t))\bigr),
\quad X_{t+1}=f_\theta(X_t)+\delta_t.
\]
The implementation shares the same Transformer and shock head as the noisy-flow model and accumulates the exact log-density of the path under this factorization.

\subparagraph{(v) Mean-field model}
The mean-field baseline removes the dynamical coupling from the variational family and factorizes over time:
\[
q_\phi(X_{1:T}\!\mid\! Y_{1:T}) = \prod_{t=1}^T q_{\phi,t}(X_t\!\mid\! Y_{1:T}).
\]
In code, this is implemented by a separate smoother class that uses the same observation encoder family as PR-Smoother but attaches lightweight heads that output $(\mu_t,\log\sigma_t)$ for each $(t,i)$ directly, without pushing samples through the dynamics inside $q_\phi$.  The physical model $f_\theta$ is still used in the likelihood and in the ELBO, but the variational posterior no longer enforces the Markov structure of the SSM, making it a useful “dynamics-agnostic’’ reference for PR-Smoother.

\subparagraph{(vi) 4D-Var model}
As a variational baseline we implement a windowed 4D-Var solver that keeps the Lorenz-96 dynamics and observation operator on-graph and performs MAP estimation of the latent trajectory. For a window of length T the method optimizes either only the initial state x1  (strong-constraint mode, used when the dynamics are noiseless) or the entire trajectory x1:T  (weak-constraint mode, used in the dynamics-noise setting), with the Lorenz-96 ODE integrated between observation times using an RK4 scheme with forcing F=8 and N=40 variables. The objective is the standard quadratic 4D-Var cost: a Gaussian background term on x1  with isotropic prior variance matched to the data generation, a Gaussian observation-misfit term with the same noise variance as in the generative model, and in the weak-constraint case an additional quadratic penalty on model-error increments xt+1-f(xt). Optimization is done in PyTorch via L-BFGS for the strong-constraint case and Adam for the weak-constraint case, unrolling the Lorenz-96 integrator inside the computation graph so that gradients are obtained by automatic differentiation. We use multiple random restarts per window and report the reconstructed trajectory corresponding to the lowest cost as the 4D-Var estimate.

\subparagraph{(vii) IEnKS model}
The IEnKS baseline follows the iterative ensemble Kalman smoother formulation of \citet{IEnKS}, using the same Lorenz-96 dynamics and observation operators as in the main experiment. For each assimilation window, we propagate an ensemble of trajectories through the RK4-discretized Lorenz-96 model and perform fixed-lag smoothing in ensemble space. The augmented state contains only the physical state xt (no bias or parameter components in this experiment); ensemble members are initialized from an isotropic Gaussian prior. At each IEnKS iteration we (i) roll the ensemble forward over the window, (ii) map each member to observation space through h (either identity or the nonlinear $h(x)=min(x^2,5))$, (iii) form anomalies and the associated design matrix in ensemble space, and (iv) solve the small linear system $(I+D^\top D)\delta w=D^\top d$ to obtain a Gauss-Newton-style update of the ensemble mean and anomalies. No localization is applied since the state dimension is moderate. We use 512 ensemble members for all Lorenz-96 experiments. Sliding this window along the sequence yields an analysis trajectory given by the ensemble mean at each time.

\paragraph{Training variables:}
We do not assume any trainable model parameters in this experiment. Training variables are the parameters in the inference network, i.e., network parameters.

\subsection{Kolmogorov Flow}
\label{sec:Kolmogorov, appendix}

\paragraph{Problem setting.}
We conduct an experiment on 16,384-dimensional Kolmogorov flow. Kolmogorov flow is a standard 2-D incompressible Navier-Stokes benchmark in a periodic box, driven by a spatially sinusoidal body force. In nondimensional form it reads 
\begin{align}
\partial_t \mathbf{u} + (\mathbf{u}\cdot\nabla)\mathbf{u}
&= -\nabla p + \frac{1}{\mathrm{Re}}\,\Delta \mathbf{u}
  + F \sin(k y)\,\mathbf{e}_x, \label{eq:Kolmogorov dynamics1}\\
\nabla \cdot \mathbf{u} &= 0.
\label{eq:Kolmogorov dynamics2}
\end{align}
where $F$ is the forcing amplitude, $k$ the forcing wavenumber, and $Re$ the Reynolds number. We use this problem in a $128\times128$ discretization (state dimension $16,384$) to evaluate high-dimensional data assimilation.

\begin{table}[htbp]
    \centering
    \caption{Training summary for Kolmogorov experiment}
    \vspace{3mm}
    \begin{tabular}{ccccc}
         & lr & batch size & Epochs & train time per model\\ \hline
        Flow & $3\times 10^{-4}$ & 288 & 4 & 19hr $\times$ 1GPU \\ 
        Noisyflow & $3\times 10^{-4}$ & 96 & 1 & 20hr $\times$ 1GPU \\
        Gauss & $3\times 10^{-4}$ & 384 & 5 & 19hr $\times$ 1GPU \\
        NoisyGauss & $3\times 10^{-4}$ & 96 & 1 & 20hr $\times$ 1GPU \\
        Mean Field & $3\times 10^{-4}$ & 384 & 6 & 20hr $\times$ 1GPU \\
    \end{tabular}
    \label{tab:training summary for Kolmogorov}
\end{table}

\paragraph{Dataset:}
The Kolmogorov experiment uses the 2‑D incompressible Navier-Stokes “Kolmogorov flow” benchmark on a periodic $128\times 128$ grid (state dimension $16,384$). The continuous equations are given in Eqs.~\eqref{eq:Kolmogorov dynamics1} and \eqref{eq:Kolmogorov dynamics2}, with sinusoidal body forcing of amplitude F and Reynolds number Re. We generate trajectories by numerically integrating this system and record windows of T=10 steps; the training set contains approximately $2.9\times 10^6$ windows. We consider four observation patterns---full, half, sparse, and mixed realistic coverage---and always add an unknown, per‑sensor additive bias as well as i.i.d. Gaussian measurement noise. The true forcing and Reynolds number are $F=1.0$ and $Re=1000$; all algorithms are initialized at $F=0.1$ and $Re=100$, and we learn these parameters jointly with the state.

\paragraph{Models.}
We have seven models: (i) flow model, (ii) noisy flow model, (iii) Gauss model, (iv) noisy gauss model, (v) mean-field model, (vi) 4D-Var, and (vii) IEnKS. All four PR-Smoother variants for Kolmogorov share the same
spectral Gaussian prior (see the paragraph below) on the initial state, the Kolmogorov
Navier-Stokes dynamics, and the observation likelihood and
bias parameterization described above. They differ only in the
choice of initial-state posterior $q_\phi(x_1 \!\mid\! y_{1:T})$ and
whether the future-aware noisy-dynamics corrections
$q_\phi(x_{t+1} \!\mid\! x_t, y_{t+1:T})$ are enabled. Training
hyperparameters for each variant are summarized in
Table~\ref{tab:training summary for Kolmogorov}.

\paragraph{Spectral Gaussian prior on the initial vorticity field.}
Let $x_1\in\mathbb{R}^{H\times W}$ denote the initial 2-D vorticity field on a periodic
$H\times W$ grid (in our experiments $H=W=128$). We use a zero-mean Gaussian prior
$p(x_1)=\mathcal{N}(0,C_0)$ whose precision is diagonal in the 2-D discrete Fourier basis.

Let $\mathcal{F}$ denote the unitary 2-D discrete Fourier transform (DFT), and define
$\hat{x}_1 \coloneqq \mathcal{F}\{x_1\}\in\mathbb{C}^{H\times W}$.
For each Fourier mode $k=(k_x,k_y)$ (using FFT frequency indexing), define the (negative)
discrete Laplacian eigenvalue
\begin{equation}
\lambda(k)\;=\;4\sin^2\!\left(\pi k_x/W\right)\;+\;4\sin^2\!\left(\pi k_y/H\right).
\end{equation}
We then define the mode-wise precision (``power'') as
\begin{equation}
q(k)\;=\;\big(\alpha + \beta\,\lambda(k) + \gamma\,\lambda(k)^2\big)\,\cdot\,b(k),
\qquad \alpha,\beta,\gamma>0,
\end{equation}
where $b(k)$ is an optional high-wavenumber ``spectral wall'':
\begin{equation}
b(k)\;=\;1 + (B-1)\,\sigma\!\big(\eta\,(r(k)-k_{\mathrm{cut}})\big), \qquad
r(k)=\sqrt{k_x^2+k_y^2},
\end{equation}
with $\sigma(\cdot)$ the logistic sigmoid, and constants $k_{\mathrm{cut}}=16$ and $\eta=12$.

With these definitions, the log-density is
\begin{equation}
\log p(x_1)\;=\;
-\frac{1}{2}\sum_{k} q(k)\,|\hat{x}_1(k)|^2
+\frac{1}{2}\sum_{k}\log q(k)
-\frac{HW}{2}\log(2\pi),
\end{equation}
where the sums range over all discrete Fourier modes.

\paragraph{Hyperparameters.}
We parameterize $\alpha,\beta,\gamma$ and $B$ in log-space to enforce positivity and initialize
$\alpha=10^{-3}$, $\beta=1$, $\gamma=0.1$, and $B=1+\exp(\log 5)\approx 6$; unless stated
otherwise, these scalars are optimized jointly with the other generative parameters.

\subparagraph{(i) Flow model.}
The flow model uses a conditional 2-D NF
for the initial state,
$q_\phi(x_1 \!\mid\! y_{1:T})$, instantiated as a multi-scale coupling
flow acting on the $128\times128$ latent field.
The Kolmogorov encoder produces a 2-channel feature map
$h \in \mathbb{R}^{B\times 2\times 128\times128}$, which is
projected by a $1\times1$ convolution to a 64-channel context
map that conditions all coupling layers. Starting from
$z \sim \mathcal{N}(0,I)$, the flow generates
$x_1 = g_\phi(z; c_{\rm global})$ and returns
$\log q_\phi(x_1 \!\mid\! y_{1:T})$ via the change-of-variables
formula. The dynamics in the variational family are
deterministic: trajectories are obtained by rolling $x_1$ forward
through the prescribed Kolmogorov integrator
$x_{t+1} = f_\theta(x_t)$ without additional per-step corrections.
Thus flow optimizes the deterministic ELBO, in which
transition terms cancel and only the spectral prior and
observation likelihood appear.

\paragraph{(ii) Noisyflow model.}
The noisyflow model uses the same conditional flow
$q_\phi(x_1\mid y_{1:T})$ as the flow model for the initial state,
and adds Gaussian corrections at subsequent steps:
\[
p_\theta(x_{t+1}\mid x_t)
= \mathcal{N}\!\left(f_\theta(x_t),\sigma_Q^2 I\right),
\qquad
q_\phi(x_{t+1}\mid x_t,y_{t+1:T})
= \mathcal{N}\!\left(f_\theta(x_t)+m_t,\sigma_S^2 I\right).
\]
The correction head predicts the mean field $m_t$ from the current
state, observation context, and time encoding. The process-noise
scale $\sigma_Q$ and variational correction scale $\sigma_S$ are
separate learned scalars shared across spatial coordinates, time
steps, and observation windows.

\subparagraph{(iii) Gauss model.}
The Gauss model keeps the deterministic Kolmogorov
rollout but replaces the flow-based initial-state posterior with a
diagonal Gaussian. The encoder output
$h \in \mathbb{R}^{B\times 2\times 128\times128}$ is interpreted
channelwise as per-pixel mean and log-standard deviation,
\[
  (\mu,\log\sigma)(i,j) = h_{:, :, i,j},
\]
so that
$q_\phi(x_1 \!\mid\! y_{1:T}) =
\mathcal{N}\big(x_1; \mu(y_{1:T}), \mathrm{diag}(\sigma^2(y_{1:T}))\big)$
factorizes over grid points. Samples $x_1$ are then propagated
deterministically with $x_{t+1}=f_\theta(x_t)$ as in
flow. This yields a mean-field style baseline in which
all non-Gaussian structure must be represented through the
encoder, while the variational family itself remains unimodal.

\subparagraph{(iv) NoisyGauss model.}
The noisyGauss model combines the diagonal-Gaussian
initial posterior of gauss with the noisy-dynamics
variational transitions of noisyflow. The first state
$x_1$ is drawn from the factorized Gaussian posterior defined
by the encoder, and subsequent states follow
\[
  q_\phi(x_{t+1} \!\mid\! x_t, y_{t+1:T})
  = \mathcal{N}\big(f_\theta(x_t) + m_t, \Sigma_t\big),
\]
with $(m_t,\Sigma_t)$ again parameterized by the
future-aware context maps and the 2-D Gaussian shock head.
The generative prior over transitions is the same Kolmogorov
dynamics with Gaussian process noise used in noisyflow,
and the ELBO includes the same per-step KL regularization
toward the physical noise model. Compared to noisyflow,
this variant tests the impact of restricting $q_\phi(x_1 \!\mid\!
y_{1:T})$ to a diagonal Gaussian while still allowing future
observations to correct the trajectory through noisy dynamics.

\subparagraph{(v) Mean-Field model}
The mean-field VI baseline keeps the same Kolmogorov dynamics, observation operators, and trainable parameters $(F,\mathrm{Re},\text{biases})$ as PR-Smoother, but uses a fully factorized diagonal-Gaussian posterior over the state trajectory. Concretely, it assumes
\[
q_\phi(X_{1:T}\!\mid\! Y_{1:T})
= \prod_{t=1}^T\prod_{i=1}^{H\times W}
\mathcal{N}\bigl(X_{t,i};\,m_{t,i}(Y_{1:T}),\,s_{t,i}^2(Y_{1:T})\bigr),
\]
so that all spatial and temporal correlations are ignored in $q_\phi$. In implementation, we reuse the Kolmogorov encoders but attach simple heads that output per-pixel means and log-standard deviations for each time step, without pushing samples through the dynamics inside the variational family. The physical Kolmogorov model is still used in the likelihood and in the Gaussian prior on $X_1$, but the posterior no longer enforces the Markov structure of the SSM, making this a “dynamics-agnostic’’ reference point for PR-Smoother in the high-dimensional Kolmogorov experiment (see Table~\ref{tab:training summary for Kolmogorov}).

\subparagraph{(vi) 4D-Var}
For the high-dimensional Kolmogorov-flow experiment we also include a 4D-Var baseline that performs joint state, bias, and parameter estimation over each window. The state $x_t$ is the 2-D vorticity field on a $128\times 128$ grid (flattened to dimension 16{,}384); the dynamics are given by the incompressible Navier-Stokes ``Kolmogorov flow'' model with unknown Reynolds number $Re$ and forcing amplitude $F$. For each window, we construct a quadratic 4D-Var cost consisting of (i) a Gaussian prior on the initial state $x_1$ in the same spectral basis as the PR-Smoother prior, (ii) Gaussian priors on the per-sensor additive biases (with variance set to the bias prior used when generating the data), (iii) Gaussian priors on the physical parameters $\theta=(\log Re, F)$, and (iv) a Gaussian observation-misfit term matching the noise variance used to synthesize the observations. In the strong-constraint variant (used in our main Kolmogorov results) we treat the dynamics as deterministic and optimize over $x_1$, the spatial bias field, and $\theta$; in the weak-constraint variant the trajectory $x_{1:T}$ is also optimized and a quadratic model-error penalty $\sum_t \|x_{t+1} - f_{\theta}(x_t)\|^2$ is added. Observations under the full / half / sparse / mixed operators are embedded into a full grid with masks, and only observed locations contribute to the likelihood term. All variables are optimized jointly by gradient-based methods (L-BFGS for the strong-constraint solver, Adam for the weak-constraint solver), with the Kolmogorov integrator unrolled inside the graph so that gradients with respect to both the state and $\theta$ are obtained by automatic differentiation.

We also report an ``oracle'' 4D-Var variant in which the true physical parameters and the bias parameters are provided and held fixed throughout optimization. Concretely, we remove $\theta$ and $b$ from the decision variables (equivalently omitting the parameter-prior term) and evaluate the unrolled Kolmogorov dynamics with $(\theta, b)=(\theta^\star, b^\star)$. The resulting oracle objective therefore optimizes only over the flow state (i.e., $x_1$).

\subparagraph{(vii) IEnKS}
The Kolmogorov IEnKS baseline uses the same implementation as in Appendix~\ref{sec:L96, 40D, appendix} but augments the state with both sensor biases and physical parameters. We use 16 ensemble members for all Kolmogorov experiments. The maximum number of iterations is $100$. In the non-oracle setting, the augmented vector is $[x_t,b,\theta]$, where $x_t$ is the 16,384-dimensional flow field, $b$ is a per-gridpoint additive bias field (restricted to observed locations under half / sparse / mixed coverage), and $\theta=(\log Re,F)$ are global parameters. For each window, we draw an ensemble of these augmented states from a factorized Gaussian prior: the initial flow state from the spectral prior, biases from a zero-mean Gaussian with variance matching the bias prior used in data generation, and $\theta$ from a broad Gaussian centered at the shared initialization $Re=100$, $F=0.1$. Ensemble members are then propagated through the Kolmogorov dynamics using member-specific parameters passed explicitly to the Kolmogorov solver (non-oracle mode). At each IEnKS iteration we build anomalies of the augmented state and of the corresponding modeled observations, and perform an SDA-style ensemble-space analysis that updates the full augmented state; this yields smoothed estimates of $x_t$, $b$, and $\theta$ over the window.

As with 4D-Var, we also include an “oracle” IEnKS variant in which the true parameters are provided (and, in the oracle setting we report here, the bias field is also provided), so the smoother only estimates the flow state over the window. To mitigate spurious long-range correlations in the high-dimensional Kolmogorov state, we use covariance localization within the ensemble-space analysis in both the oracle and non-oracle settings. We sweep the localization radius \texttt{kflow\_loc\_cutoff} over $\{4,8,16,32\}$ for both settings and report results using the best-performing value. Fig.~\ref{fig:IEnKS_localization} shows the calibration results for the localization radius. We chose the radius that yielded the lowest state RMSE. We additionally report a small ensemble-size sensitivity check $M\in \{16,32,48\}$ in Fig.~\ref{fig:IEnKS_members}. Given the substantial runtime growth with $M$, we use $M=16$ in all Kolmogorov IEnKS experiments.

\begin{figure}
    \centering
    \includegraphics[width=0.24\columnwidth]{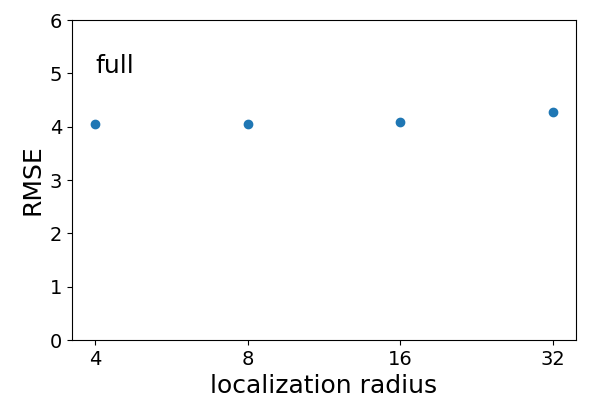}
    \includegraphics[width=0.24\columnwidth]{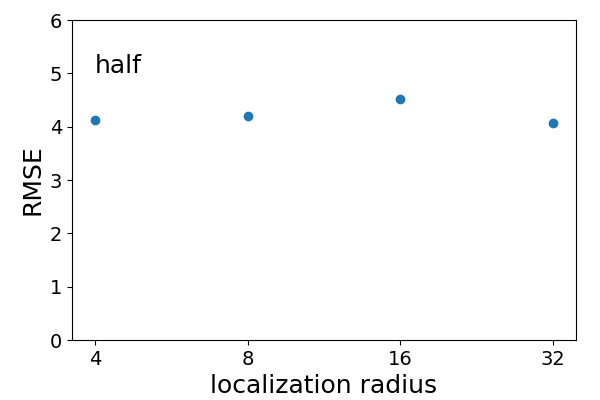}
    \includegraphics[width=0.24\columnwidth]{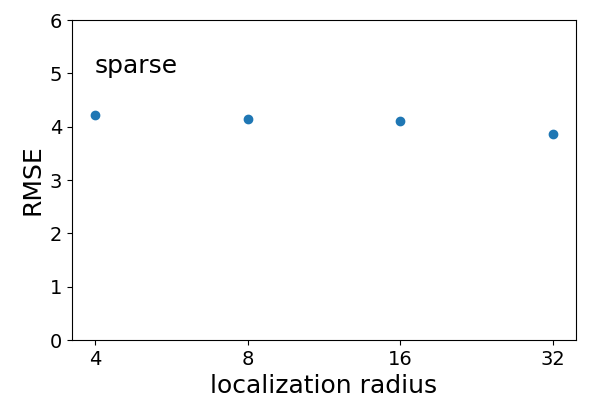}
    \includegraphics[width=0.24\columnwidth]{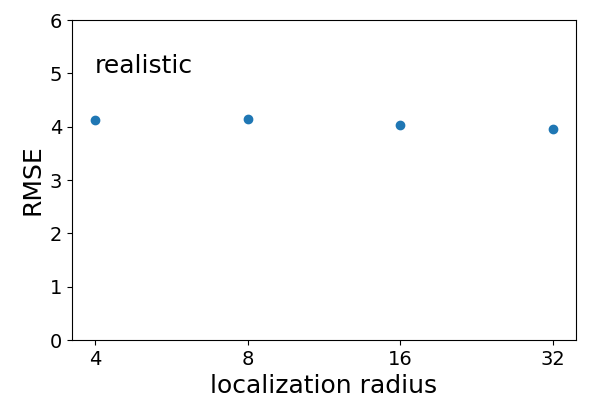}
    \includegraphics[width=0.24\columnwidth]{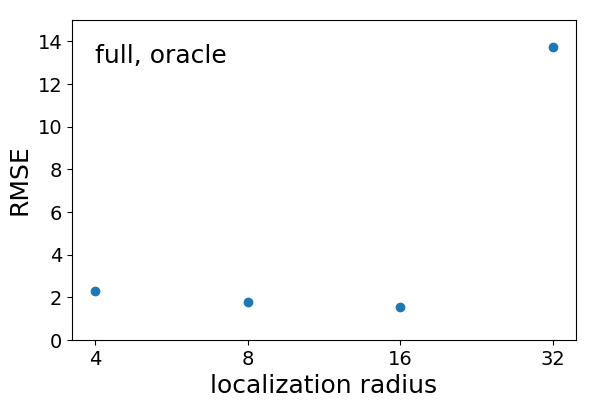}
    \includegraphics[width=0.24\columnwidth]{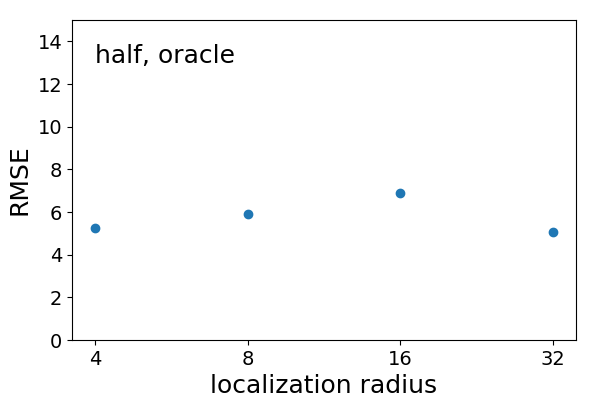}
    \includegraphics[width=0.24\columnwidth]{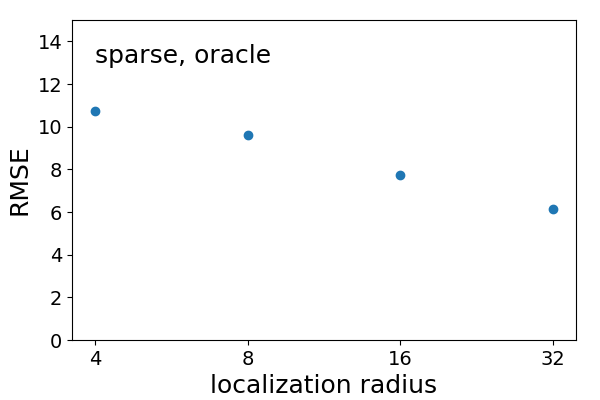}
    \includegraphics[width=0.24\columnwidth]{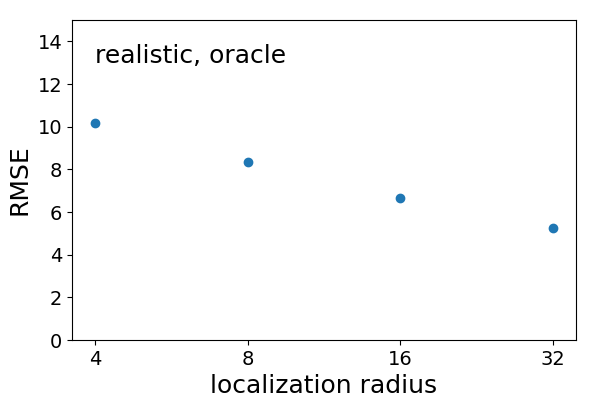}
    \caption{Localization radius calibration results for IEnKS.}
    \label{fig:IEnKS_localization}
\end{figure}

\begin{figure}
    \centering
    \includegraphics[width=0.49\columnwidth]{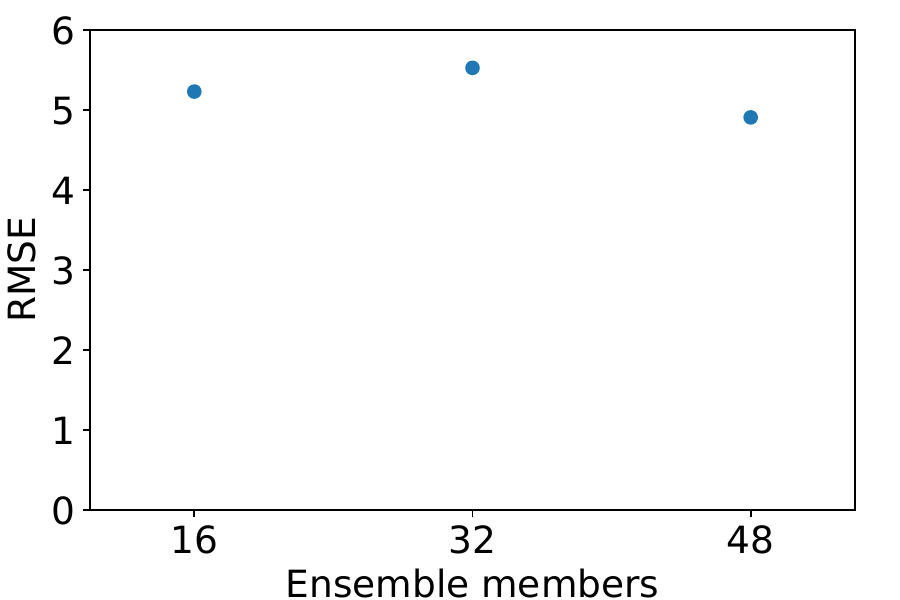}
    \includegraphics[width=0.49\columnwidth]{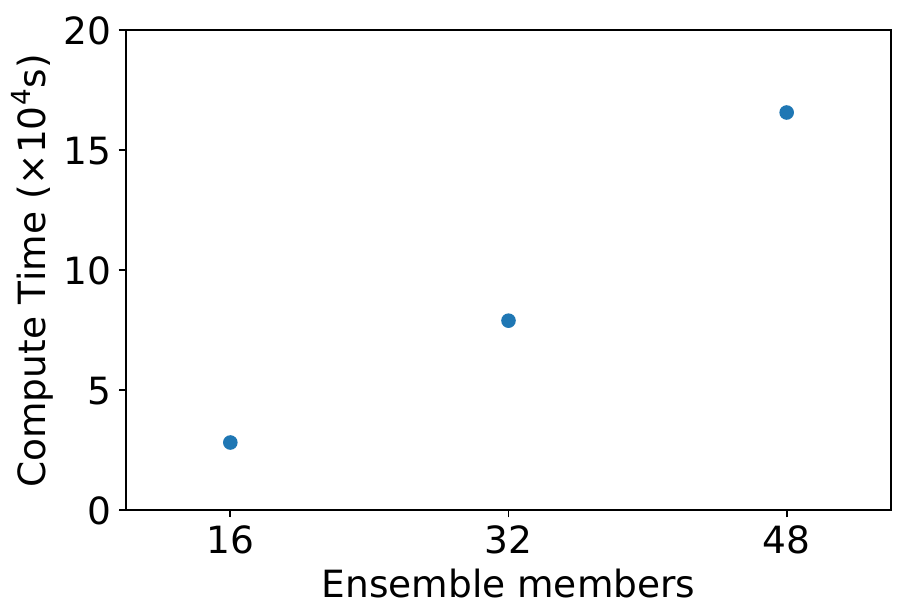}
    \caption{IEnKS ensemble-size sensitivity in the ``realistic'' observation network in the Kolmogorov-flow experiment (oracle setting). Left: mean state RMSE versus the number of ensemble members $M\in \{16,32,48\}$. Right: corresponding wall‑clock compute time to run IEnKS on the same evaluation set. Both metrics are averaged over four distinct initial conditions.}
    \label{fig:IEnKS_members}
\end{figure}

\paragraph{Training variables:}
Each model is able to model biases in the observations and the model parameters ($F$ and $\log Re$). Training variables are these model parameters in the generative model as well as the parameters in the inference network, i.e., network parameters.

\subsection{Runtime and Amortization}
\label{app:runtime}

PR-Smoother trades a one-time offline training phase for fast amortized test-time smoothing.
In this subsection we summarize the runtime characteristics of our implementation and compare
the per-window inference cost of PR-Smoother to non-amortized data-assimilation baselines.

\paragraph{Offline training cost.}
Tables~\ref{tab:training summary for Lorenz96 4D}--\ref{tab:training summary for Kolmogorov} list the training hyperparameters and wall-clock times per variant. Across our experiments, training a single PR-Smoother or baseline variant takes about $1$ GPU-hour on a RTX5090 for the 4D Lorenz-96 multimodality test, between $11$ and $18$ GPU-hours on a GH200 GPU for the 40D Lorenz-96 experiment, and between $19$ and $20$ GPU-hours on a GH200 GPU for the Kolmogorov-flow experiment. These costs are incurred once per trained model and can be amortized over a large number of future assimilation windows.

\paragraph{Per-window inference cost.}
After training, PR-Smoother performs smoothing on a new observation window with a single forward pass of the encoder and variational networks plus rollout of the physical model. In contrast, 4D-Var and IEnKS solve an optimization for each window separately. Table~\ref{tab:runtime-per-window} reports mean wall-clock time per assimilation window for PR-Smoother and for the 4D-Var and IEnKS baselines in representative settings.

\begin{table}[ht]
    \centering
    \caption{Wall-clock time per assimilation window (seconds). 
    Measurements use batch size $1$ and are averaged for five runs. PR-Smoother uses the trained models from Sec.~\ref{sec:experiments}.}
    \label{tab:runtime-per-window}
    \begin{tabular}{llc}
        \toprule
        Experiment & Method & Time / window (s) \\
        \midrule
        40D Lorenz-96, linear obs. 
            & PR-Smoother (Flow) & 0.37 \\
            & 4D-Var                  & 507.02 \\
            & IEnKS (512 members) & 14.53 \\
        \midrule
        40D Lorenz-96, nonlinear obs. 
            & PR-Smoother (Flow) & 0.38 \\
            & 4D-Var                  & 1480.55 \\
            & IEnKS (512 members) & 14.86 \\
        \midrule
        Kolmogorov flow, mixed coverage 
            & PR-Smoother (Flow) & 1.89 \\
            & 4D-Var (oracle)                        & 121.24 \\
            & IEnKS (oracle, 16 members, $r_{loc}$=16) & 1779.16 \\
        \bottomrule
    \end{tabular}
\end{table}

In all cases, the offline training cost is paid once, while the amortized per-window cost of PR-Smoother remains fixed as the number of assimilation windows grows, unlike the non-amortized 4D-Var and IEnKS baselines that must solve a new optimization or smoother problem for each window.


\section{Inference Examples in the Lorenz-96, 4D Experiment}
Here we show histograms for inference results obtained in Lorenz96, 4D experiment.
\label{sec:Lorenz96, inference examples, appendix}

\begin{figure}
    \centering
    \includegraphics[width=0.24\columnwidth]{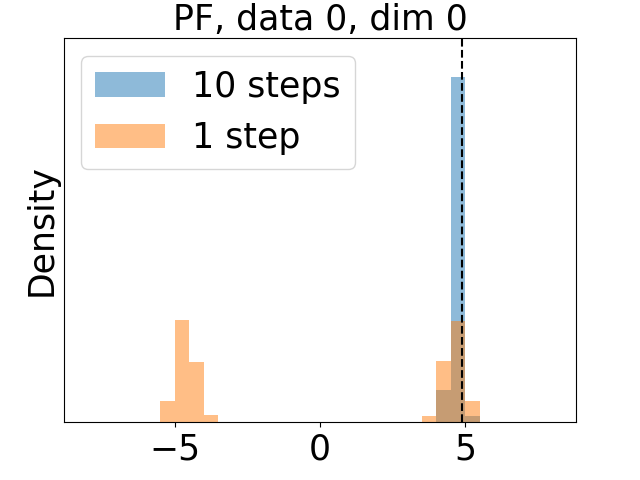}
    \includegraphics[width=0.24\columnwidth]{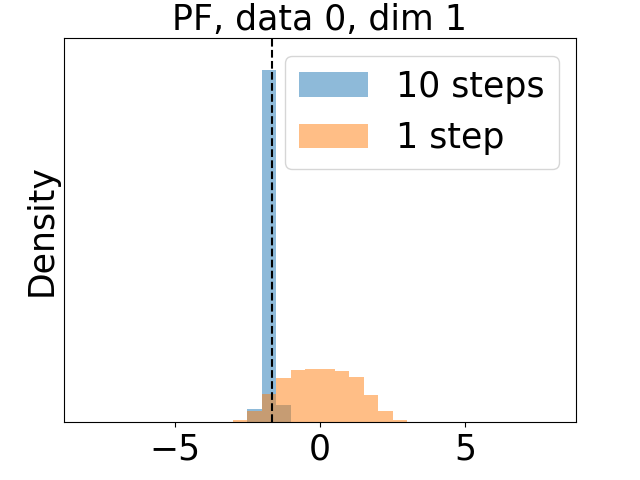}
    \includegraphics[width=0.24\columnwidth]{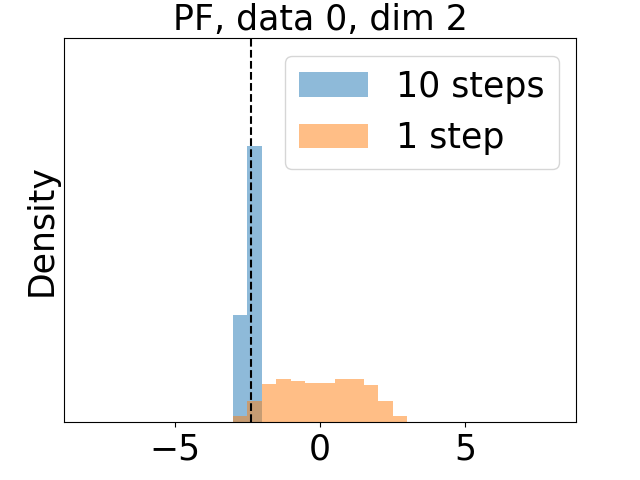}
    \includegraphics[width=0.24\columnwidth]{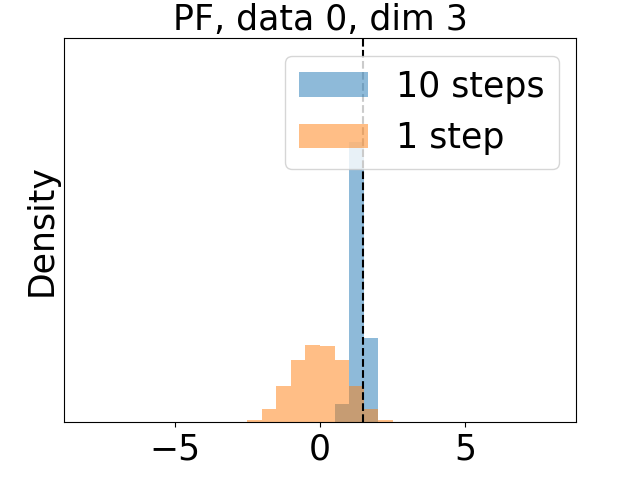}
    \includegraphics[width=0.24\columnwidth]{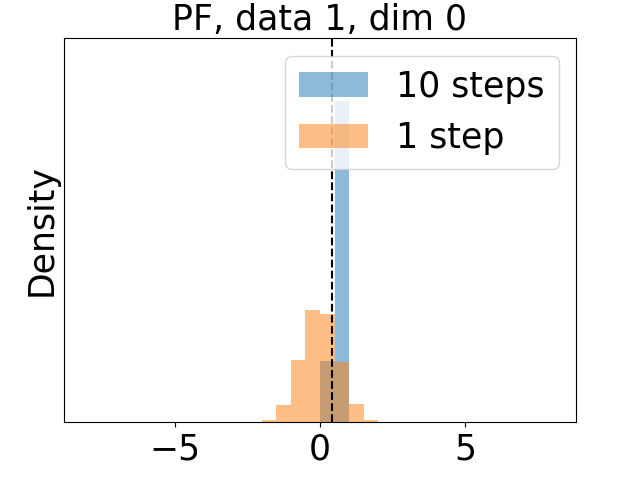}
    \includegraphics[width=0.24\columnwidth]{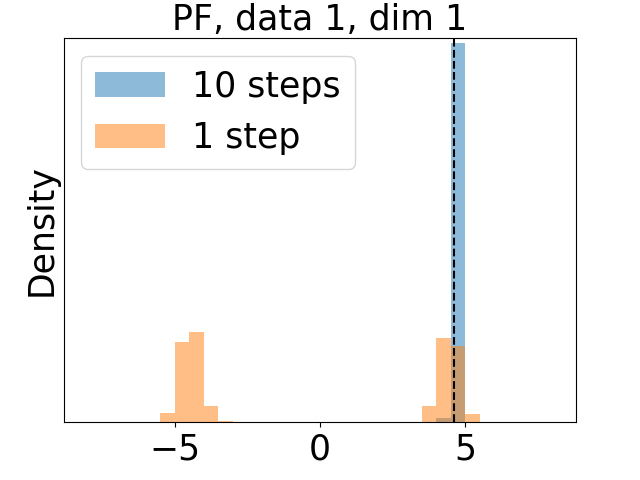}
    \includegraphics[width=0.24\columnwidth]{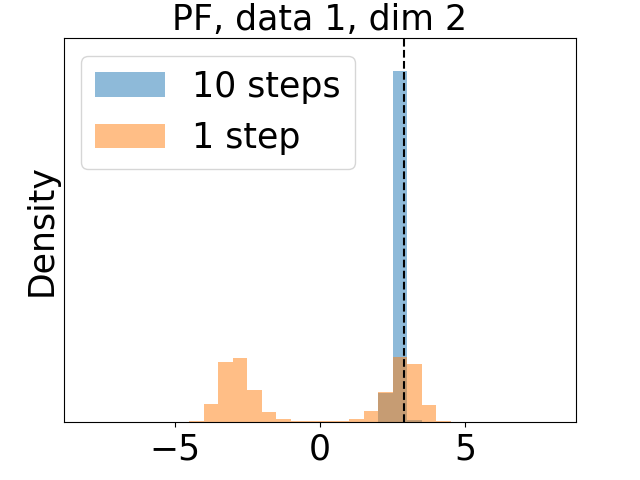}
    \includegraphics[width=0.24\columnwidth]{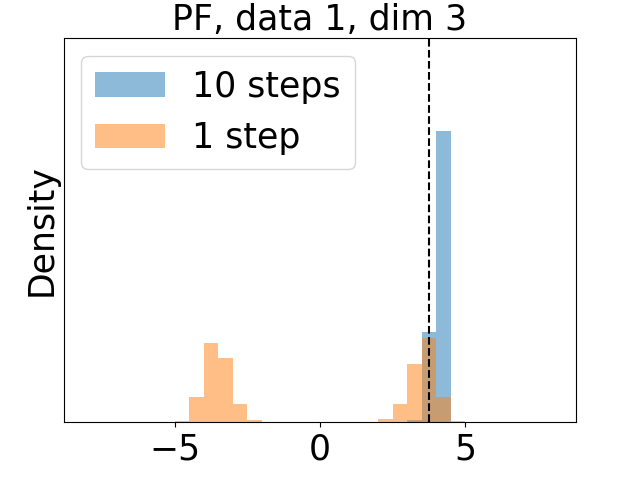}
    \includegraphics[width=0.24\columnwidth]{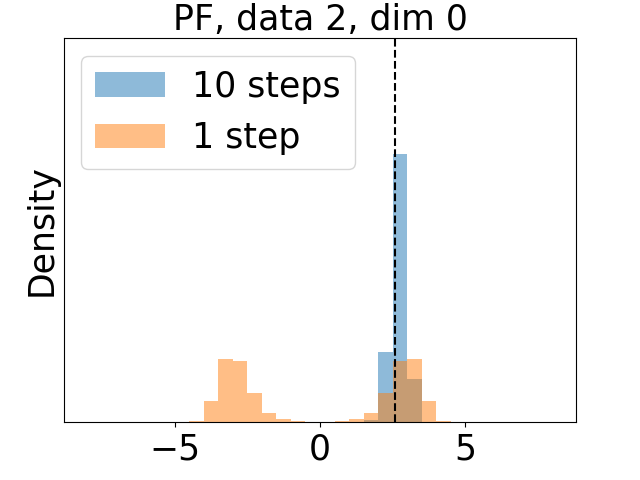}
    \includegraphics[width=0.24\columnwidth]{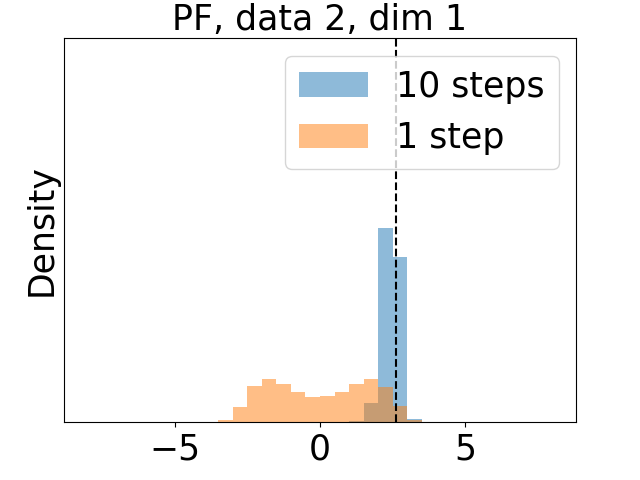}
    \includegraphics[width=0.24\columnwidth]{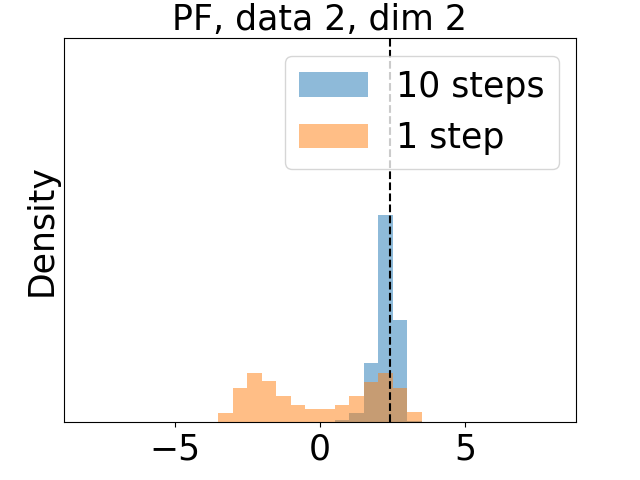}
    \includegraphics[width=0.24\columnwidth]{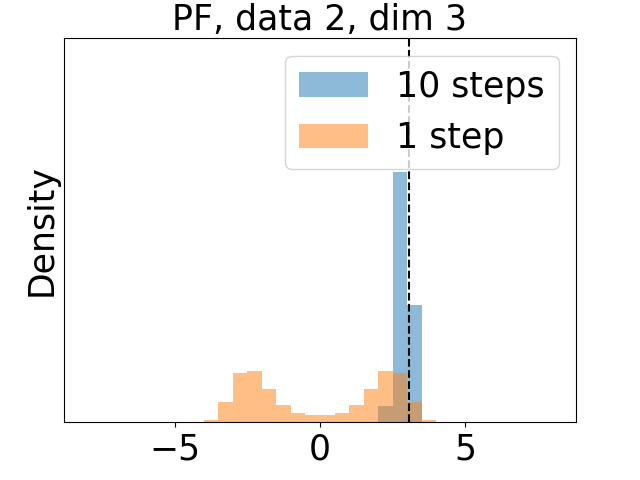}
    \includegraphics[width=0.24\columnwidth]{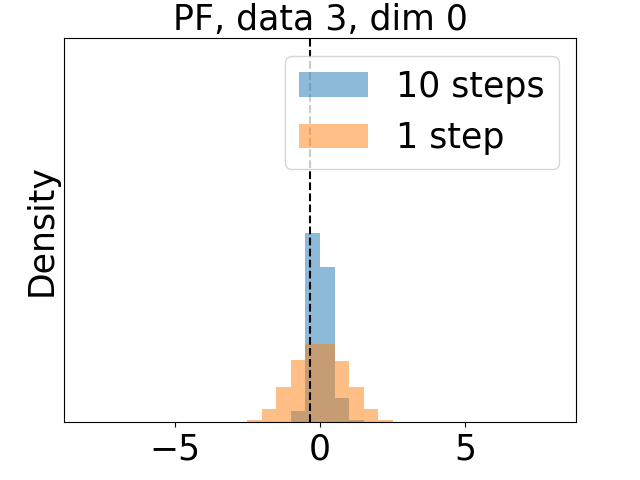}
    \includegraphics[width=0.24\columnwidth]{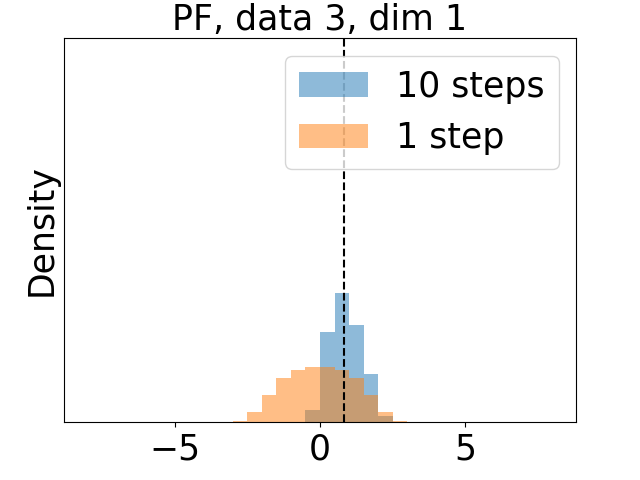}
    \includegraphics[width=0.24\columnwidth]{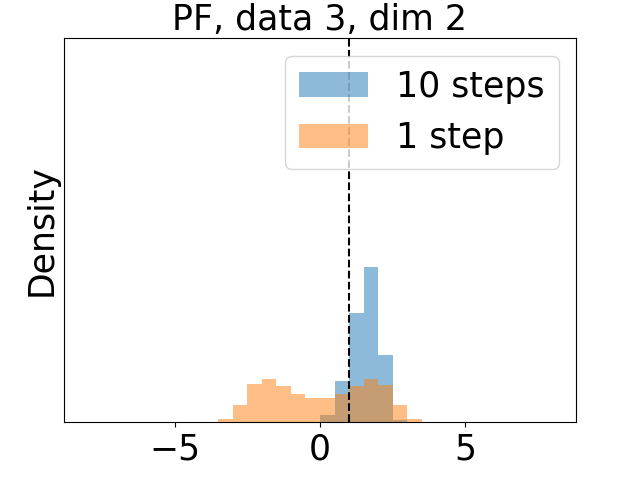}
    \includegraphics[width=0.24\columnwidth]{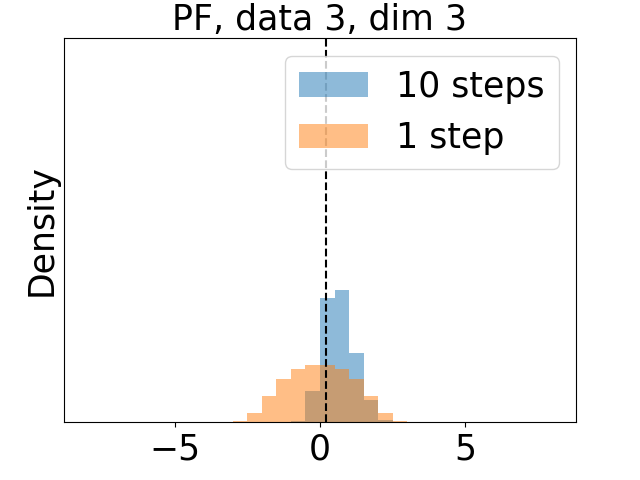}
    \includegraphics[width=0.24\columnwidth]{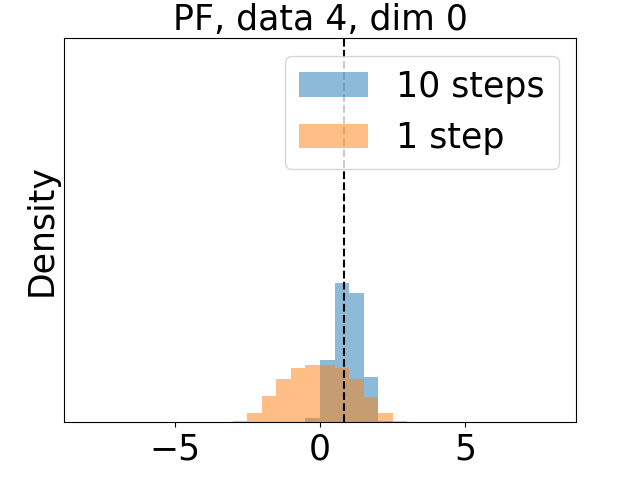}
    \includegraphics[width=0.24\columnwidth]{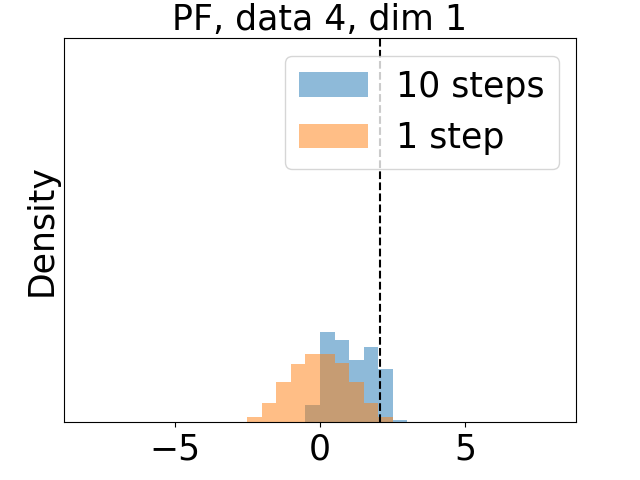}
    \includegraphics[width=0.24\columnwidth]{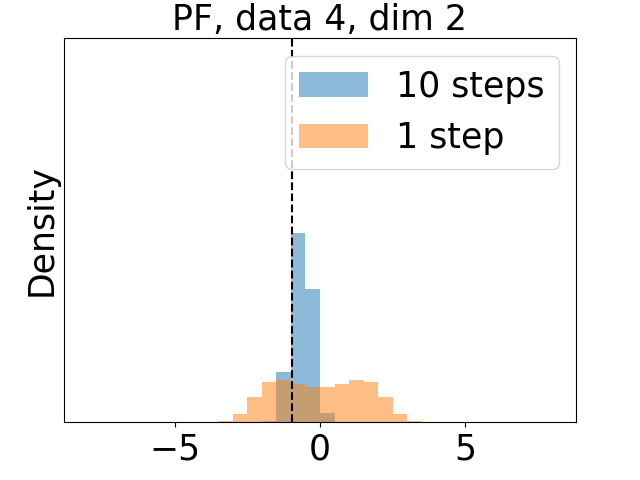}
    \includegraphics[width=0.24\columnwidth]{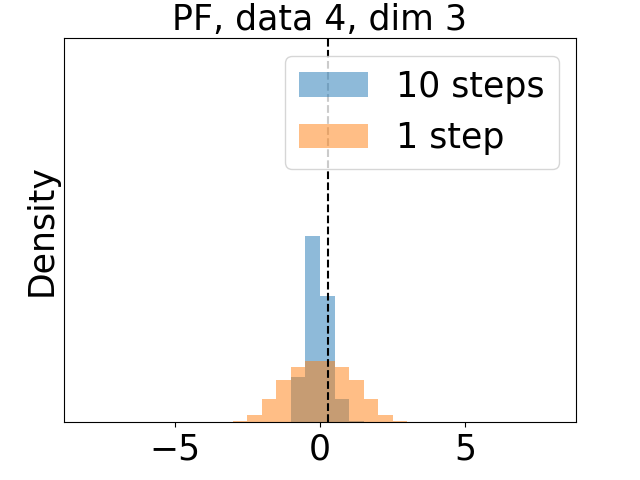}
    \includegraphics[width=0.24\columnwidth]{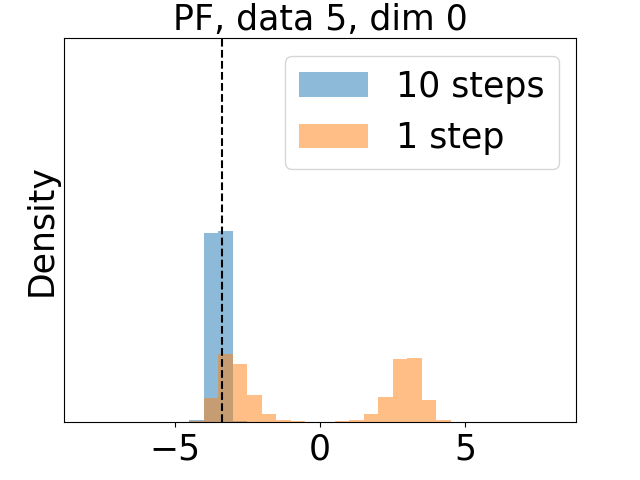}
    \includegraphics[width=0.24\columnwidth]{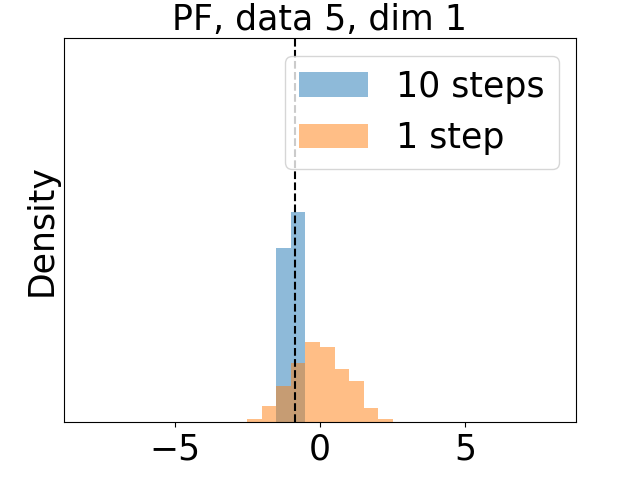}
    \includegraphics[width=0.24\columnwidth]{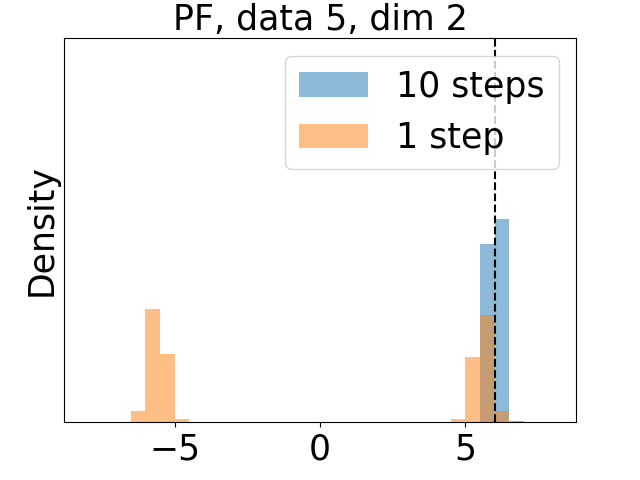}
    \includegraphics[width=0.24\columnwidth]{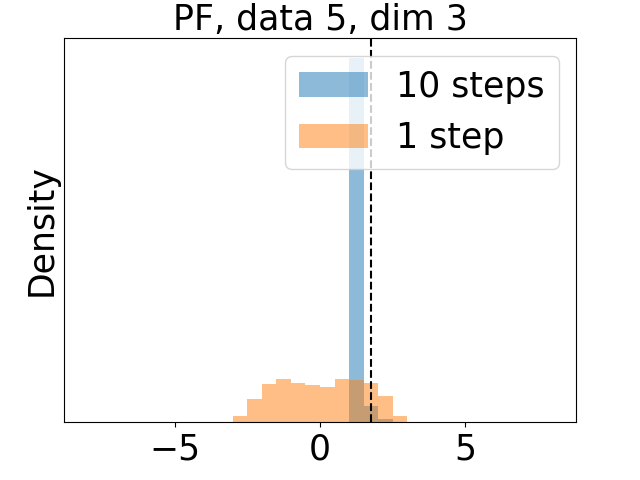}
    \caption{Histograms for the PF model.}
    \label{fig:PF_inference_example}
\end{figure}

\begin{figure}[htbp]
    \centering
    \includegraphics[width=0.24\columnwidth]{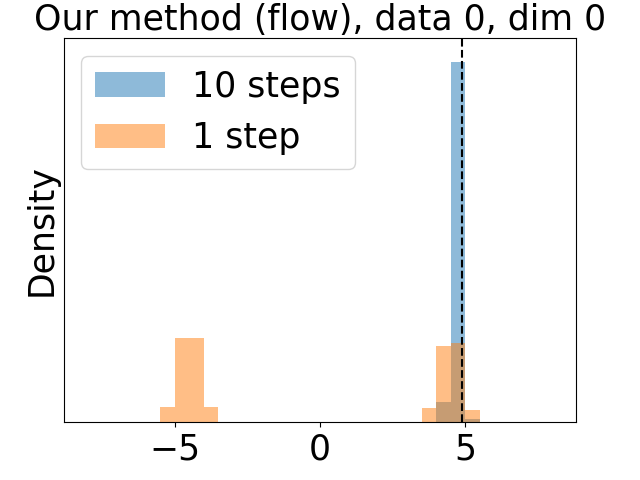}
    \includegraphics[width=0.24\columnwidth]{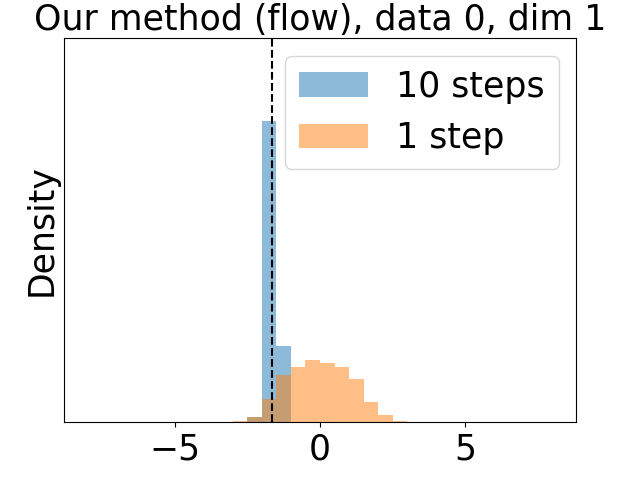}
    \includegraphics[width=0.24\columnwidth]{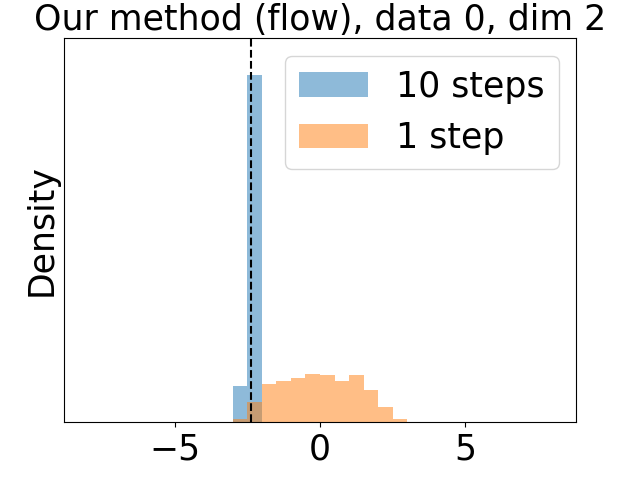}
    \includegraphics[width=0.24\columnwidth]{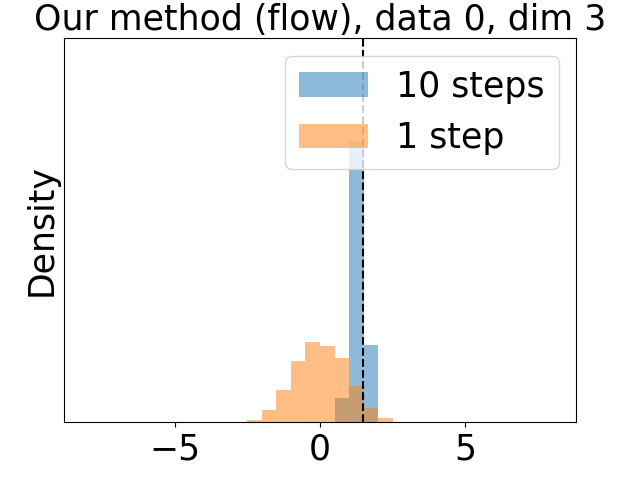}
    \includegraphics[width=0.24\columnwidth]{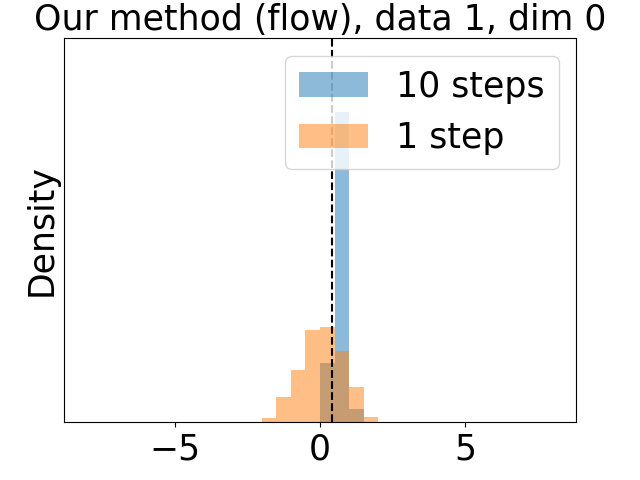}
    \includegraphics[width=0.24\columnwidth]{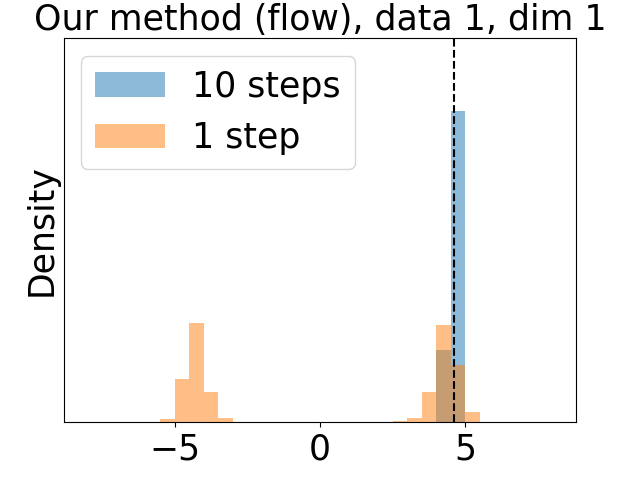}
    \includegraphics[width=0.24\columnwidth]{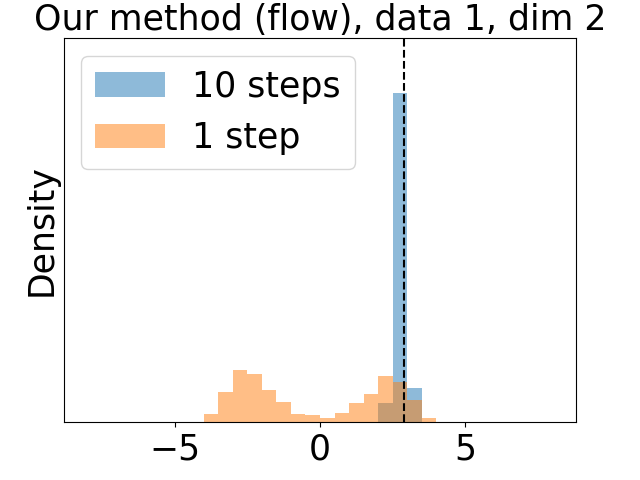}
    \includegraphics[width=0.24\columnwidth]{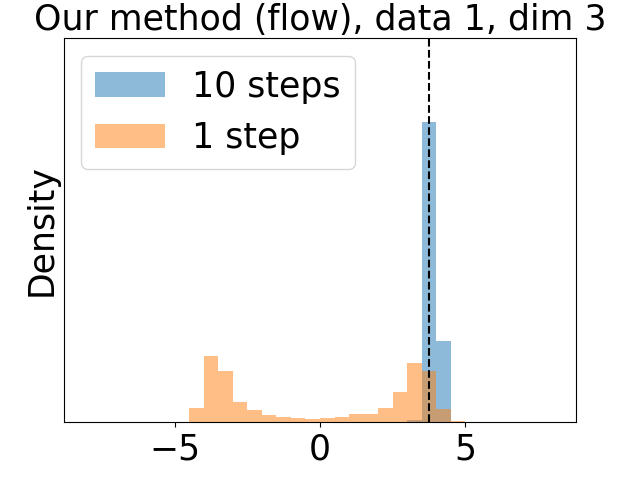}
    \includegraphics[width=0.24\columnwidth]{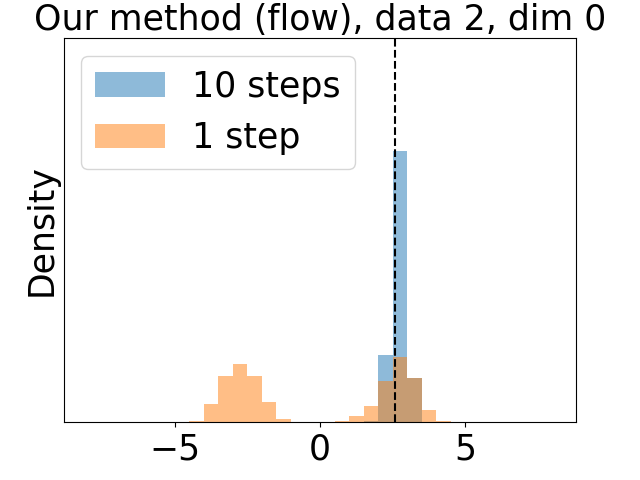}
    \includegraphics[width=0.24\columnwidth]{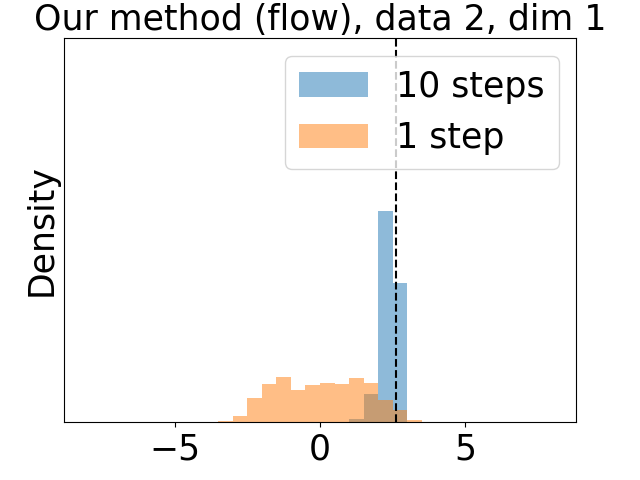}
    \includegraphics[width=0.24\columnwidth]{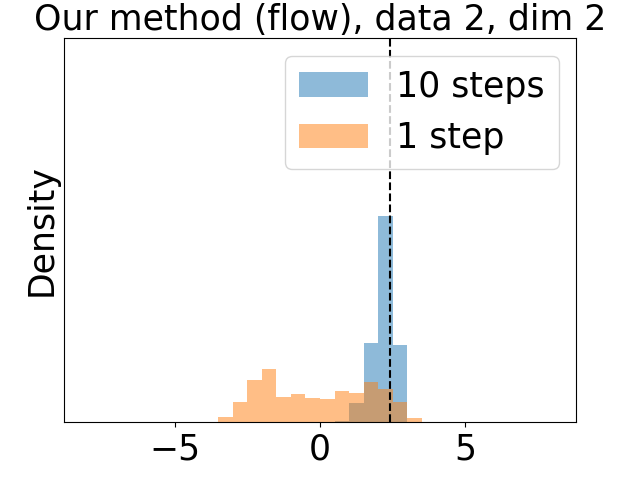}
    \includegraphics[width=0.24\columnwidth]{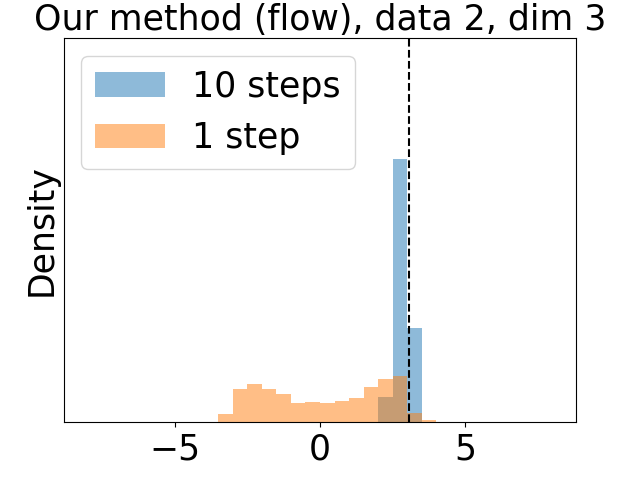}
    \includegraphics[width=0.24\columnwidth]{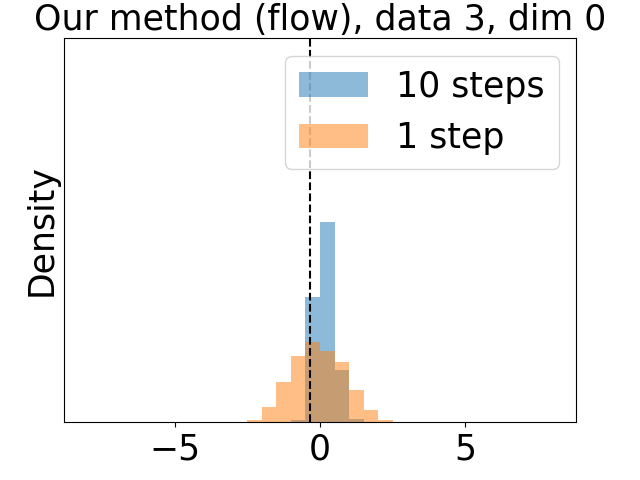}
    \includegraphics[width=0.24\columnwidth]{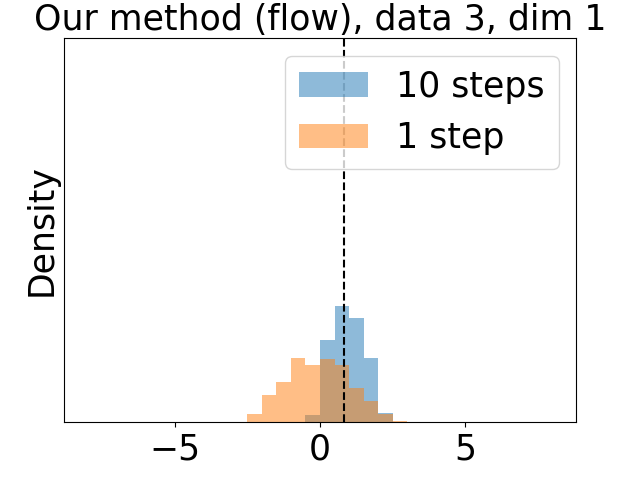}
    \includegraphics[width=0.24\columnwidth]{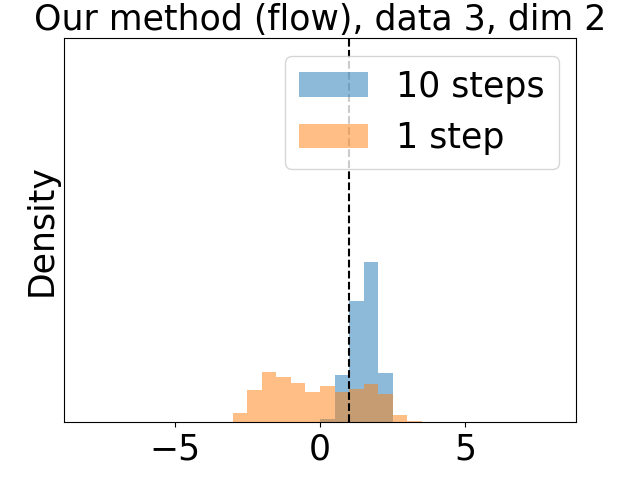}
    \includegraphics[width=0.24\columnwidth]{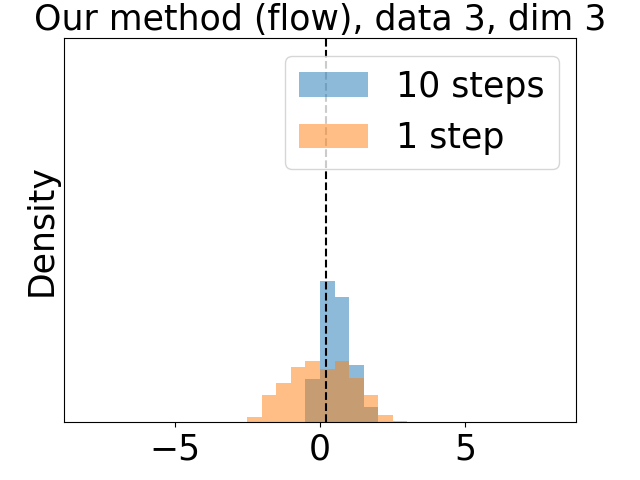}
    \includegraphics[width=0.24\columnwidth]{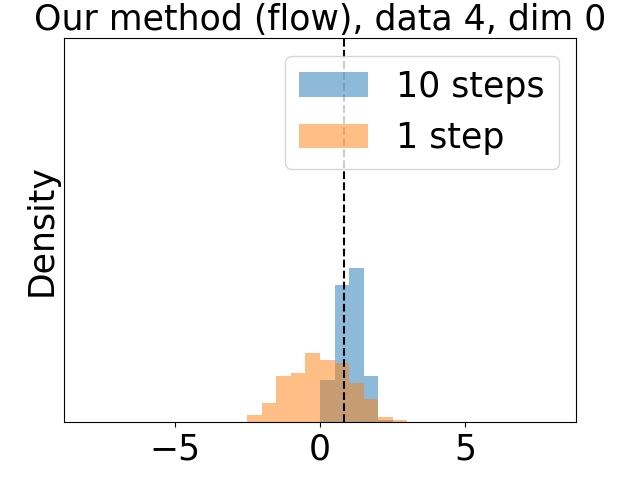}
    \includegraphics[width=0.24\columnwidth]{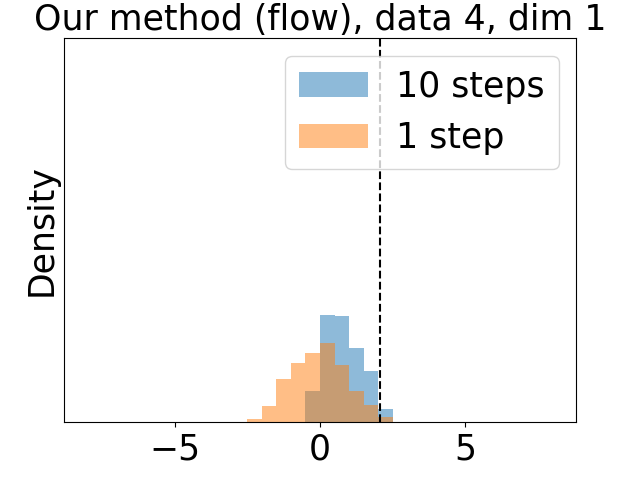}
    \includegraphics[width=0.24\columnwidth]{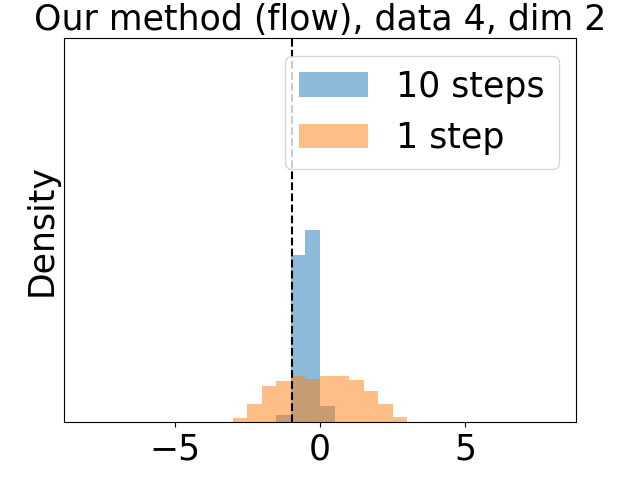}
    \includegraphics[width=0.24\columnwidth]{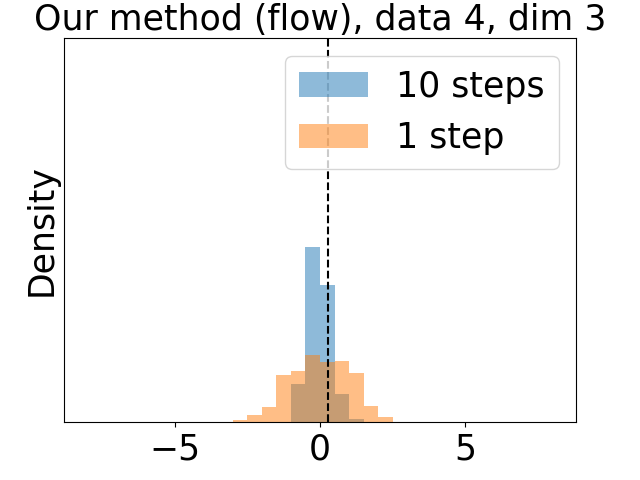}
    \includegraphics[width=0.24\columnwidth]{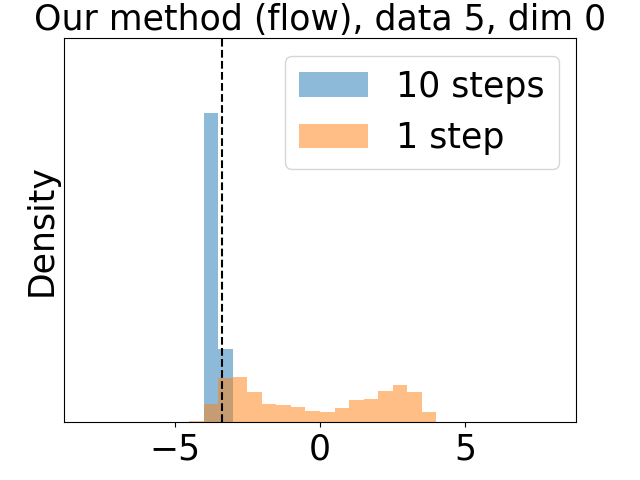}
    \includegraphics[width=0.24\columnwidth]{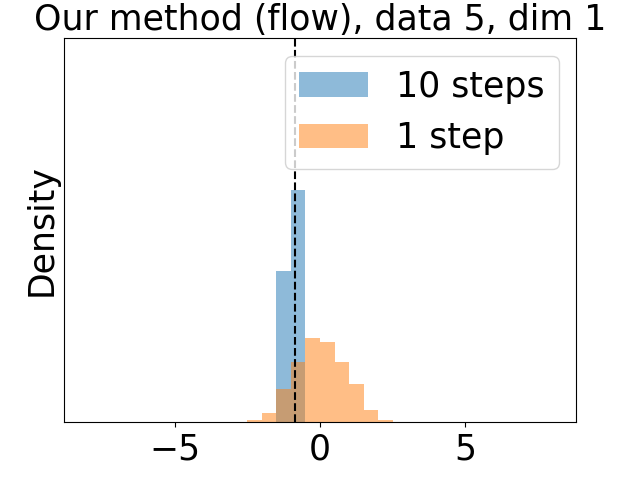}
    \includegraphics[width=0.24\columnwidth]{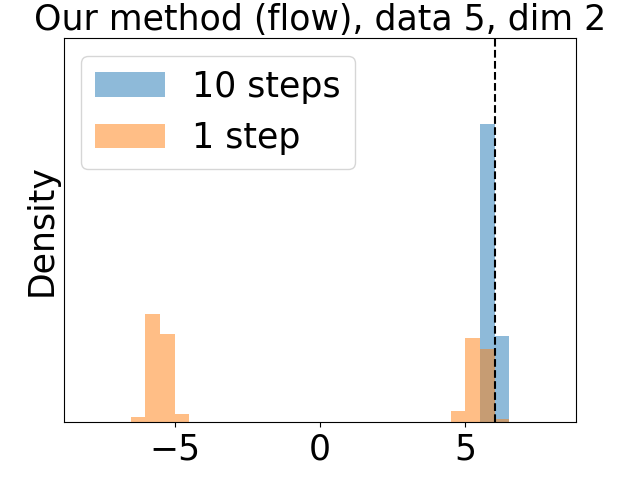}
    \includegraphics[width=0.24\columnwidth]{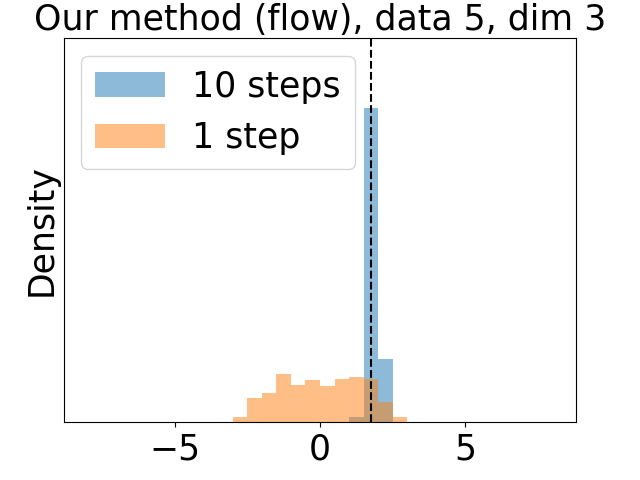}
    \caption{Histograms for the flow model.}
    \label{fig:flow_inference_example}
\end{figure}

\begin{figure}
    \centering
    \includegraphics[width=0.24\columnwidth]{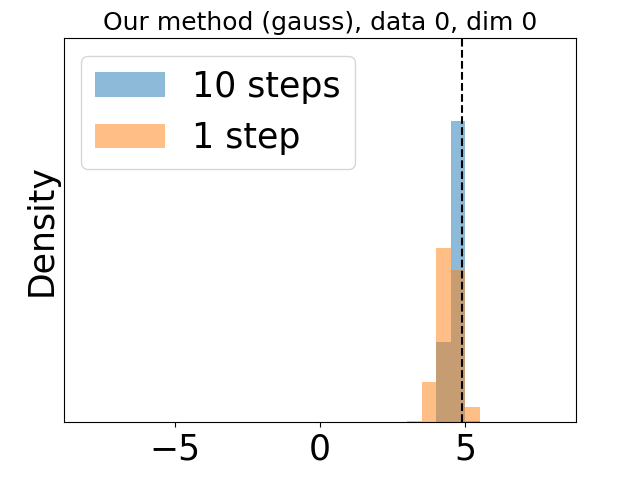}
    \includegraphics[width=0.24\columnwidth]{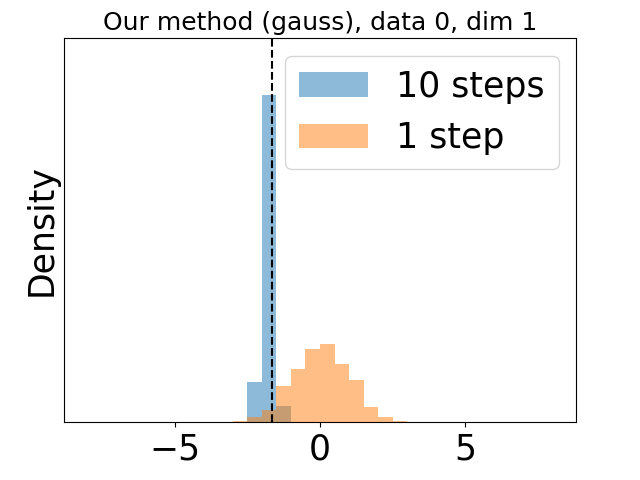}
    \includegraphics[width=0.24\columnwidth]{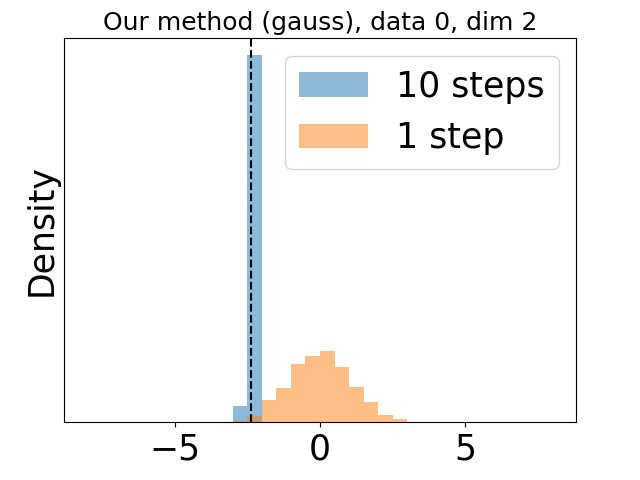}
    \includegraphics[width=0.24\columnwidth]{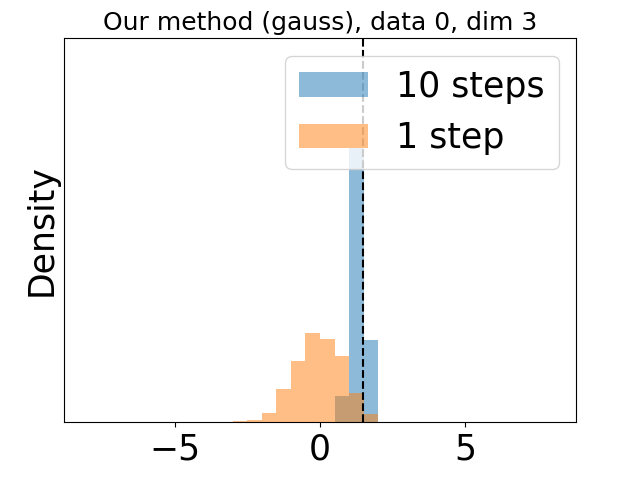}
    \includegraphics[width=0.24\columnwidth]{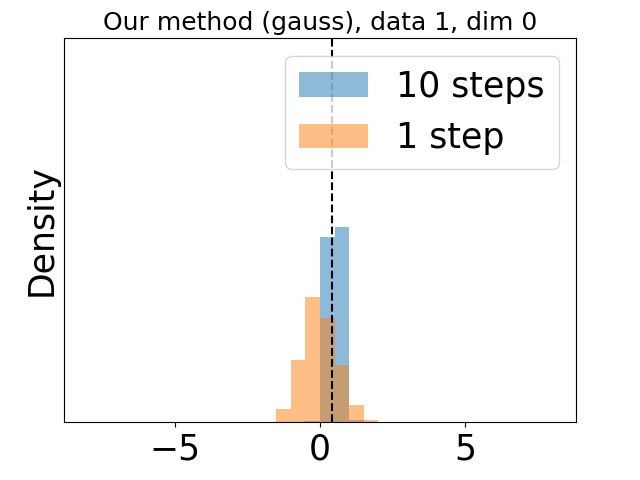}
    \includegraphics[width=0.24\columnwidth]{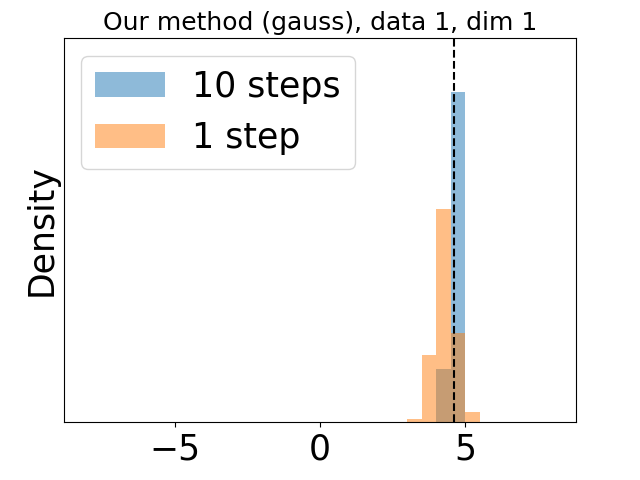}
    \includegraphics[width=0.24\columnwidth]{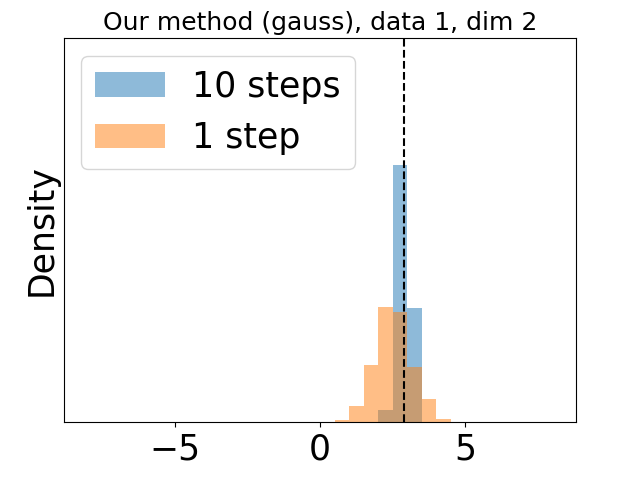}
    \includegraphics[width=0.24\columnwidth]{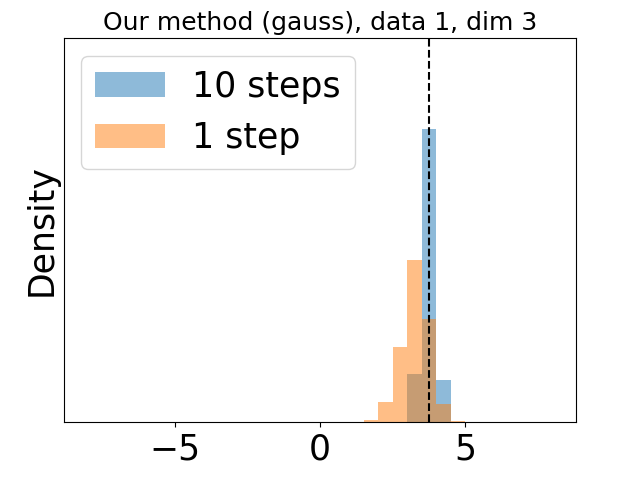}
    \includegraphics[width=0.24\columnwidth]{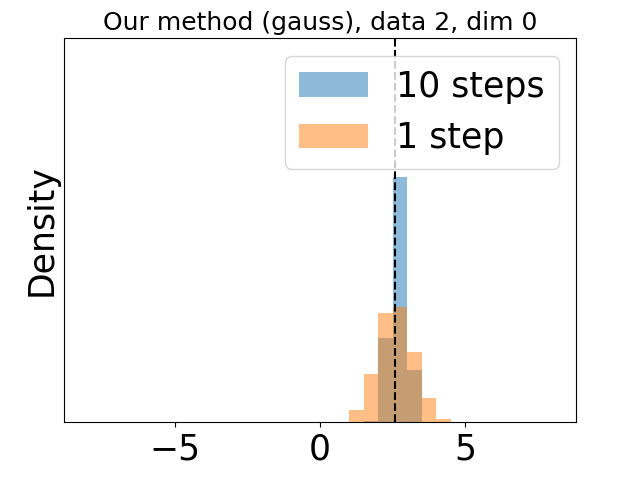}
    \includegraphics[width=0.24\columnwidth]{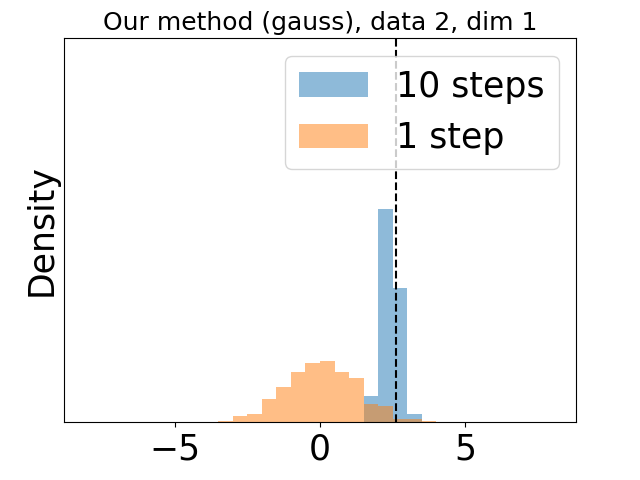}
    \includegraphics[width=0.24\columnwidth]{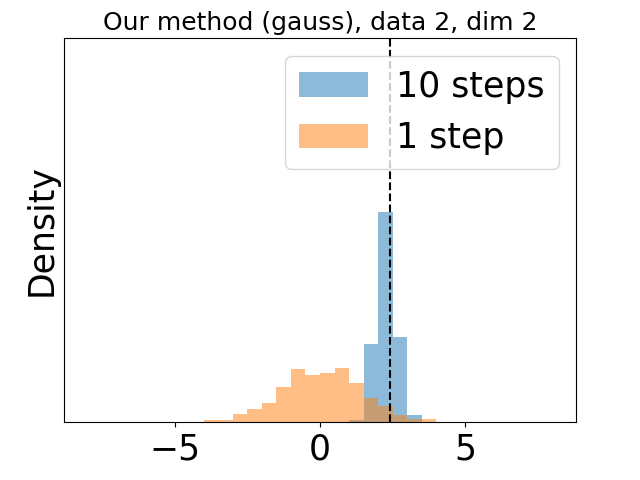}
    \includegraphics[width=0.24\columnwidth]{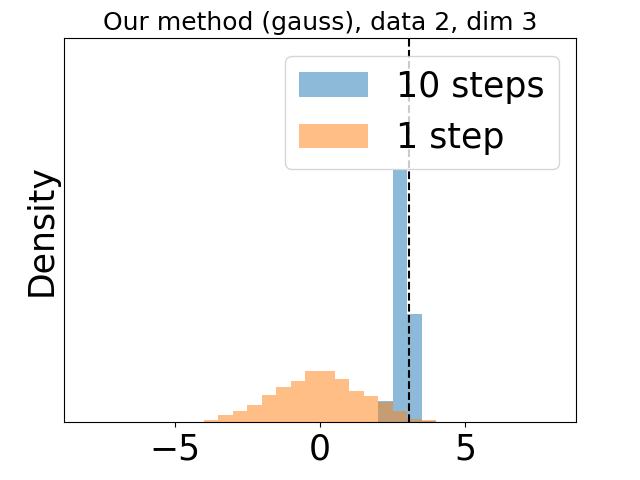}
    \includegraphics[width=0.24\columnwidth]{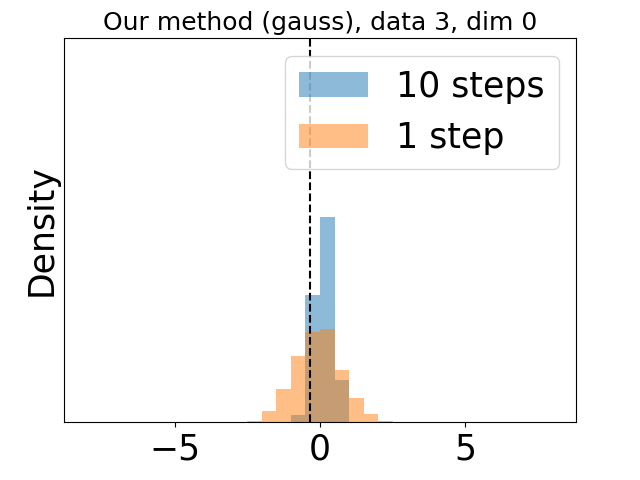}
    \includegraphics[width=0.24\columnwidth]{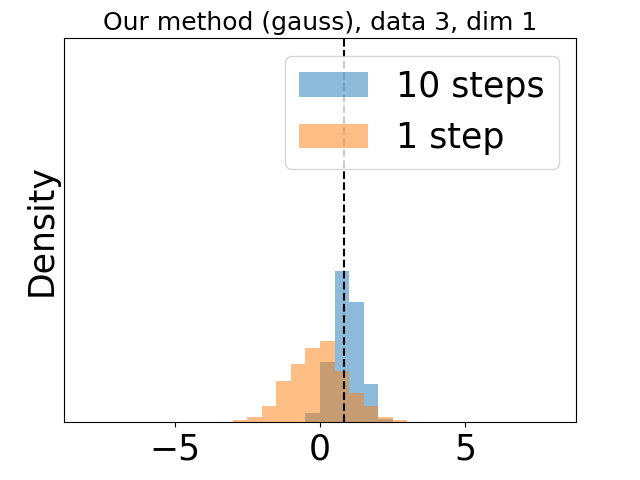}
    \includegraphics[width=0.24\columnwidth]{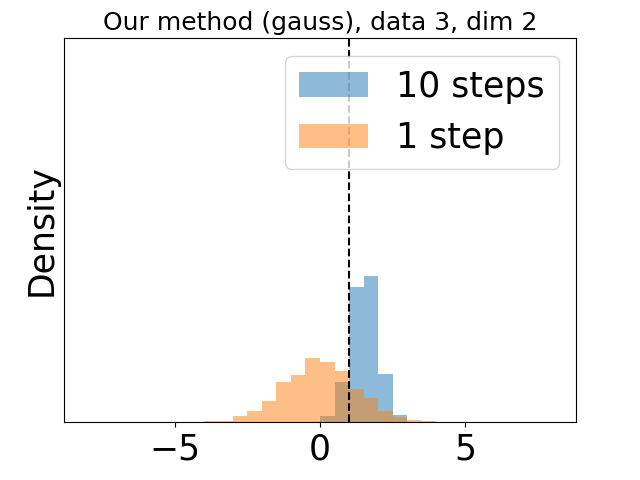}
    \includegraphics[width=0.24\columnwidth]{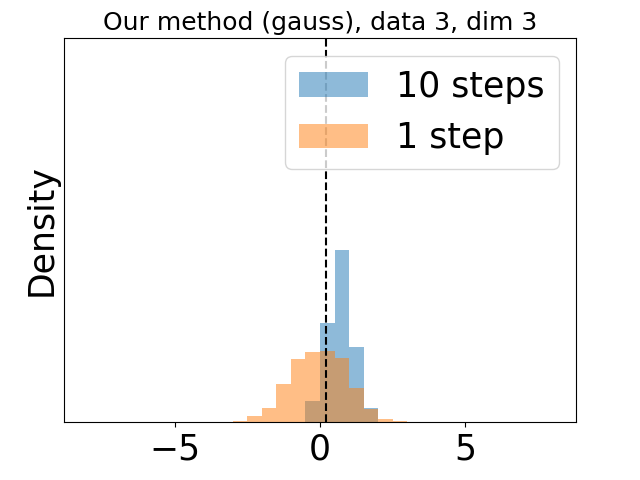}
    \includegraphics[width=0.24\columnwidth]{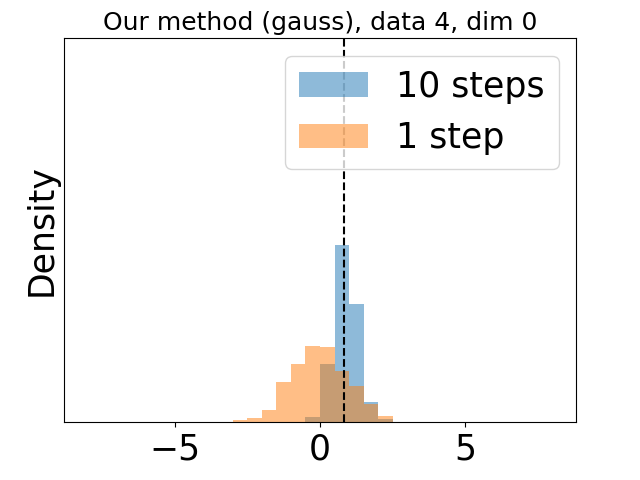}
    \includegraphics[width=0.24\columnwidth]{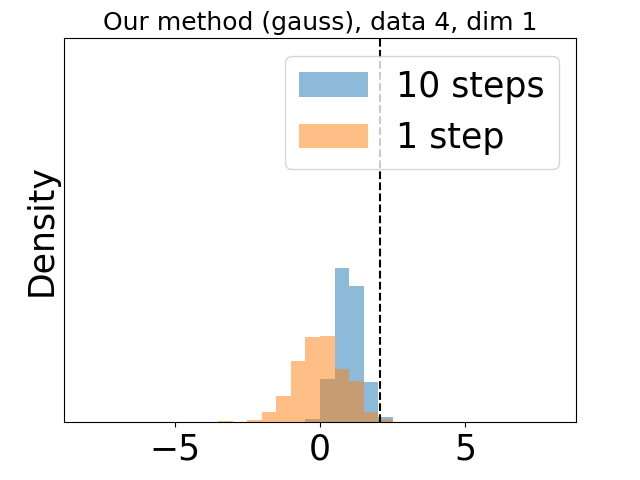}
    \includegraphics[width=0.24\columnwidth]{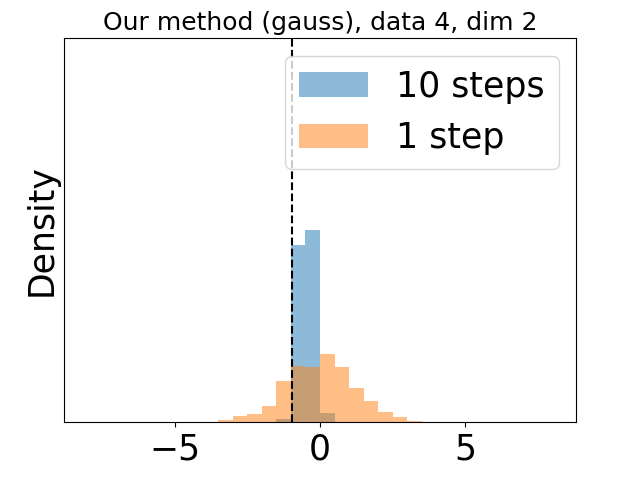}
    \includegraphics[width=0.24\columnwidth]{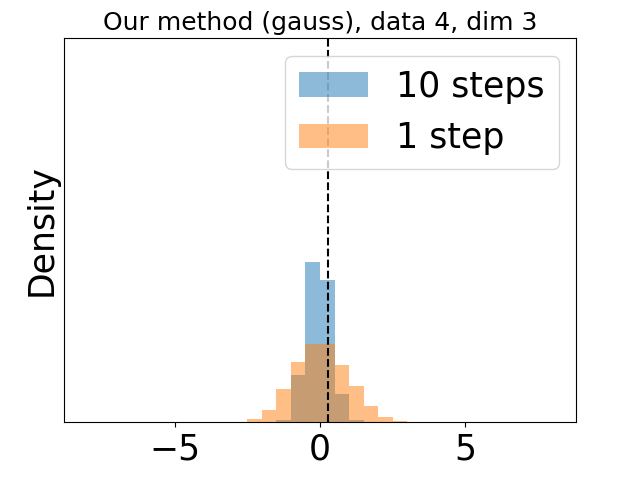}
    \includegraphics[width=0.24\columnwidth]{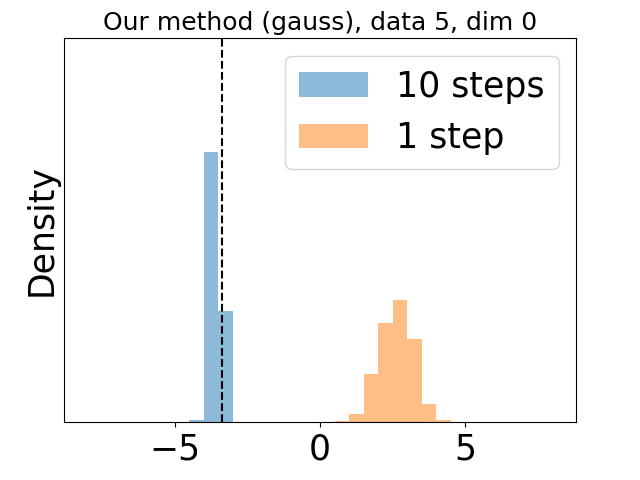}
    \includegraphics[width=0.24\columnwidth]{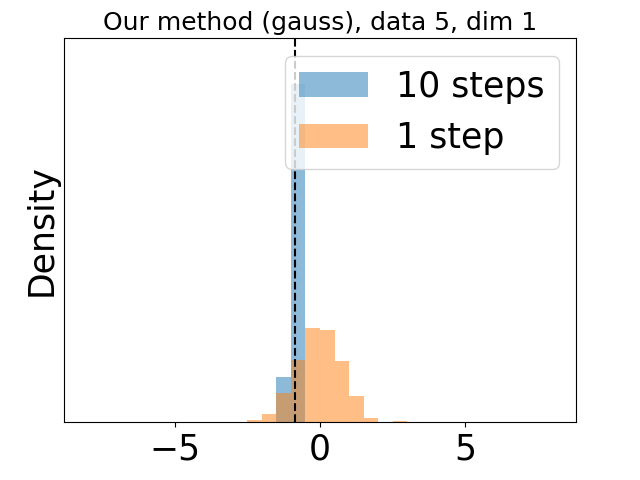}
    \includegraphics[width=0.24\columnwidth]{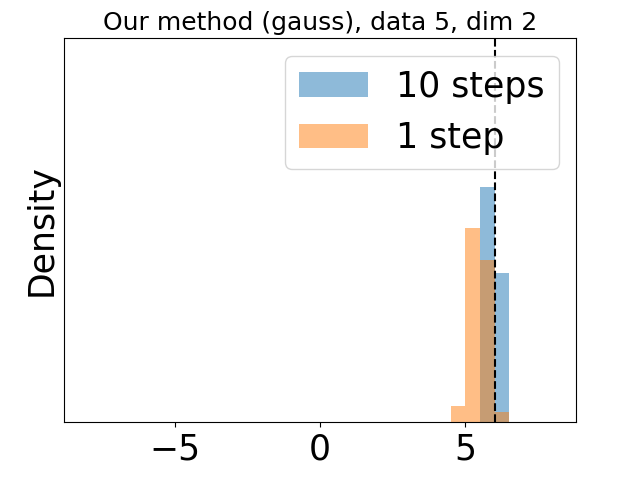}
    \includegraphics[width=0.24\columnwidth]{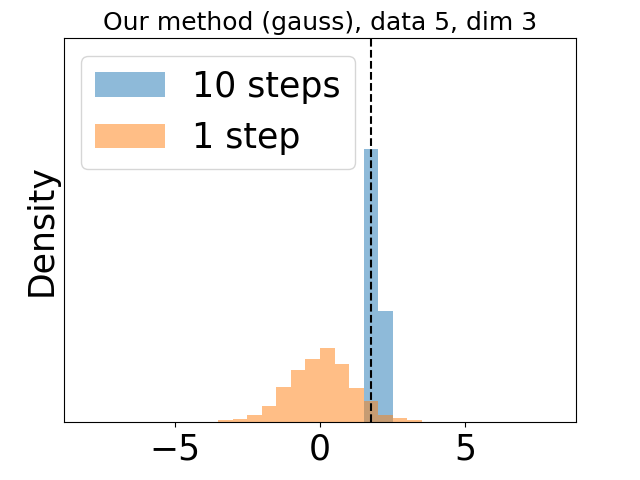}
    \caption{Histograms for the gauss model.}
    \label{fig:gauss_inference_example}
\end{figure}

\begin{figure}
    \centering
    \includegraphics[width=0.24\columnwidth]{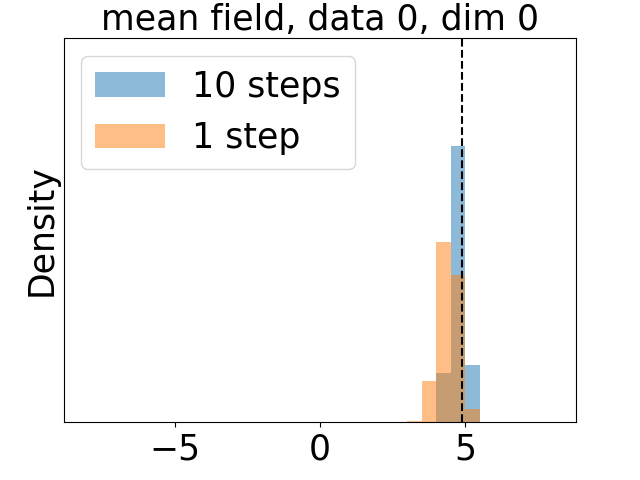}
    \includegraphics[width=0.24\columnwidth]{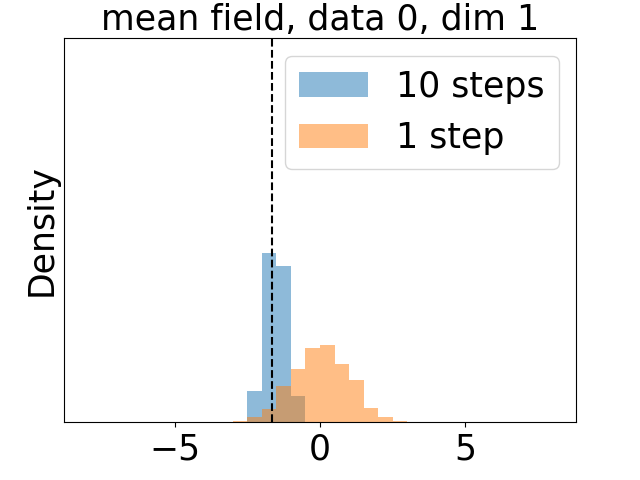}
    \includegraphics[width=0.24\columnwidth]{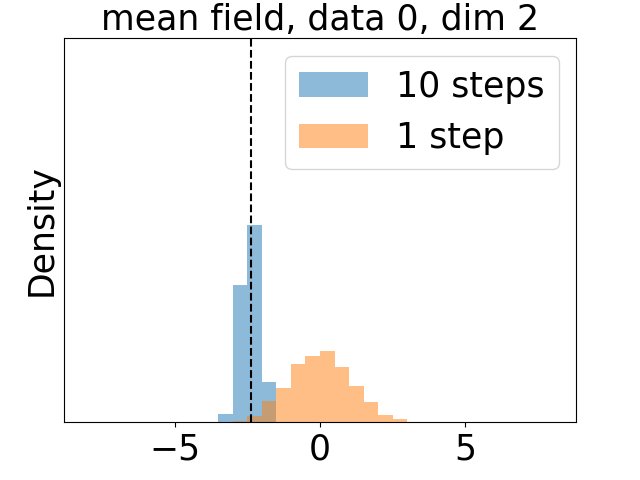}
    \includegraphics[width=0.24\columnwidth]{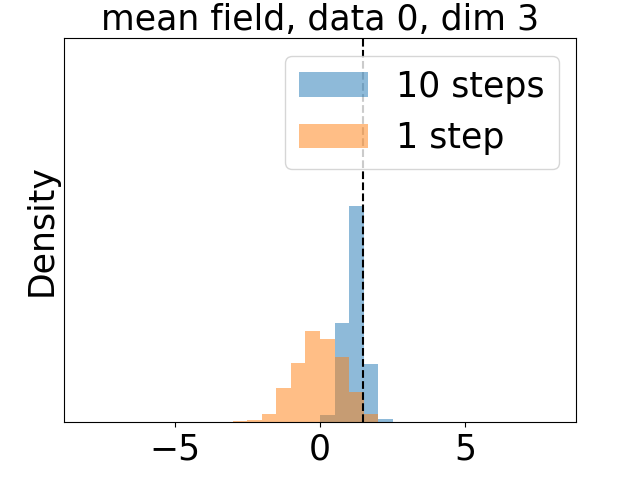}
    \includegraphics[width=0.24\columnwidth]{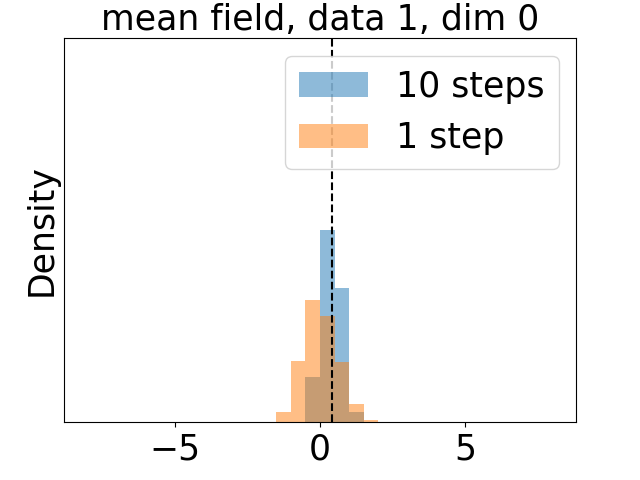}
    \includegraphics[width=0.24\columnwidth]{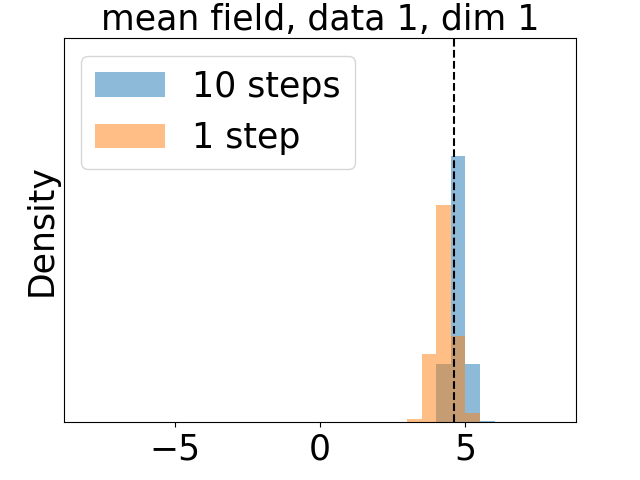}
    \includegraphics[width=0.24\columnwidth]{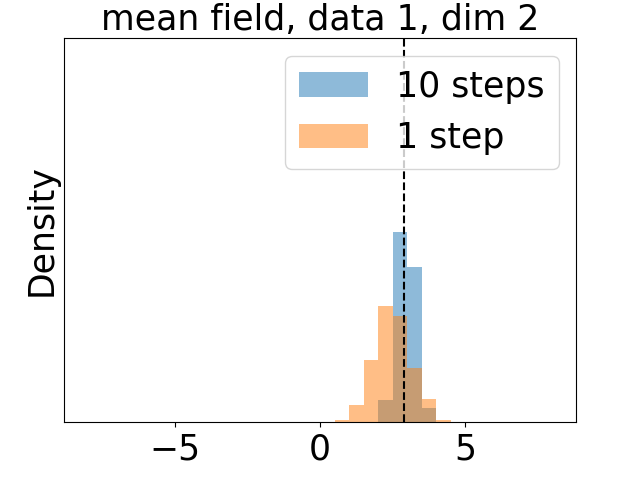}
    \includegraphics[width=0.24\columnwidth]{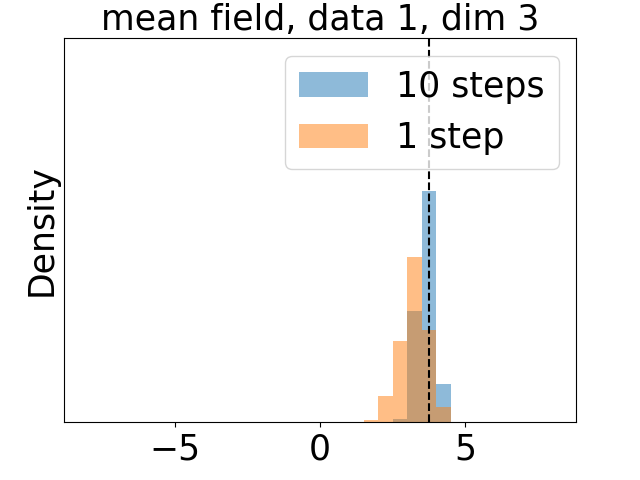}
    \includegraphics[width=0.24\columnwidth]{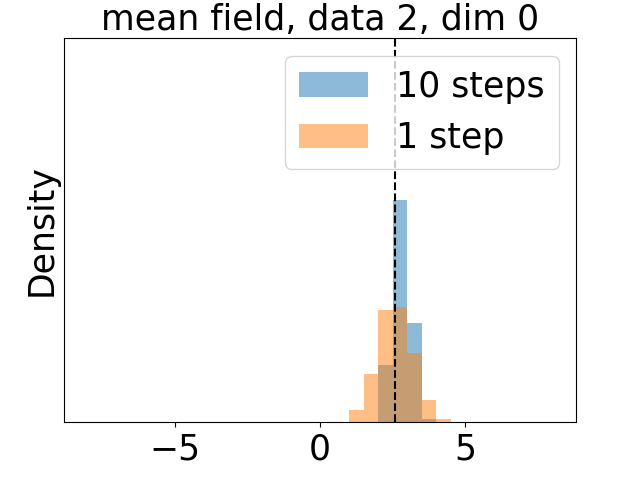}
    \includegraphics[width=0.24\columnwidth]{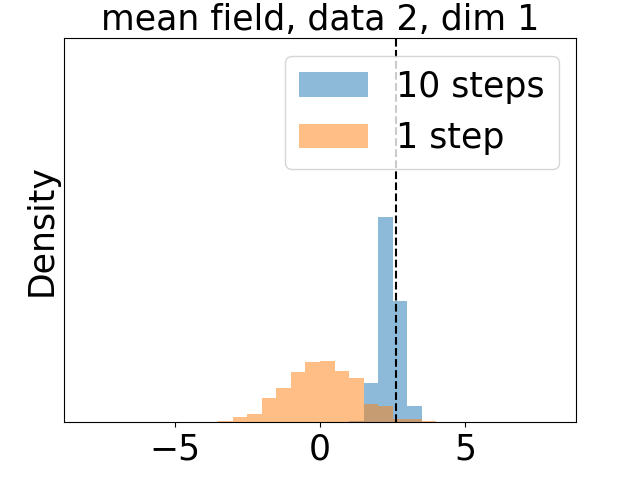}
    \includegraphics[width=0.24\columnwidth]{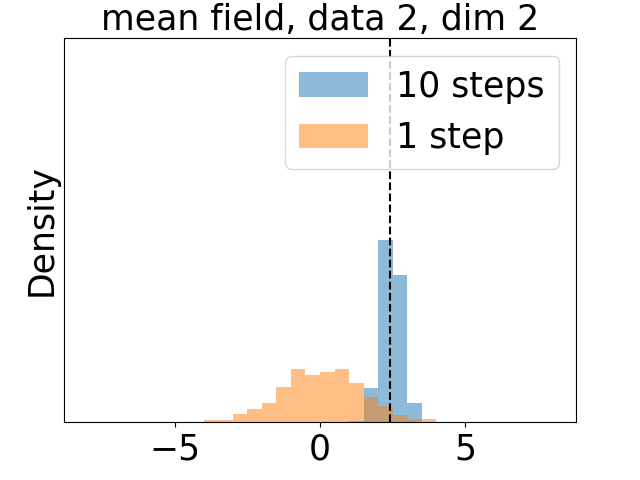}
    \includegraphics[width=0.24\columnwidth]{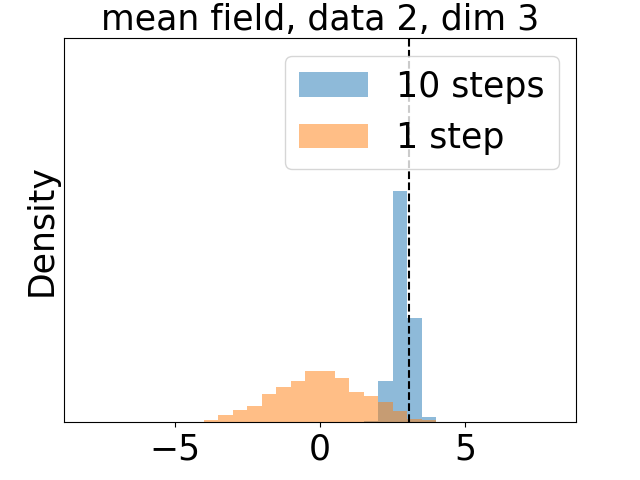}
    \includegraphics[width=0.24\columnwidth]{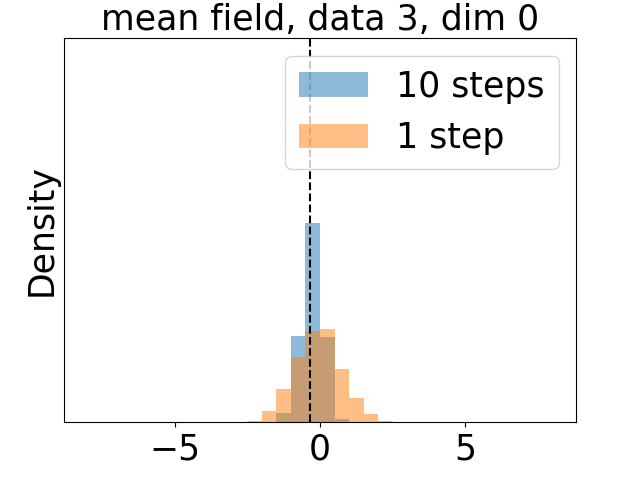}
    \includegraphics[width=0.24\columnwidth]{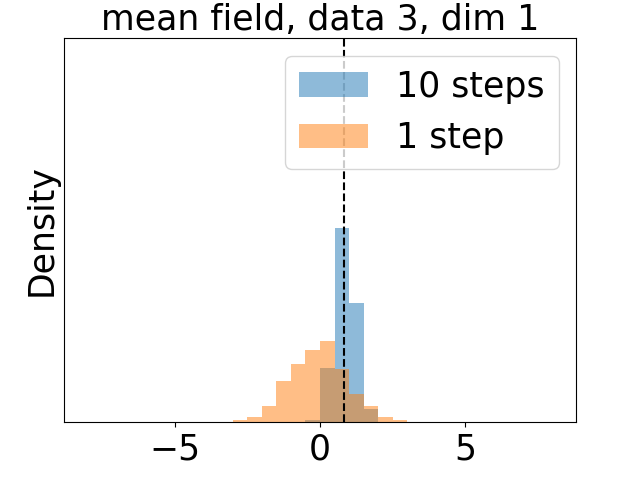}
    \includegraphics[width=0.24\columnwidth]{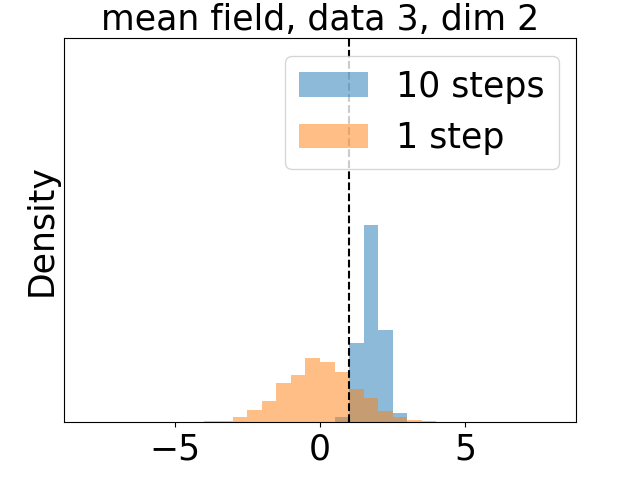}
    \includegraphics[width=0.24\columnwidth]{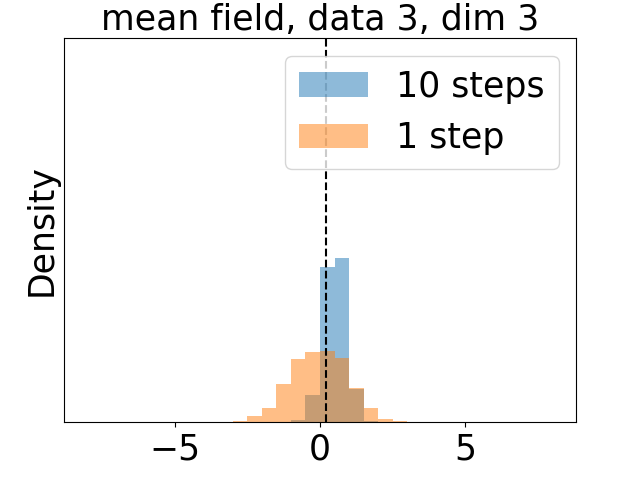}
    \includegraphics[width=0.24\columnwidth]{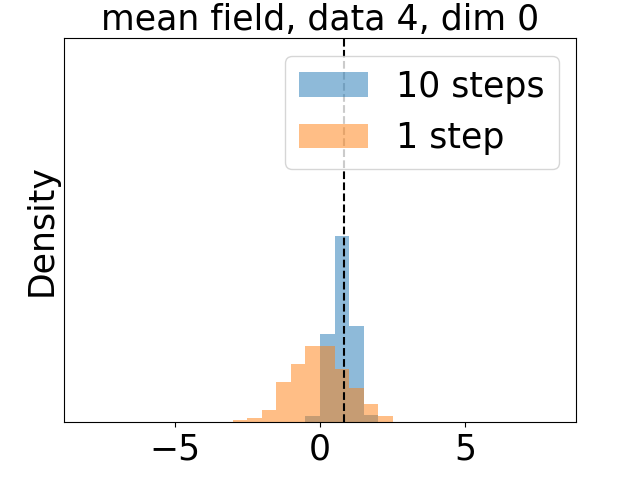}
    \includegraphics[width=0.24\columnwidth]{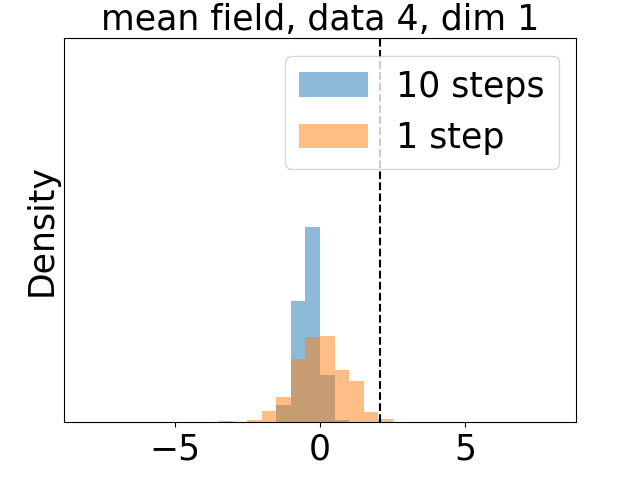}
    \includegraphics[width=0.24\columnwidth]{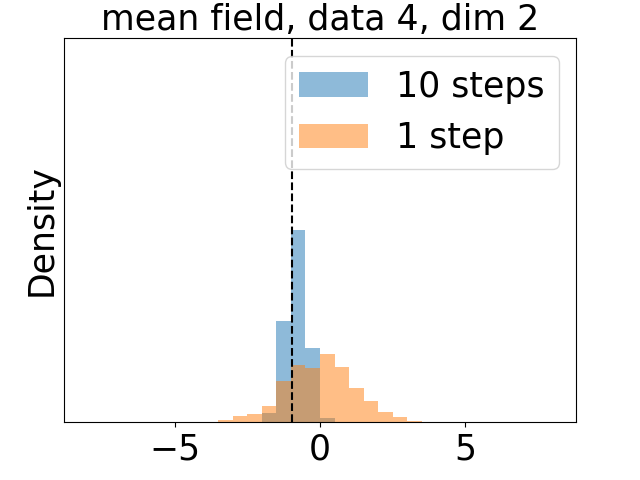}
    \includegraphics[width=0.24\columnwidth]{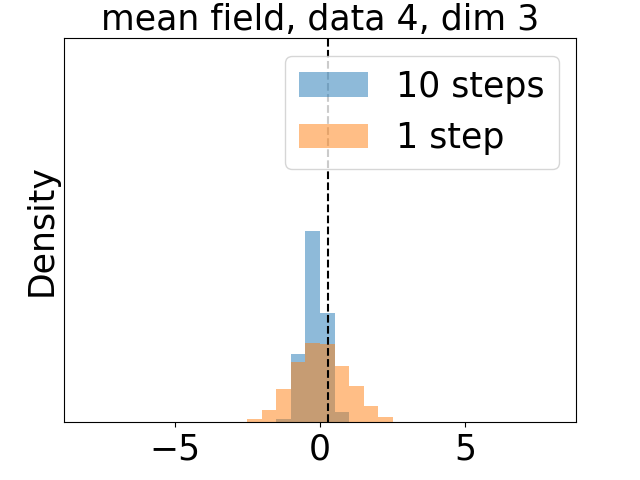}
    \includegraphics[width=0.24\columnwidth]{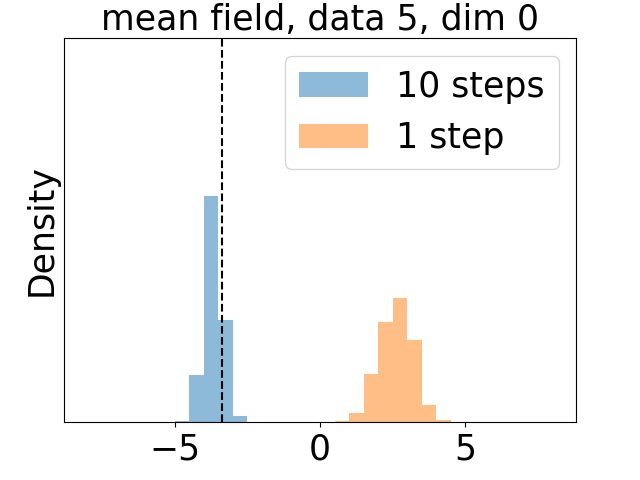}
    \includegraphics[width=0.24\columnwidth]{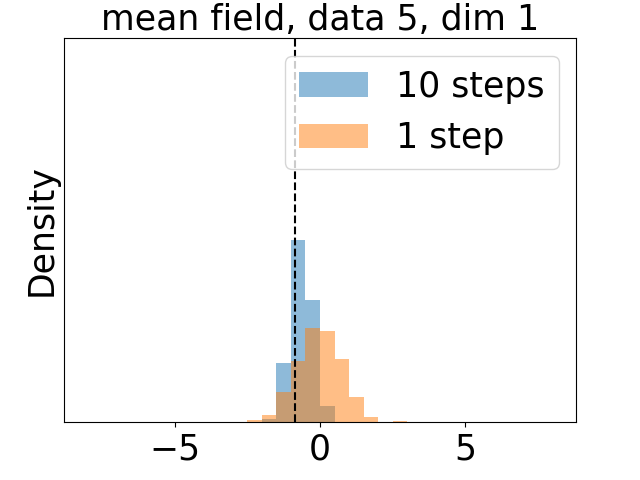}
    \includegraphics[width=0.24\columnwidth]{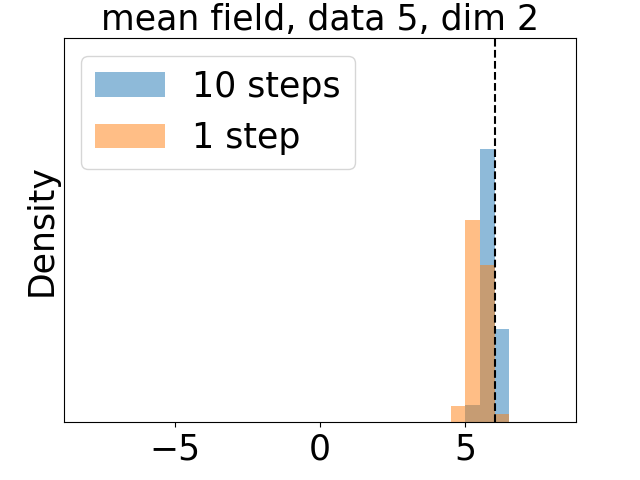}
    \includegraphics[width=0.24\columnwidth]{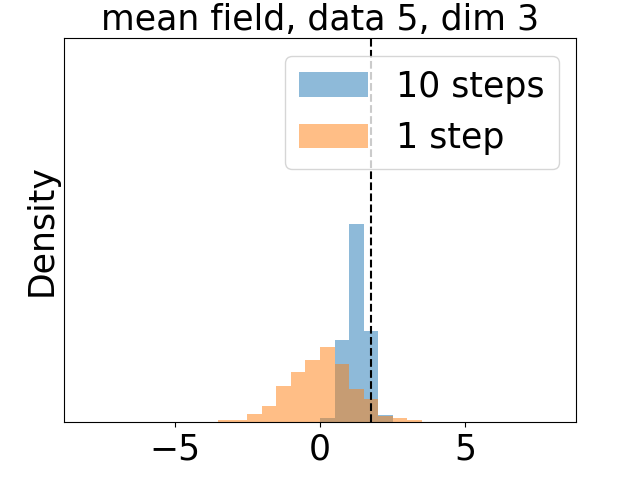}
    \caption{Histograms for the mean-field model.}
    \label{fig:MF_inference_example}
\end{figure}

\begin{figure}
    \centering
    \includegraphics[width=0.24\columnwidth]{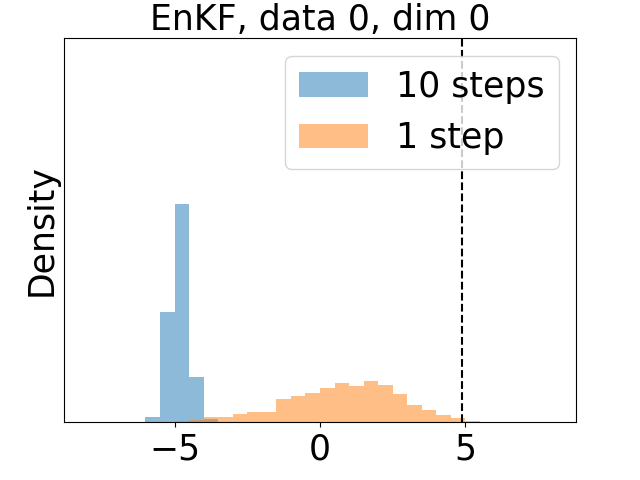}
    \includegraphics[width=0.24\columnwidth]{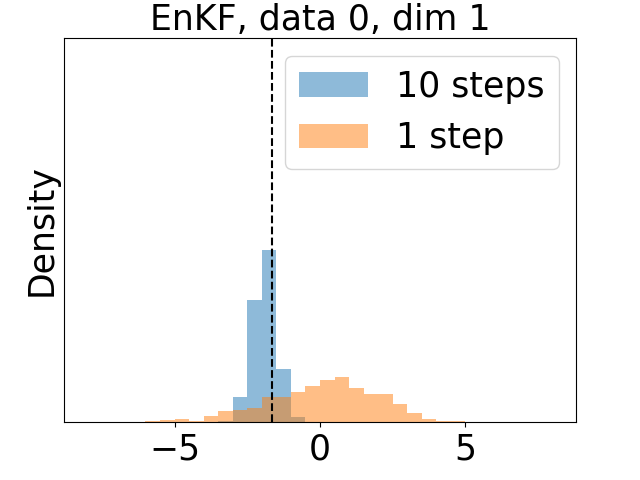}
    \includegraphics[width=0.24\columnwidth]{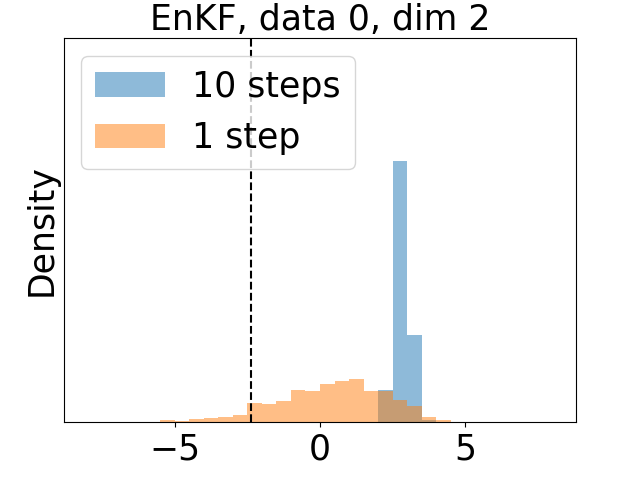}
    \includegraphics[width=0.24\columnwidth]{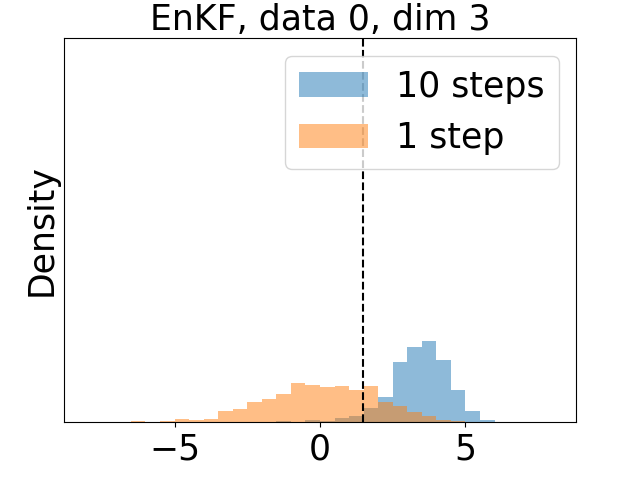}
    \includegraphics[width=0.24\columnwidth]{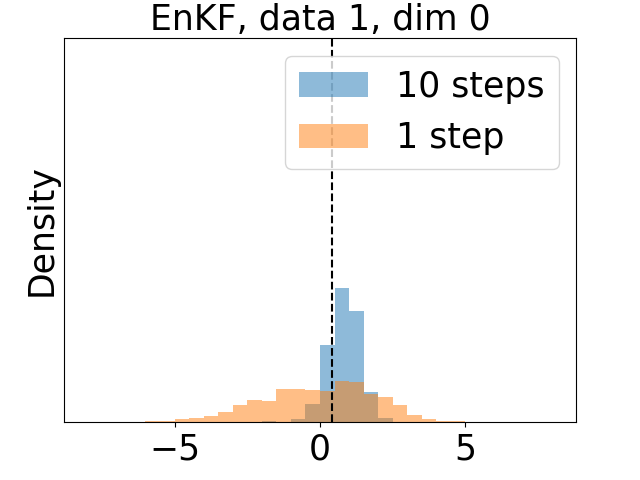}
    \includegraphics[width=0.24\columnwidth]{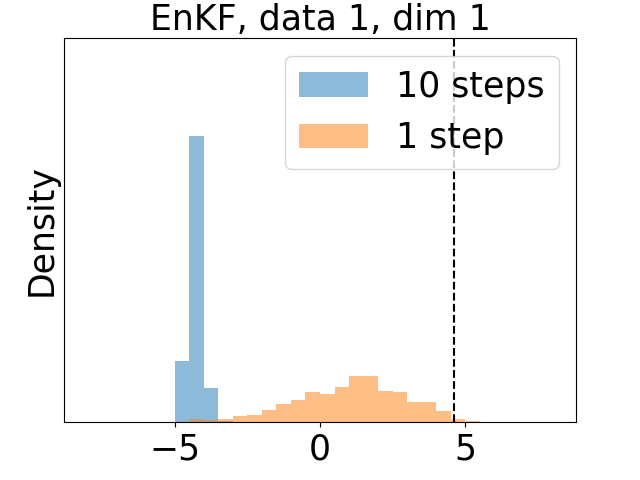}
    \includegraphics[width=0.24\columnwidth]{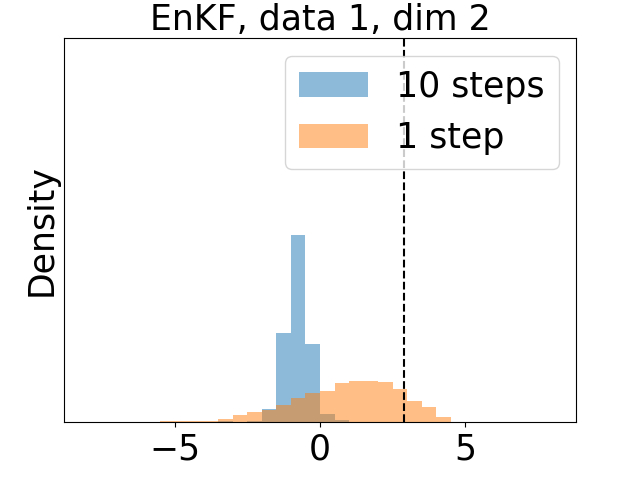}
    \includegraphics[width=0.24\columnwidth]{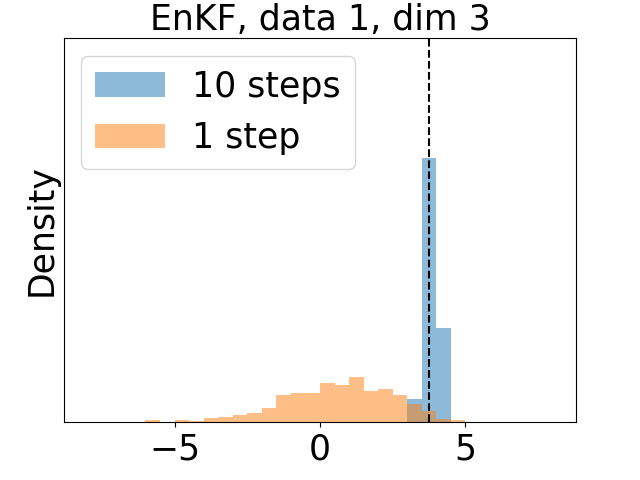}
    \includegraphics[width=0.24\columnwidth]{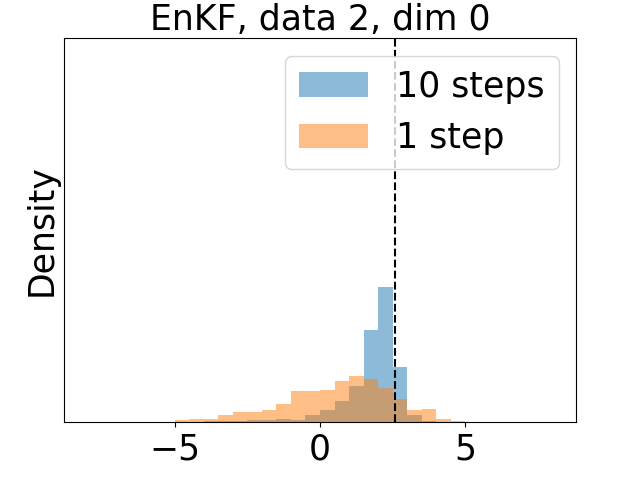}
    \includegraphics[width=0.24\columnwidth]{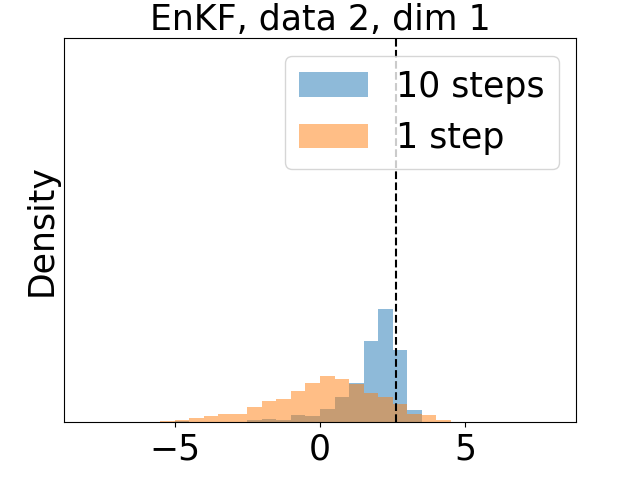}
    \includegraphics[width=0.24\columnwidth]{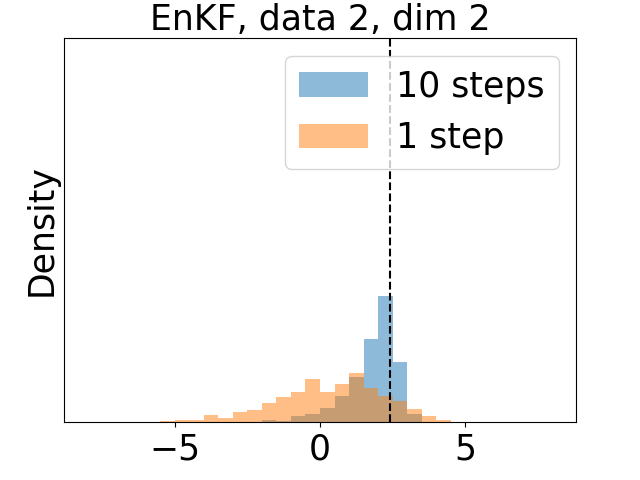}
    \includegraphics[width=0.24\columnwidth]{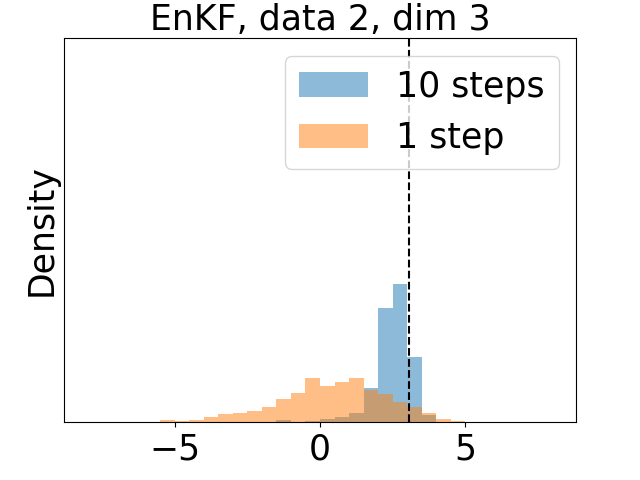}
    \includegraphics[width=0.24\columnwidth]{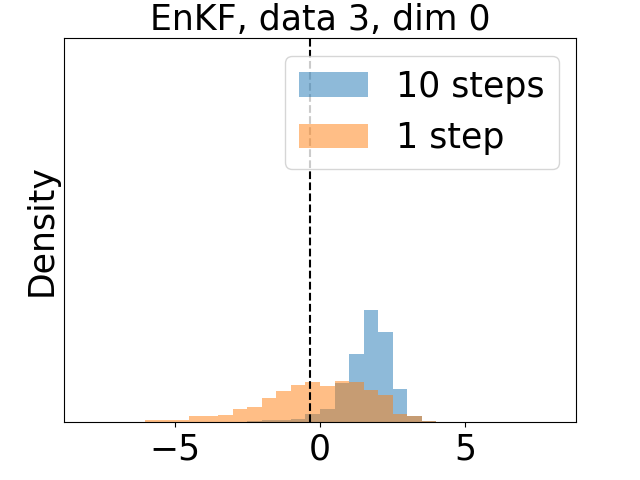}
    \includegraphics[width=0.24\columnwidth]{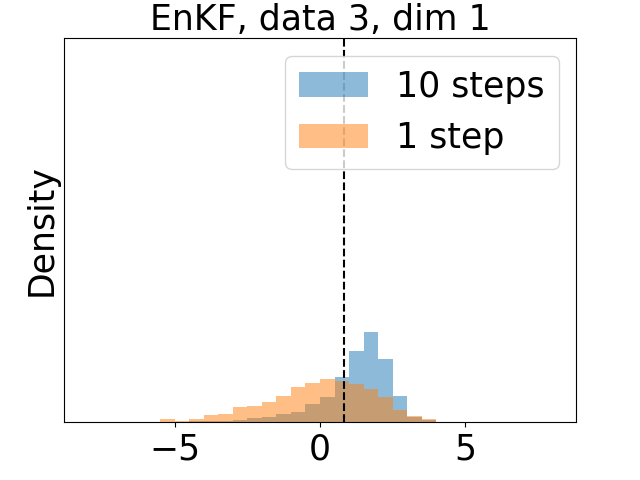}
    \includegraphics[width=0.24\columnwidth]{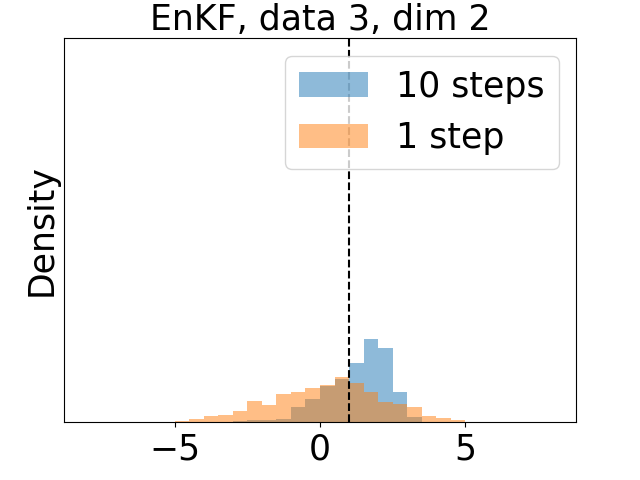}
    \includegraphics[width=0.24\columnwidth]{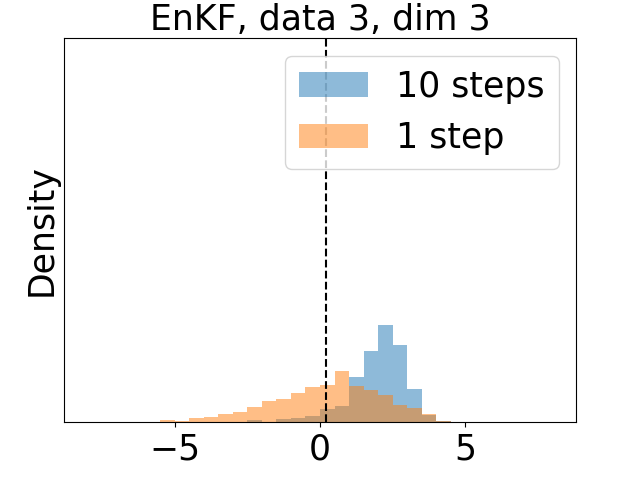}
    \includegraphics[width=0.24\columnwidth]{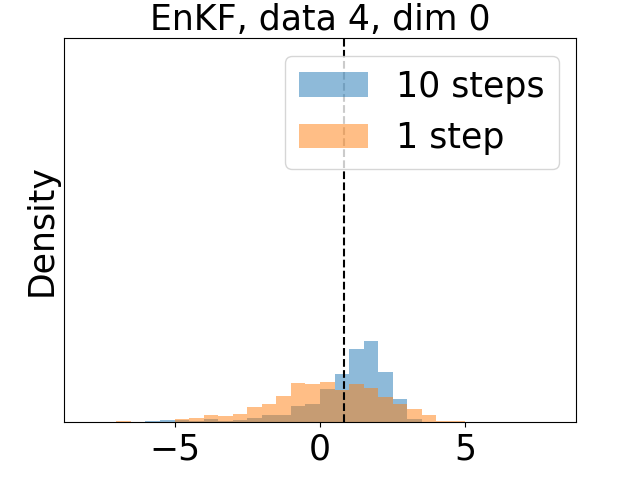}
    \includegraphics[width=0.24\columnwidth]{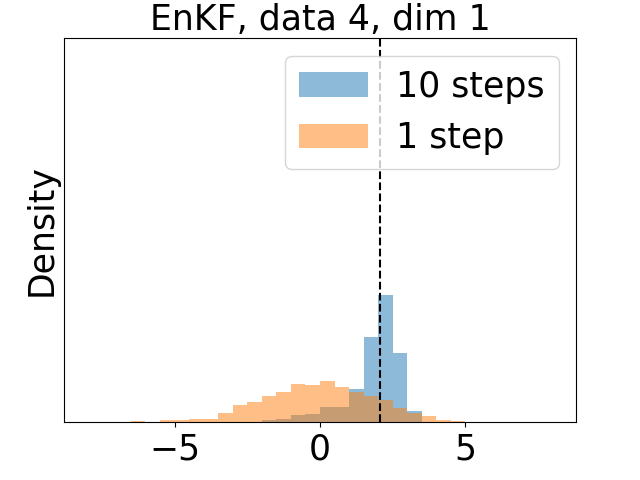}
    \includegraphics[width=0.24\columnwidth]{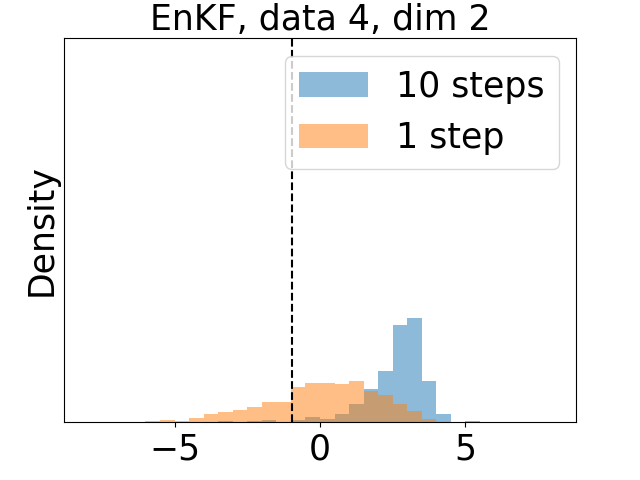}
    \includegraphics[width=0.24\columnwidth]{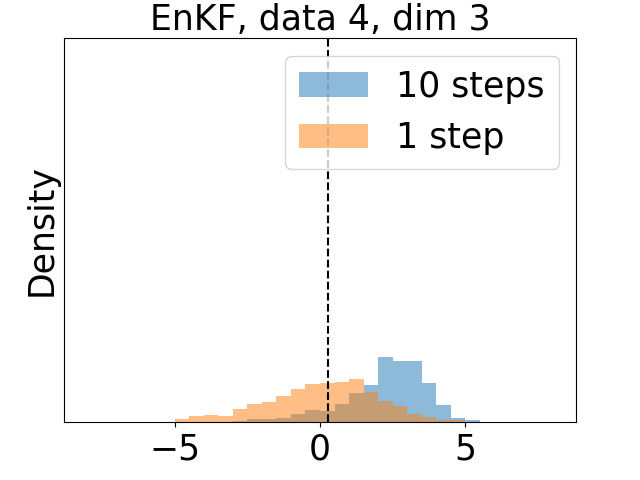}
    \includegraphics[width=0.24\columnwidth]{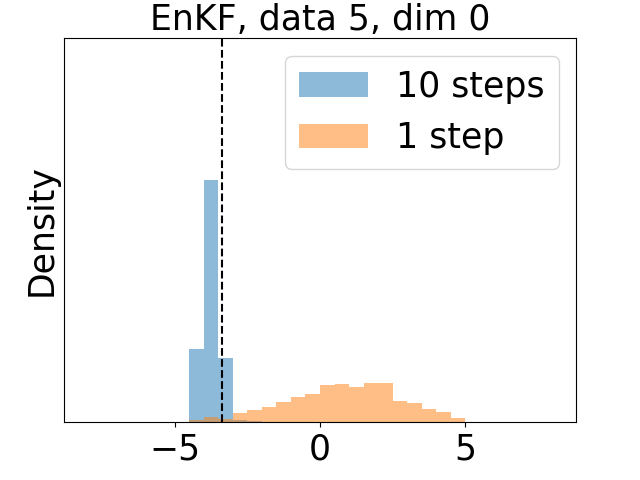}
    \includegraphics[width=0.24\columnwidth]{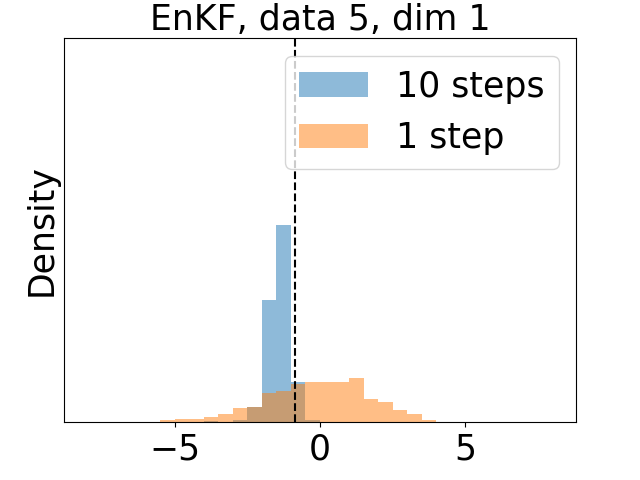}
    \includegraphics[width=0.24\columnwidth]{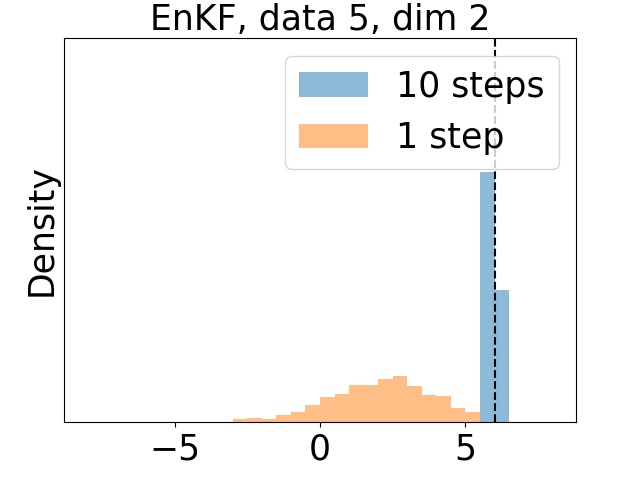}
    \includegraphics[width=0.24\columnwidth]{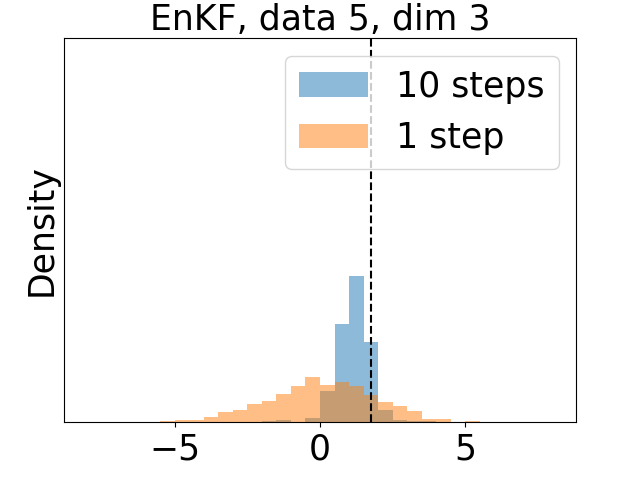}
    \caption{Histograms for the EnKF model.}
    \label{fig:EnKF_inference_example}
\end{figure}

\begin{figure}
    \centering
    \includegraphics[width=0.24\columnwidth]{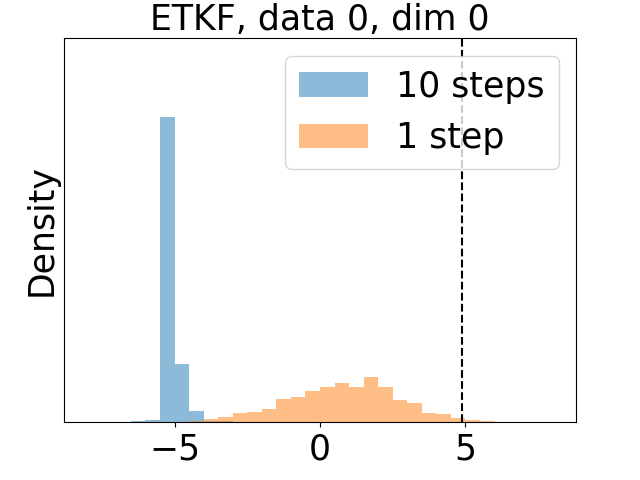}
    \includegraphics[width=0.24\columnwidth]{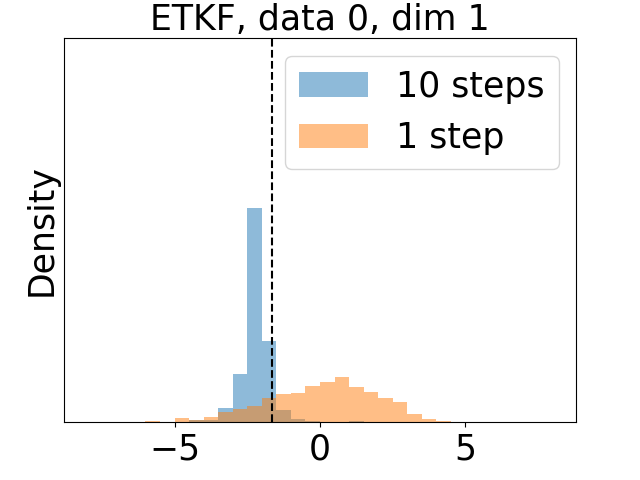}
    \includegraphics[width=0.24\columnwidth]{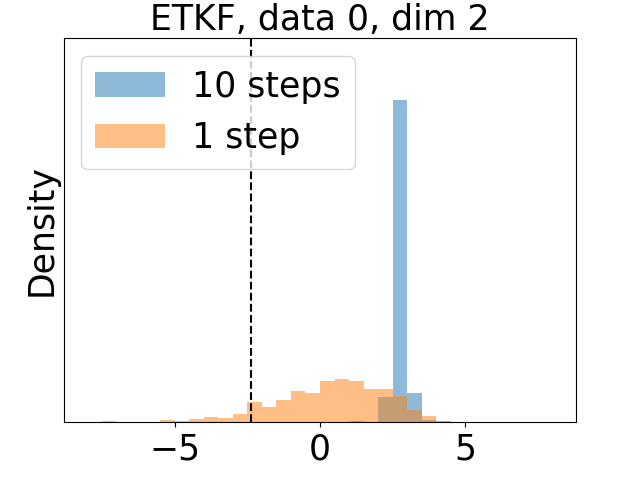}
    \includegraphics[width=0.24\columnwidth]{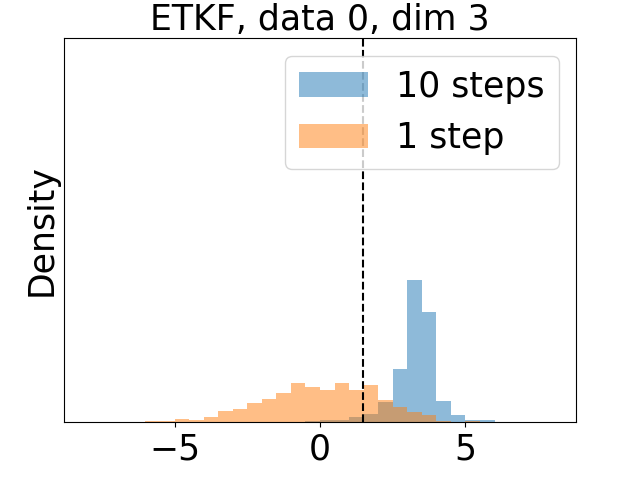}
    \includegraphics[width=0.24\columnwidth]{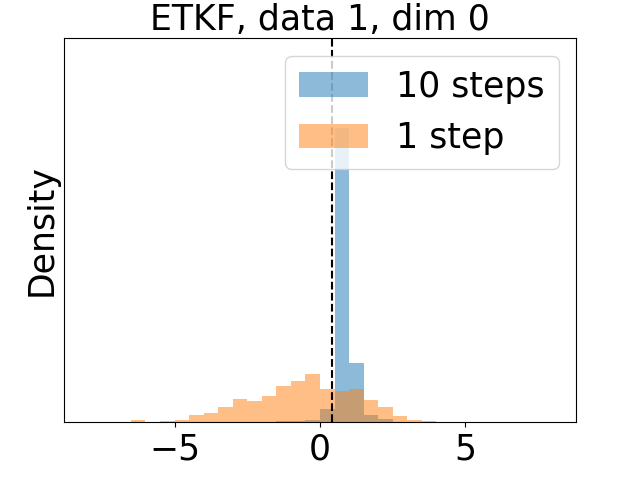}
    \includegraphics[width=0.24\columnwidth]{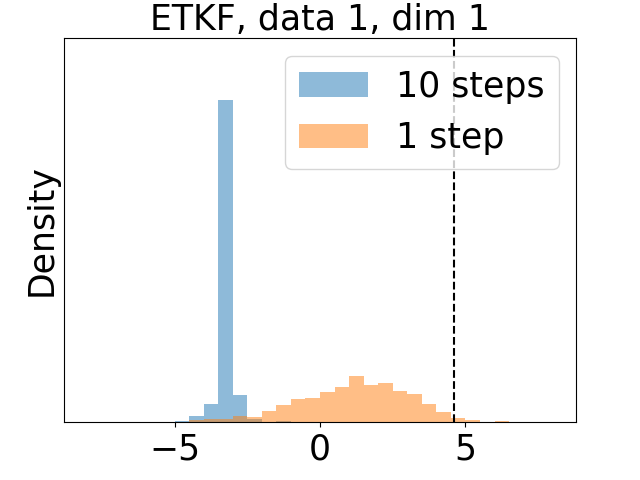}
    \includegraphics[width=0.24\columnwidth]{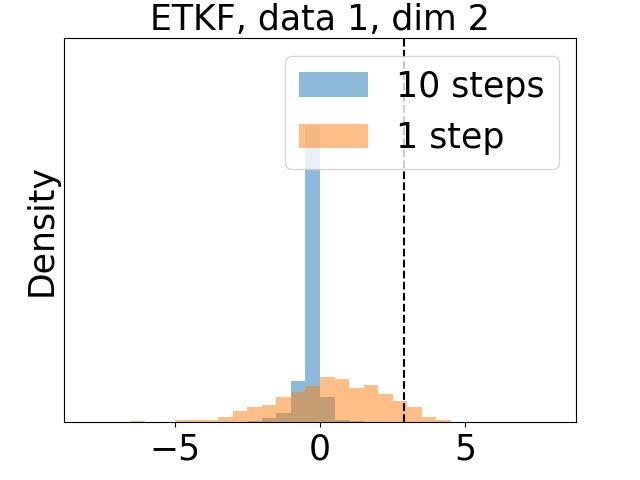}
    \includegraphics[width=0.24\columnwidth]{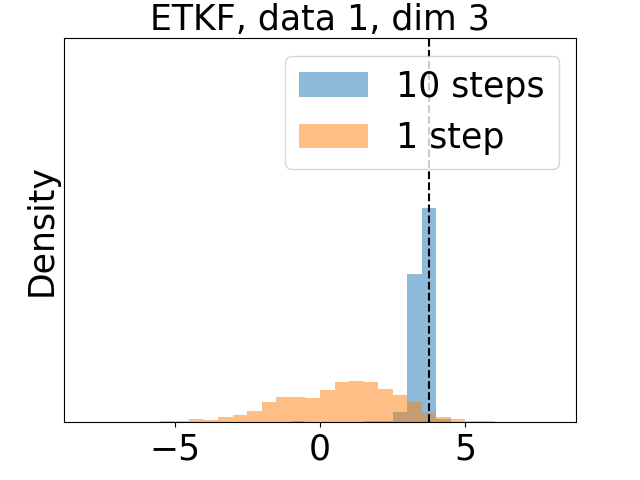}
    \includegraphics[width=0.24\columnwidth]{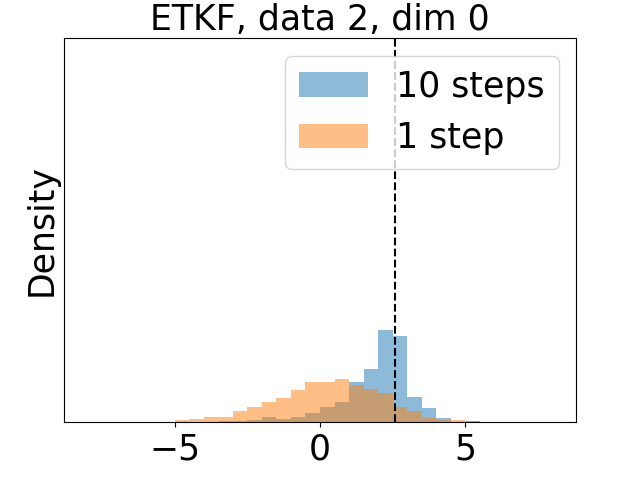}
    \includegraphics[width=0.24\columnwidth]{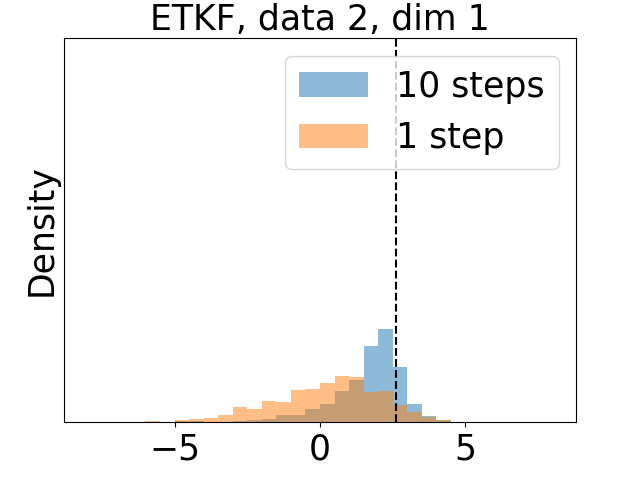}
    \includegraphics[width=0.24\columnwidth]{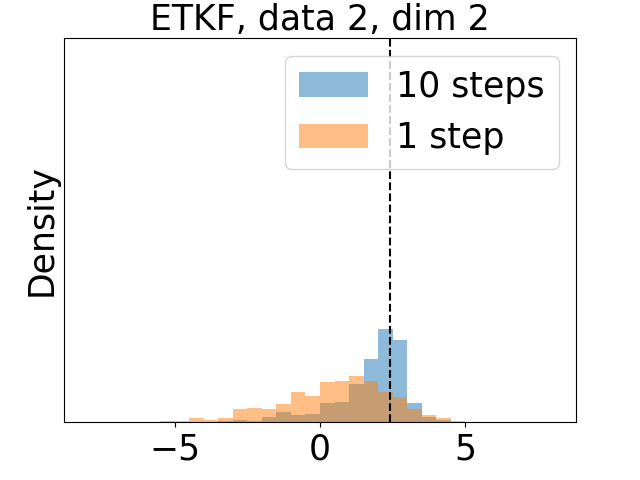}
    \includegraphics[width=0.24\columnwidth]{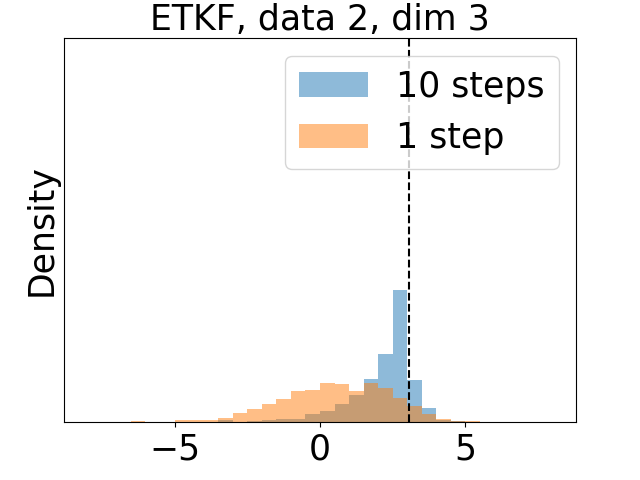}
    \includegraphics[width=0.24\columnwidth]{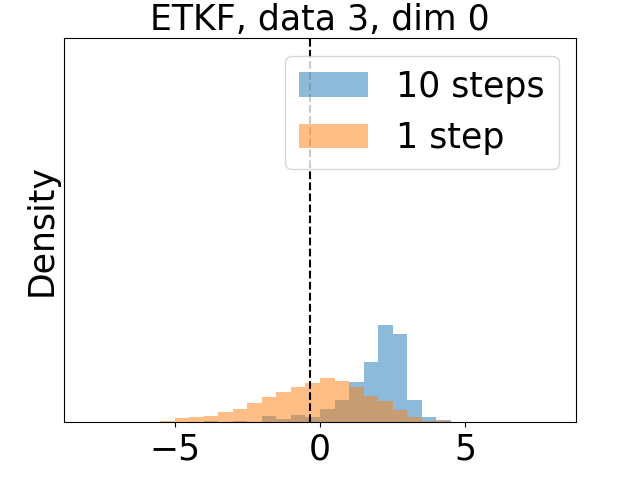}
    \includegraphics[width=0.24\columnwidth]{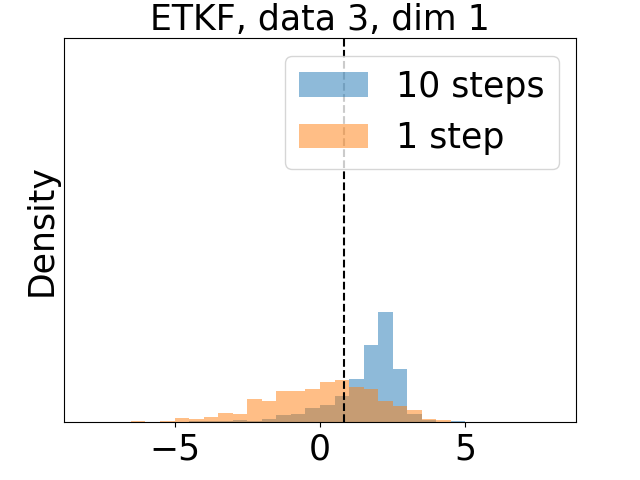}
    \includegraphics[width=0.24\columnwidth]{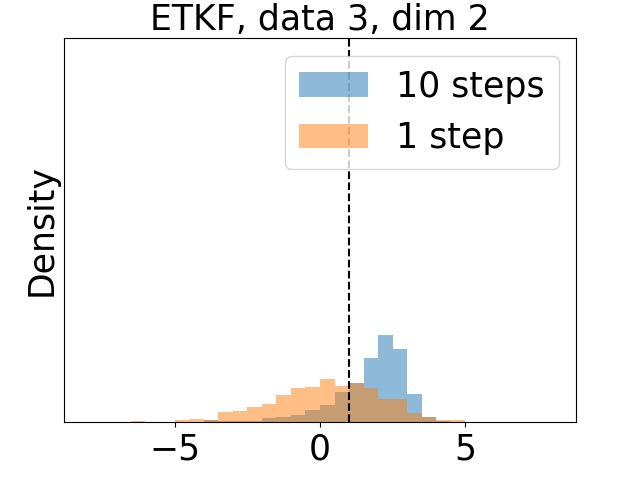}
    \includegraphics[width=0.24\columnwidth]{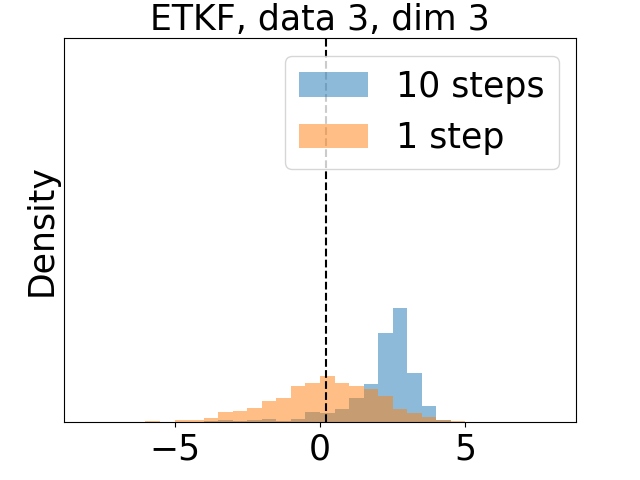}
    \includegraphics[width=0.24\columnwidth]{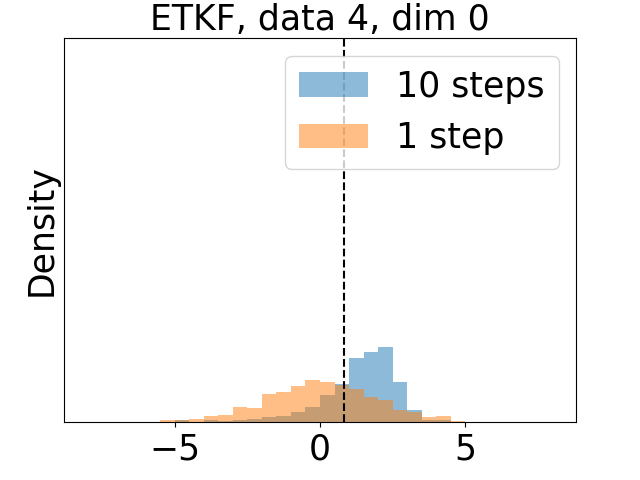}
    \includegraphics[width=0.24\columnwidth]{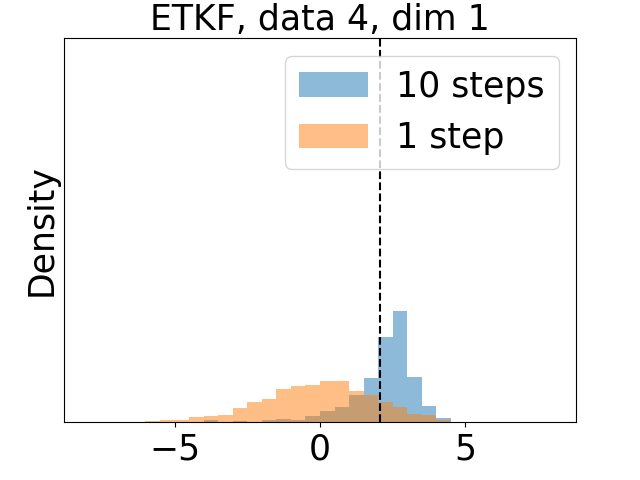}
    \includegraphics[width=0.24\columnwidth]{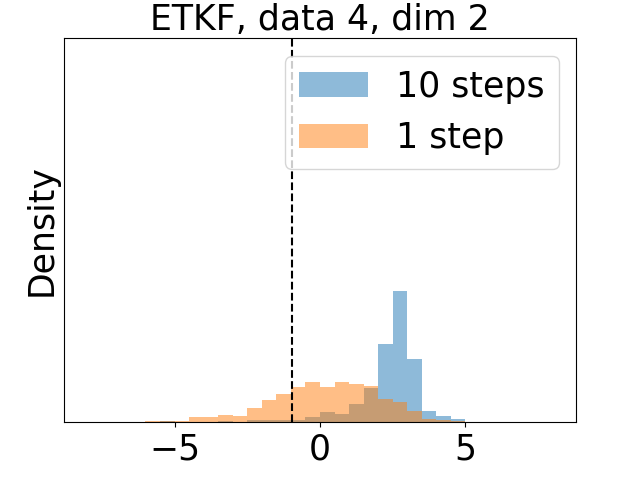}
    \includegraphics[width=0.24\columnwidth]{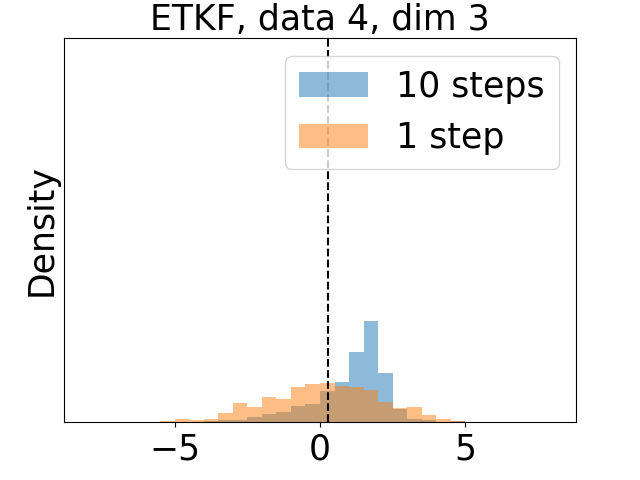}
    \includegraphics[width=0.24\columnwidth]{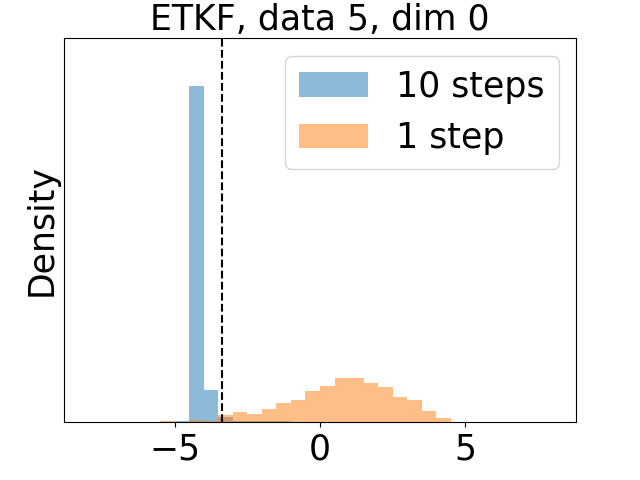}
    \includegraphics[width=0.24\columnwidth]{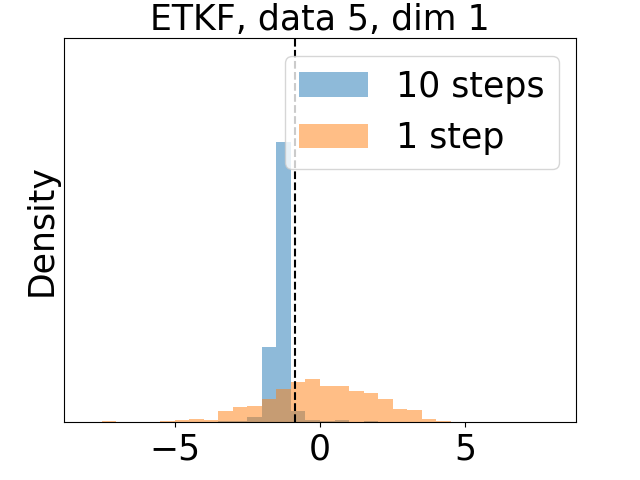}
    \includegraphics[width=0.24\columnwidth]{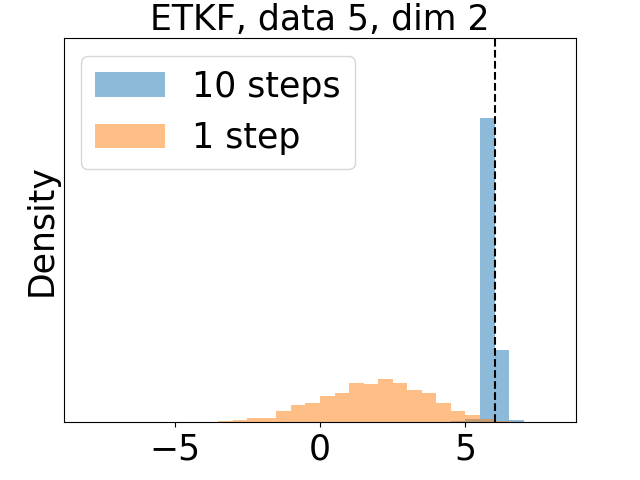}
    \includegraphics[width=0.24\columnwidth]{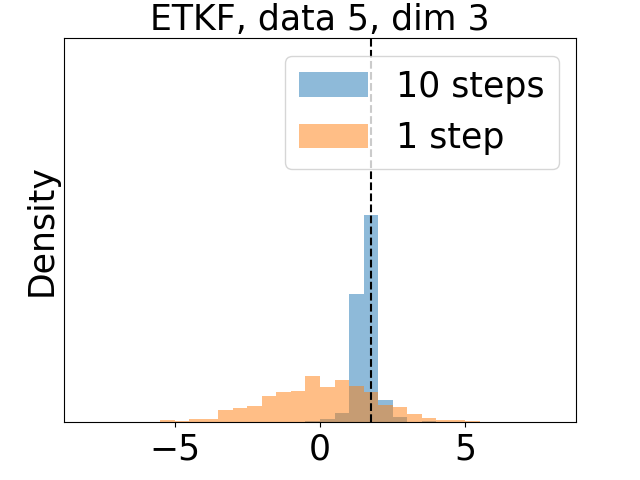}
    \caption{Histograms for the ETKF model.}
    \label{fig:ETKF_inference_example}
\end{figure}


\newpage
\input{checklist.tex}

\end{document}

%% file: checklist.tex
\clearpage

\section*{NeurIPS Paper Checklist}

\begin{enumerate}

\item {\bf Claims}
    \item[] Question: Do the main claims made in the abstract and introduction accurately reflect the paper's contributions and scope?
    \item[] Answer: \answerYes{} 
    \item[] Justification: the abstract and introduction state the paper’s actual contributions: a simulator-preserving amortized variational smoother, observation-only training, exactness in deterministic and linear-Gaussian limits, and experiments on Lorenz–96 and Kolmogorov flow. The related-work section also appropriately narrows scope by positioning PR-Smoother as a specific design point rather than as the first variational method for SSMs.

\item {\bf Limitations}
    \item[] Question: Does the paper discuss the limitations of the work performed by the authors?
    \item[] Answer: \answerYes{} 
    \item[] Justification: The paper has a dedicated Limitations section. It explicitly discusses the differentiability assumption on $(f,h)$, amortization gap under undertraining or distribution shift, identifiability limits in sparse-observation settings, and the fact that $\theta$ is treated as a point estimate.

\item {\bf Theory assumptions and proofs}
    \item[] Question: For each theoretical result, does the paper provide the full set of assumptions and a complete (and correct) proof?
    \item[] Answer: \answerYes{} 
    \item[] Justification: Proposition 3.1 states the required expressivity assumptions in the main text, gives a proof sketch there, and points to full derivations in Appendix A.

    \item {\bf Experimental result reproducibility}
    \item[] Question: Does the paper fully disclose all the information needed to reproduce the main experimental results of the paper to the extent that it affects the main claims and/or conclusions of the paper (regardless of whether the code and data are provided or not)?
    \item[] Answer: \answerYes{} 
    \item[] Justification: The paper gives a concrete reproduction path by specifying the generative setups, architectures, baselines, hyperparameters, hardware, and runtime, and it states that code for reproducing the experiments is provided as supplementary material. That meets the checklist’s “reasonable avenue for reproducibility” standard.

\item {\bf Open access to data and code}
    \item[] Question: Does the paper provide open access to the data and code, with sufficient instructions to faithfully reproduce the main experimental results, as described in supplemental material?
    \item[] Answer: \answerYes{} 
    \item[] Justification: We provide anonymized supplementary code with training, evaluation, and data-generation scripts, dependency specifications, and exact commands needed to reproduce the main experiments and baselines.

\item {\bf Experimental setting/details}
    \item[] Question: Does the paper specify all the training and test details (e.g., data splits, hyperparameters, how they were chosen, type of optimizer) necessary to understand the results?
    \item[] Answer: \answerYes{} 
    \item[] Justification: The appendix specifies data generation, observation operators, architectures, baseline settings, optimizers, learning-rate schedules, batch sizes, epochs, and runtime summaries for the reported experiments.

\item {\bf Experiment statistical significance}
    \item[] Question: Does the paper report error bars suitably and correctly defined or other appropriate information about the statistical significance of the experiments?
    \item[] Answer: \answerYes{} 
    \item[] Justification: All reported error values are explicitly defined in the text.

\item {\bf Experiments compute resources}
    \item[] Question: For each experiment, does the paper provide sufficient information on the computer resources (type of compute workers, memory, time of execution) needed to reproduce the experiments?
    \item[] Answer: \answerYes{} 
    \item[] Justification: the paper reports hardware type and memory (GH200 and RTX5090), per-model training times, and per-window inference runtimes for PR-Smoother and baselines, which is sufficient to understand the compute required for the reported experiments. NeurIPS also asks for this exact kind of information.

\item {\bf Code of ethics}
    \item[] Question: Does the research conducted in the paper conform, in every respect, with the NeurIPS Code of Ethics \url{https://neurips.cc/public/EthicsGuidelines}?
    \item[] Answer: \answerYes{} 
    \item[] Justification: We reviewed the NeurIPS Code of Ethics and ensured that the work conforms to it. The paper studies simulator-based benchmarks and does not involve human subjects, crowdsourcing, or sensitive real-world data.

\item {\bf Broader impacts}
    \item[] Question: Does the paper discuss both potential positive societal impacts and negative societal impacts of the work performed?
    \item[] Answer: \answerYes{} 
    \item[] Justification: Potential broader impacts are discussed in Sec. 6. The main positive impact is improved uncertainty-aware state estimation and parameter calibration in scientific data-assimilation workflows; the main negative impact is downstream over-trust in high-stakes monitoring or forecasting without domain-specific validation and calibration.

\item {\bf Safeguards}
    \item[] Question: Does the paper describe safeguards that have been put in place for responsible release of data or models that have a high risk for misuse (e.g., pre-trained language models, image generators, or scraped datasets)?
    \item[] Answer: \answerNA{} 
    \item[] Justification: The submission does not release a high-risk pretrained model, scraped dataset, or dual-use generative artifact. It studies a simulator-based inference method on synthetic physical benchmarks, so the safeguards question is not applicable here.

\item {\bf Licenses for existing assets}
    \item[] Question: Are the creators or original owners of assets (e.g., code, data, models), used in the paper, properly credited and are the license and terms of use explicitly mentioned and properly respected?
    \item[] Answer: \answerNA{} 
    \item[] Justification: This work does not rely on third-party code, data, or model assets requiring license reporting beyond standard software libraries.

\item {\bf New assets}
    \item[] Question: Are new assets introduced in the paper well documented and is the documentation provided alongside the assets?
    \item[] Answer: \answerYes{} 
    \item[] Justification: We release an anonymized supplementary code package documenting setup, dependencies, training, data generation, and limitations alongside the new code.

\item {\bf Crowdsourcing and research with human subjects}
    \item[] Question: For crowdsourcing experiments and research with human subjects, does the paper include the full text of instructions given to participants and screenshots, if applicable, as well as details about compensation (if any)? 
    \item[] Answer: \answerNA{} 
    \item[] Justification: The work does not use crowdsourcing or involve human participants. It is evaluated on simulator-generated physical benchmarks only.

\item {\bf Institutional review board (IRB) approvals or equivalent for research with human subjects}
    \item[] Question: Does the paper describe potential risks incurred by study participants, whether such risks were disclosed to the subjects, and whether Institutional Review Board (IRB) approvals (or an equivalent approval/review based on the requirements of your country or institution) were obtained?
    \item[] Answer: \answerNA{} 
    \item[] Justification: The work does not involve human participants, so IRB approval or an equivalent review is not applicable.

\item {\bf Declaration of LLM usage}
    \item[] Question: Does the paper describe the usage of LLMs if it is an important, original, or non-standard component of the core methods in this research? Note that if the LLM is used only for writing, editing, or formatting purposes and does \emph{not} impact the core methodology, scientific rigor, or originality of the research, declaration is not required.
    \item[] Answer: \answerNA{} 
    \item[] Justification: LLMs are not part of the core method.

\end{enumerate}

%% file: example_paper.bib
@ARTICLE{EKFNet,
  author={Xu, Liang and Niu, Ruixin},
  journal={IEEE Transactions on Signal Processing}, 
  title={EKFNet: Learning System Noise Covariance Parameters for Nonlinear Tracking}, 
  year={2024},
  volume={72},
  number={},
  pages={3139-3152},
  doi={10.1109/TSP.2024.3417350}}

@INPROCEEDINGS{KalmanNet,
  author={Revach, Guy and Shlezinger, Nir and van Sloun, Ruud J. G. and Eldar, Yonina C.},
  booktitle={ICASSP 2021 - 2021 IEEE International Conference on Acoustics, Speech and Signal Processing (ICASSP)}, 
  title={Kalmannet: Data-Driven Kalman Filtering}, 
  year={2021},
  volume={},
  number={},
  pages={3905-3909},
  doi={10.1109/ICASSP39728.2021.9413750}}

@INPROCEEDINGS{RTSNet,
  author={Ni, Xiaoyong and Revach, Guy and Shlezinger, Nir and van Sloun, Ruud J. G. and Eldar, Yonina C.},
  booktitle={ICASSP 2022 - 2022 IEEE International Conference on Acoustics, Speech and Signal Processing (ICASSP)}, 
  title={RTSNet: Deep Learning Aided Kalman Smoothing}, 
  year={2022},
  volume={},
  number={},
  pages={5902-5906},
  doi={10.1109/ICASSP43922.2022.9746487}}

@article{DVAE_review,
url = {http://dx.doi.org/10.1561/2200000089},
year = {2021},
volume = {15},
journal = {Foundations and Trends® in Machine Learning},
title = {Dynamical Variational Autoencoders: A Comprehensive Review},
doi = {10.1561/2200000089},
issn = {1935-8237},
number = {1-2},
pages = {1-175},
author = {Laurent Girin and Simon Leglaive and Xiaoyu Bie and Julien Diard and Thomas Hueber and Xavier Alameda-Pineda}
}

@inproceedings{NormalizingKalmanFilter,
 author = {de B\'{e}zenac, Emmanuel and Rangapuram, Syama Sundar and Benidis, Konstantinos and Bohlke-Schneider, Michael and Kurle, Richard and Stella, Lorenzo and Hasson, Hilaf and Gallinari, Patrick and Januschowski, Tim},
 booktitle = {Advances in Neural Information Processing Systems},
 editor = {H. Larochelle and M. Ranzato and R. Hadsell and M.F. Balcan and H. Lin},
 pages = {2995--3007},
 publisher = {Curran Associates, Inc.},
 title = {Normalizing Kalman Filters for Multivariate Time Series Analysis},
 url = {https://proceedings.neurips.cc/paper_files/paper/2020/file/1f47cef5e38c952f94c5d61726027439-Paper.pdf},
 volume = {33},
 year = {2020}
}

@article{4DVar1,
author = {Dimet, François-Xavier Le and Talagrand, Olivier},
title = {Variational algorithms for analysis and assimilation of meteorological observations: theoretical aspects},
journal = {Tellus A},
volume = {38A},
number = {2},
pages = {97-110},
doi = {https://doi.org/10.1111/j.1600-0870.1986.tb00459.x},
url = {https://onlinelibrary.wiley.com/doi/abs/10.1111/j.1600-0870.1986.tb00459.x},
eprint = {https://onlinelibrary.wiley.com/doi/pdf/10.1111/j.1600-0870.1986.tb00459.x},
year = {1986}
}

@article{4DVar2,
author = {Lewis, John M. and Derber, John C.},
title = {The use of adjoint equations to solve a variational adjustment problem with advective constraints},
journal = {Tellus A},
volume = {37A},
number = {4},
pages = {309-322},
doi = {https://doi.org/10.1111/j.1600-0870.1985.tb00430.x},
url = {https://onlinelibrary.wiley.com/doi/abs/10.1111/j.1600-0870.1985.tb00430.x},
eprint = {https://onlinelibrary.wiley.com/doi/pdf/10.1111/j.1600-0870.1985.tb00430.x},
year = {1985}
}

@article{EnKF_Evensen,
author = {Evensen, Geir},
title = {Sequential data assimilation with a nonlinear quasi-geostrophic model using Monte Carlo methods to forecast error statistics},
journal = {Journal of Geophysical Research: Oceans},
volume = {99},
number = {C5},
pages = {10143-10162},
doi = {https://doi.org/10.1029/94JC00572},
url = {https://agupubs.onlinelibrary.wiley.com/doi/abs/10.1029/94JC00572},
eprint = {https://agupubs.onlinelibrary.wiley.com/doi/pdf/10.1029/94JC00572},
year = {1994}
}

@article {ETKF,
      author = "Craig H. Bishop and Brian J. Etherton and Sharanya J. Majumdar",
      title = "Adaptive Sampling with the Ensemble Transform Kalman Filter. Part I: Theoretical Aspects",
      journal = "Monthly Weather Review",
      year = "2001",
      publisher = "American Meteorological Society",
      address = "Boston MA, USA",
      volume = "129",
      number = "3",
      doi = "10.1175/1520-0493(2001)129<0420:ASWTET>2.0.CO;2",
      pages=      "420 - 436",
      url = "https://journals.ametsoc.org/view/journals/mwre/129/3/1520-0493_2001_129_0420_aswtet_2.0.co_2.xml"
}

@article{IEnKS,
  TITLE = {{An iterative ensemble Kalman smoother}},
  AUTHOR = {Bocquet, Marc and Sakov, Pavel},
  URL = {https://inria.hal.science/hal-00918488},
  JOURNAL = {{Quarterly Journal of the Royal Meteorological Society}},
  PUBLISHER = {{Wiley}},
  VOLUME = {140},
  NUMBER = {682},
  PAGES = {1521-1535},
  YEAR = {2014},
  MONTH = Jul,
  DOI = {10.1002/qj.2236},
  HAL_ID = {hal-00918488},
  HAL_VERSION = {v1},
}

@inproceedings{PFF07,
author = {Fred Daum and Jim Huang},
title = {{Nonlinear filters with log-homotopy}},
volume = {6699},
booktitle = {Signal and Data Processing of Small Targets 2007},
editor = {Oliver E. Drummond and Richard D. Teichgraeber},
organization = {International Society for Optics and Photonics},
publisher = {SPIE},
pages = {669918},
year = {2007},
doi = {10.1117/12.725684},
URL = {https://doi.org/10.1117/12.725684}
}

@InProceedings{DBF,
  title = 	 {Deep {B}ayesian Filter for {B}ayes-Faithful Data Assimilation},
  author =       {Tarumi, Yuta and Fukuda, Keisuke and Maeda, Shin-Ichi},
  booktitle = 	 {Proceedings of the 42nd International Conference on Machine Learning},
  pages = 	 {59182--59209},
  year = 	 {2025},
  editor = 	 {Singh, Aarti and Fazel, Maryam and Hsu, Daniel and Lacoste-Julien, Simon and Berkenkamp, Felix and Maharaj, Tegan and Wagstaff, Kiri and Zhu, Jerry},
  volume = 	 {267},
  series = 	 {Proceedings of Machine Learning Research},
  month = 	 {13--19 Jul},
  publisher =    {PMLR},
  url = 	 {https://proceedings.mlr.press/v267/tarumi25a.html},}

@article{
SBI_review,
author = {Kyle Cranmer  and Johann Brehmer  and Gilles Louppe },
title = {The frontier of simulation-based inference},
journal = {Proceedings of the National Academy of Sciences},
volume = {117},
number = {48},
pages = {30055-30062},
year = {2020},
doi = {10.1073/pnas.1912789117},
URL = {https://www.pnas.org/doi/abs/10.1073/pnas.1912789117},
eprint = {https://www.pnas.org/doi/pdf/10.1073/pnas.1912789117}}

@article{Fablet21,
author = {Fablet, R. and Chapron, B. and Drumetz, L. and Mémin, E. and Pannekoucke, O. and Rousseau, F.},
title = {Learning Variational Data Assimilation Models and Solvers},
journal = {Journal of Advances in Modeling Earth Systems},
volume = {13},
number = {10},
pages = {e2021MS002572},
doi = {https://doi.org/10.1029/2021MS002572},
url = {https://agupubs.onlinelibrary.wiley.com/doi/abs/10.1029/2021MS002572},
eprint = {https://agupubs.onlinelibrary.wiley.com/doi/pdf/10.1029/2021MS002572},
note = {e2021MS002572 2021MS002572},
year = {2021}
}

@Article{Beauchamp23,
AUTHOR = {Beauchamp, M. and Febvre, Q. and Georgenthum, H. and Fablet, R.},
TITLE = {4DVarNet-SSH: end-to-end learning of variational interpolation schemes for nadir and wide-swath satellite altimetry},
JOURNAL = {Geoscientific Model Development},
VOLUME = {16},
YEAR = {2023},
NUMBER = {8},
PAGES = {2119--2147},
URL = {https://gmd.copernicus.org/articles/16/2119/2023/},
DOI = {10.5194/gmd-16-2119-2023}
}

@misc{DKF,
      title={Deep Kalman Filters}, 
      author={Rahul G. Krishnan and Uri Shalit and David Sontag},
      year={2015},
      eprint={1511.05121},
      archivePrefix={arXiv},
      primaryClass={stat.ML},
      url={https://arxiv.org/abs/1511.05121}, 
}

@Article{2018Bonavita,
AUTHOR = {Bonavita, M. and Lean, P. and Holm, E.},
TITLE = {Nonlinear effects in 4D-Var},
JOURNAL = {Nonlinear Processes in Geophysics},
VOLUME = {25},
YEAR = {2018},
NUMBER = {3},
PAGES = {713--729},
URL = {https://npg.copernicus.org/articles/25/713/2018/},
DOI = {10.5194/npg-25-713-2018}
}

@article{Carassi_DAreview,
author = {Carrassi, Alberto and Bocquet, Marc and Bertino, Laurent and Evensen, Geir},
title = {Data assimilation in the geosciences: An overview of methods, issues, and perspectives},
journal = {WIREs Climate Change},
volume = {9},
number = {5},
pages = {e535},
doi = {https://doi.org/10.1002/wcc.535},
url = {https://wires.onlinelibrary.wiley.com/doi/abs/10.1002/wcc.535},
eprint = {https://wires.onlinelibrary.wiley.com/doi/pdf/10.1002/wcc.535},
year = {2018}
}

@article{ERA5,
author = {Hersbach, Hans and Bell, Bill and Berrisford, Paul and Hirahara, Shoji and Horányi, András and Muñoz-Sabater, Joaquín and Nicolas, Julien and Peubey, Carole and Radu, Raluca and Schepers, Dinand and Simmons, Adrian and Soci, Cornel and Abdalla, Saleh and Abellan, Xavier and Balsamo, Gianpaolo and Bechtold, Peter and Biavati, Gionata and Bidlot, Jean and Bonavita, Massimo and De Chiara, Giovanna and Dahlgren, Per and Dee, Dick and Diamantakis, Michail and Dragani, Rossana and Flemming, Johannes and Forbes, Richard and Fuentes, Manuel and Geer, Alan and Haimberger, Leo and Healy, Sean and Hogan, Robin J. and Hólm, Elías and Janisková, Marta and Keeley, Sarah and Laloyaux, Patrick and Lopez, Philippe and Lupu, Cristina and Radnoti, Gabor and de Rosnay, Patricia and Rozum, Iryna and Vamborg, Freja and Villaume, Sebastien and Thépaut, Jean-Noël},
title = {The ERA5 global reanalysis},
journal = {Quarterly Journal of the Royal Meteorological Society},
volume = {146},
number = {730},
pages = {1999-2049},
doi = {https://doi.org/10.1002/qj.3803},
url = {https://rmets.onlinelibrary.wiley.com/doi/abs/10.1002/qj.3803},
eprint = {https://rmets.onlinelibrary.wiley.com/doi/pdf/10.1002/qj.3803},
year = {2020}
}

@article{
GraphCast,
author = {Remi Lam  and Alvaro Sanchez-Gonzalez  and Matthew Willson  and Peter Wirnsberger  and Meire Fortunato  and Ferran Alet  and Suman Ravuri  and Timo Ewalds  and Zach Eaton-Rosen  and Weihua Hu  and Alexander Merose  and Stephan Hoyer  and George Holland  and Oriol Vinyals  and Jacklynn Stott  and Alexander Pritzel  and Shakir Mohamed  and Peter Battaglia },
title = {Learning skillful medium-range global weather forecasting},
journal = {Science},
volume = {382},
number = {6677},
pages = {1416-1421},
year = {2023},
doi = {10.1126/science.adi2336},
URL = {https://www.science.org/doi/abs/10.1126/science.adi2336},
eprint = {https://www.science.org/doi/pdf/10.1126/science.adi2336},
}

@article{PanguWeather,
  author = {Bi, Kaifeng and Xie, Lingxi and Zhang, Hengheng and Chen, Xin and Gu, Xiaotao and Tian, Qi},
  title = {Accurate medium-range global weather forecasting with 3D neural networks},
  journal = {Nature},
  volume = {619},
  pages = {533--538},
  year = {2023},
  publisher = {Nature Publishing Group},
  doi = {10.1038/s41586-023-06185-3},
  url = {https://www.nature.com/articles/s41586-023-06185-3},
}

@misc{FourCastNet,
      title={FourCastNet: A Global Data-driven High-resolution Weather Model using Adaptive Fourier Neural Operators}, 
      author={Jaideep Pathak and Shashank Subramanian and Peter Harrington and Sanjeev Raja and Ashesh Chattopadhyay and Morteza Mardani and Thorsten Kurth and David Hall and Zongyi Li and Kamyar Azizzadenesheli and Pedram Hassanzadeh and Karthik Kashinath and Animashree Anandkumar},
      year={2022},
      eprint={2202.11214},
      archivePrefix={arXiv},
      primaryClass={physics.ao-ph},
      url={https://arxiv.org/abs/2202.11214}, 
}

@article{VenusDA,
title = "Development of an ensemble Kalman filter data assimilation system for the Venusian atmosphere",
author = "Norihiko Sugimoto and Akira Yamazaki and Toru Kouyama and Hiroki Kashimura and Takeshi Enomoto and Masahiro Takagi",
year = "2017",
month = dec,
day = "1",
doi = "10.1038/s41598-017-09461-1",
language = "English",
volume = "7",
journal = "Scientific reports",
issn = "2045-2322",
publisher = "Nature Research",
number = "1",
}

@article{SpaceWeather1,
author = {Lang, Matthew and Browne, Philip and van Leeuwen, Peter Jan and Owens, Mathew},
title = {Data Assimilation in the Solar Wind: Challenges and First Results},
journal = {Space Weather},
volume = {15},
number = {11},
pages = {1490-1510},
doi = {https://doi.org/10.1002/2017SW001681},
url = {https://agupubs.onlinelibrary.wiley.com/doi/abs/10.1002/2017SW001681},
eprint = {https://agupubs.onlinelibrary.wiley.com/doi/pdf/10.1002/2017SW001681},
year = {2017}
}

@ARTICLE{SpaceWeather2,
       author = {{Schunk}, R.~W. and {Scherliess}, L. and {Eccles}, V. and {Gardner}, L.~C. and {Sojka}, J.~J. and {Zhu}, L. and {Pi}, X. and {Mannucci}, A.~J. and {Komjathy}, A. and {Wang}, C. and {Rosen}, G.},
        title = "{Challenges in Specifying and Predicting Space Weather}",
      journal = {Space Weather},
         year = 2021,
        month = feb,
       volume = {19},
       number = {2},
          eid = {e2019SW002404},
        pages = {e2019SW002404},
          doi = {10.1029/2019SW002404},
       adsurl = {https://ui.adsabs.harvard.edu/abs/2021SpWea..1902404S}
}

@article {2010Bocquet,
      author = "Marc Bocquet and Carlos A. Pires and Lin Wu",
      title = "Beyond Gaussian Statistical Modeling in Geophysical Data Assimilation",
      journal = "Monthly Weather Review",
      year = "2010",
      publisher = "American Meteorological Society",
      address = "Boston MA, USA",
      volume = "138",
      number = "8",
      doi = "10.1175/2010MWR3164.1",
      pages=      "2997 - 3023",
      url = "https://journals.ametsoc.org/view/journals/mwre/138/8/2010mwr3164.1.xml"
}

@phdthesis{Lorenz96,
  author = {E.N. Lorenz},
  title = {Predictability: a problem partly solved},
  year = {1995},
  journal = {Seminar on Predictability, 4-8 September 1995},
  volume = {1},
  pages = {1-18},
  publisher = {ECMWF},
  address = {Shinfield Park, Reading},
  language = {eng},
}

@article{VarQC1,
author = {Anderson, Erik and Järvinen, Heikki},
title = {Variational quality control},
journal = {Quarterly Journal of the Royal Meteorological Society},
volume = {125},
number = {554},
pages = {697-722},
doi = {https://doi.org/10.1002/qj.49712555416},
url = {https://rmets.onlinelibrary.wiley.com/doi/abs/10.1002/qj.49712555416},
eprint = {https://rmets.onlinelibrary.wiley.com/doi/pdf/10.1002/qj.49712555416},
year = {1999}
}

@ARTICLE{VarQC2,
       author = {{Storto}, Andrea},
        title = "{Variational quality control of hydrographic profile data with non-Gaussian errors for global ocean variational data assimilation systems}",
      journal = {Ocean Modelling},
         year = 2016,
        month = aug,
       volume = {104},
        pages = {226-241},
          doi = {10.1016/j.ocemod.2016.06.011},
       adsurl = {https://ui.adsabs.harvard.edu/abs/2016OcMod.104..226S}
}

@Article{2019Geer_AllSky,
AUTHOR = {Geer, A. J. and Migliorini, S. and Matricardi, M.},
TITLE = {All-sky assimilation of infrared radiances sensitive to mid- and upper-tropospheric moisture and cloud},
JOURNAL = {Atmospheric Measurement Techniques},
VOLUME = {12},
YEAR = {2019},
NUMBER = {9},
PAGES = {4903--4929},
URL = {https://amt.copernicus.org/articles/12/4903/2019/},
DOI = {10.5194/amt-12-4903-2019}
}

@misc{FlowDAS,
      title={FlowDAS: A Stochastic Interpolant-based Framework for Data Assimilation}, 
      author={Siyi Chen and Yixuan Jia and Qing Qu and He Sun and Jeffrey A Fessler},
      year={2025},
      eprint={2501.16642},
      archivePrefix={arXiv},
      primaryClass={eess.SP},
      url={https://arxiv.org/abs/2501.16642}, 
}

@article{2013Reich,
author = {Reich, Sebastian},
title = {A Nonparametric Ensemble Transform Method for Bayesian Inference},
journal = {SIAM Journal on Scientific Computing},
volume = {35},
number = {4},
pages = {A2013-A2024},
year = {2013},
doi = {10.1137/130907367},
URL = {
        https://doi.org/10.1137/130907367
},
eprint = { 
        https://doi.org/10.1137/130907367
}
}

@article{2019vanLeeuwen_PFReview,
author = {van Leeuwen, Peter Jan and Künsch, Hans R. and Nerger, Lars and Potthast, Roland and Reich, Sebastian},
title = {Particle filters for high-dimensional geoscience applications: A review},
journal = {Quarterly Journal of the Royal Meteorological Society},
volume = {145},
number = {723},
pages = {2335-2365},
doi = {https://doi.org/10.1002/qj.3551},
url = {https://rmets.onlinelibrary.wiley.com/doi/abs/10.1002/qj.3551},
eprint = {https://rmets.onlinelibrary.wiley.com/doi/pdf/10.1002/qj.3551},
year = {2019}
}

@InProceedings{2021frerix,
  title = 	 {Variational Data Assimilation with a Learned Inverse Observation Operator},
  author =       {Frerix, Thomas and Kochkov, Dmitrii and Smith, Jamie and Cremers, Daniel and Brenner, Michael and Hoyer, Stephan},
  booktitle = 	 {Proceedings of the 38th International Conference on Machine Learning},
  pages = 	 {3449--3458},
  year = 	 {2021},
  editor = 	 {Meila, Marina and Zhang, Tong},
  volume = 	 {139},
  series = 	 {Proceedings of Machine Learning Research},
  month = 	 {18--24 Jul},
  publisher =    {PMLR},
  url = 	 {https://proceedings.mlr.press/v139/frerix21a.html},
}

@article{ABVI,
  author  = {Mathis Chagneux and Elisabeth Gassiat and Pierre Gloaguen and Sylvain Le Corff},
  title   = {Additive smoothing error in backward variational inference for general state-space models},
  journal = {Journal of Machine Learning Research},
  year    = {2024},
  volume  = {25},
  number  = {28},
  pages   = {1--33},
  url     = {http://jmlr.org/papers/v25/22-1392.html}
}

@inproceedings{SDA,
 author = {Rozet, Fran\c{c}ois and Louppe, Gilles},
 booktitle = {Advances in Neural Information Processing Systems},
 editor = {A. Oh and T. Naumann and A. Globerson and K. Saenko and M. Hardt and S. Levine},
 pages = {40521--40541},
 publisher = {Curran Associates, Inc.},
 title = {Score-based Data Assimilation},
 url = {https://proceedings.neurips.cc/paper_files/paper/2023/file/7f7fa581cc8a1970a4332920cdf87395-Paper-Conference.pdf},
 volume = {36},
 year = {2023}
}

@inproceedings{
RealNVP,
title={Density estimation using Real {NVP}},
author={Laurent Dinh and Jascha Sohl-Dickstein and Samy Bengio},
booktitle={International Conference on Learning Representations},
year={2017},
url={https://openreview.net/forum?id=HkpbnH9lx}
}

@misc{ECMWF_4DVar_difficulty,
  author = {Erik Andersson and Mike Fisher and E. Hólm and L. Isaksen and G. Radnóti and Y. Trémolet},
  title = {Will the 4D-Var approach be defeated by nonlinearity?},
  year = {2005},
  journal = {ECMWF Technical Memoranda},
  number = {479},
  pages = {26},
  publisher = {ECMWF},
  address = {Shinfield Park, Reading},
  url = {https://www.ecmwf.int/node/7768},
  doi = {10.21957/92th7r03d},
  language = {eng},
}

@inproceedings{FIVO,
 author = {Maddison, Chris J and Lawson, John and Tucker, George and Heess, Nicolas and Norouzi, Mohammad and Mnih, Andriy and Doucet, Arnaud and Teh, Yee},
 booktitle = {Advances in Neural Information Processing Systems},
 editor = {I. Guyon and U. Von Luxburg and S. Bengio and H. Wallach and R. Fergus and S. Vishwanathan and R. Garnett},
 pages = {},
 publisher = {Curran Associates, Inc.},
 title = {Filtering Variational Objectives},
 url = {https://proceedings.neurips.cc/paper_files/paper/2017/file/fa84632d742f2729dc32ce8cb5d49733-Paper.pdf},
 volume = {30},
 year = {2017}
}

@InProceedings{VSMC,
  title = 	 {Variational Sequential Monte Carlo},
  author = 	 {Naesseth, Christian and Linderman, Scott and Ranganath, Rajesh and Blei, David},
  booktitle = 	 {Proceedings of the Twenty-First International Conference on Artificial Intelligence and Statistics},
  pages = 	 {968--977},
  year = 	 {2018},
  editor = 	 {Storkey, Amos and Perez-Cruz, Fernando},
  volume = 	 {84},
  series = 	 {Proceedings of Machine Learning Research},
  month = 	 {09--11 Apr},
  publisher =    {PMLR},
  url = 	 {https://proceedings.mlr.press/v84/naesseth18a.html},
}

@article{Courts2023,
  author  = {Jarrad Courts and Adrian G. Wills and Thomas B. Sch{\"o}n and Brett Ninness},
  title   = {Variational system identification for nonlinear state-space models},
  journal = {Automatica},
  volume  = {147},
  pages   = {110687},
  year    = {2023},
  doi     = {10.1016/j.automatica.2022.110687}
}

@article{Dutra2025,
  author  = {Dimas Abreu Archanjo Dutra},
  title   = {Parameterizations for large-scale variational system identification using unconstrained optimization},
  journal = {Automatica},
  volume  = {173},
  pages   = {112086},
  year    = {2025},
  doi     = {10.1016/j.automatica.2024.112086}
}

@InProceedings{ryder18,
  title = 	 {Black-Box Variational Inference for Stochastic Differential Equations},
  author =       {Ryder, Tom and Golightly, Andrew and McGough, A. Stephen and Prangle, Dennis},
  booktitle = 	 {Proceedings of the 35th International Conference on Machine Learning},
  pages = 	 {4423--4432},
  year = 	 {2018},
  editor = 	 {Dy, Jennifer and Krause, Andreas},
  volume = 	 {80},
  series = 	 {Proceedings of Machine Learning Research},
  month = 	 {10--15 Jul},
  publisher =    {PMLR},
  url = 	 {https://proceedings.mlr.press/v80/ryder18a.html}
}

@InProceedings{ryder21,
  title = 	 {The neural moving average model for scalable variational inference of state space models},
  author =       {Ryder, Thomas and Prangle, Dennis and Golightly, Andrew and Matthews, Isaac},
  booktitle = 	 {Proceedings of the Thirty-Seventh Conference on Uncertainty in Artificial Intelligence},
  pages = 	 {12--22},
  year = 	 {2021},
  editor = 	 {de Campos, Cassio and Maathuis, Marloes H.},
  volume = 	 {161},
  series = 	 {Proceedings of Machine Learning Research},
  month = 	 {27--30 Jul},
  publisher =    {PMLR},
  url = 	 {https://proceedings.mlr.press/v161/ryder21a.html}
}

@misc{AFSF,
      title={Amortized Filtering and Smoothing with Conditional Normalizing Flows}, 
      author={Tiangang Cui and Xiaodong Feng and Chenlong Pei and Xiaoliang Wan and Tao Zhou},
      year={2026},
      eprint={2604.07169},
      archivePrefix={arXiv},
      primaryClass={stat.ML},
      url={https://arxiv.org/abs/2604.07169}, 
}

@inproceedings{Frigola2014,
 author = {Frigola, Roger and Chen, Yutian and Rasmussen, Carl E.},
 booktitle = {Advances in Neural Information Processing Systems},
 editor = {Z. Ghahramani and M. Welling and C. Cortes and N. Lawrence and K.Q. Weinberger},
 pages = {},
 publisher = {Curran Associates, Inc.},
 title = {Variational Gaussian Process State-Space Models},
 url = {https://proceedings.neurips.cc/paper_files/paper/2014/file/875b60cf697a9e09e12c0f07b982e431-Paper.pdf},
 volume = {27},
 year = {2014}
}

@InProceedings{Naesseth2018,
  title = 	 {Variational Sequential Monte Carlo},
  author = 	 {Naesseth, Christian and Linderman, Scott and Ranganath, Rajesh and Blei, David},
  booktitle = 	 {Proceedings of the Twenty-First International Conference on Artificial Intelligence and Statistics},
  pages = 	 {968--977},
  year = 	 {2018},
  editor = 	 {Storkey, Amos and Perez-Cruz, Fernando},
  volume = 	 {84},
  series = 	 {Proceedings of Machine Learning Research},
  month = 	 {09--11 Apr},
  publisher =    {PMLR},
  url = 	 {https://proceedings.mlr.press/v84/naesseth18a.html}
}

@InProceedings{Hirt2019,
  title = 	 {Scalable Bayesian Learning for State Space Models using Variational Inference with SMC Samplers},
  author =       {Hirt, Marcel and Dellaportas, Petros},
  booktitle = 	 {Proceedings of the Twenty-Second International Conference on Artificial Intelligence and Statistics},
  pages = 	 {76--86},
  year = 	 {2019},
  editor = 	 {Chaudhuri, Kamalika and Sugiyama, Masashi},
  volume = 	 {89},
  series = 	 {Proceedings of Machine Learning Research},
  month = 	 {16--18 Apr},
  publisher =    {PMLR},
  url = 	 {https://proceedings.mlr.press/v89/hirt19a.html}
}

@inproceedings{Krishnan2017, 
author = {Krishnan, Rahul G. and Shalit, Uri and Sontag, David}, 
title = {Structured inference networks for nonlinear state space models}, 
year = {2017}, 
publisher = {AAAI Press}, 
booktitle = {Proceedings of the Thirty-First AAAI Conference on Artificial Intelligence}, 
pages = {2101–2109}, 
numpages = {9}, 
location = {San Francisco, California, USA}, 
series = {AAAI'17} 
}

@inproceedings{KVAE,
 author = {Fraccaro, Marco and Kamronn, Simon and Paquet, Ulrich and Winther, Ole},
 booktitle = {Advances in Neural Information Processing Systems},
 editor = {I. Guyon and U. Von Luxburg and S. Bengio and H. Wallach and R. Fergus and S. Vishwanathan and R. Garnett},
 pages = {},
 publisher = {Curran Associates, Inc.},
 title = {A Disentangled Recognition and Nonlinear Dynamics Model for Unsupervised Learning},
 url = {https://proceedings.neurips.cc/paper_files/paper/2017/file/7b7a53e239400a13bd6be6c91c4f6c4e-Paper.pdf},
 volume = {30},
 year = {2017}
}

@InProceedings{Doerr2018,
  title = 	 {Probabilistic Recurrent State-Space Models},
  author =       {Doerr, Andreas and Daniel, Christian and Schiegg, Martin and Duy, Nguyen-Tuong and Schaal, Stefan and Toussaint, Marc and Sebastian, Trimpe},
  booktitle = 	 {Proceedings of the 35th International Conference on Machine Learning},
  pages = 	 {1280--1289},
  year = 	 {2018},
  editor = 	 {Dy, Jennifer and Krause, Andreas},
  volume = 	 {80},
  series = 	 {Proceedings of Machine Learning Research},
  month = 	 {10--15 Jul},
  publisher =    {PMLR},
  url = 	 {https://proceedings.mlr.press/v80/doerr18a.html}
}

@inproceedings{
DVBF,
title={Deep Variational Bayes Filters: Unsupervised Learning of State Space Models from Raw Data},
author={Maximilian Karl and Maximilian Soelch and Justin Bayer and Patrick van der Smagt},
booktitle={International Conference on Learning Representations},
year={2017},
url={https://openreview.net/forum?id=HyTqHL5xg}
}

@InProceedings{RKN,
  title = 	 {Recurrent Kalman Networks: Factorized Inference in High-Dimensional Deep Feature Spaces},
  author =       {Becker, Philipp and Pandya, Harit and Gebhardt, Gregor and Zhao, Cheng and Taylor, C. James and Neumann, Gerhard},
  booktitle = 	 {Proceedings of the 36th International Conference on Machine Learning},
  pages = 	 {544--552},
  year = 	 {2019},
  editor = 	 {Chaudhuri, Kamalika and Salakhutdinov, Ruslan},
  volume = 	 {97},
  series = 	 {Proceedings of Machine Learning Research},
  month = 	 {09--15 Jun},
  publisher =    {PMLR},
  url = 	 {https://proceedings.mlr.press/v97/becker19a.html}
}

@inproceedings{Campbell2021,
 author = {Campbell, Andrew and Shi, Yuyang and Rainforth, Thomas and Doucet, Arnaud},
 booktitle = {Advances in Neural Information Processing Systems},
 editor = {M. Ranzato and A. Beygelzimer and Y. Dauphin and P.S. Liang and J. Wortman Vaughan},
 pages = {18633--18645},
 publisher = {Curran Associates, Inc.},
 title = {Online Variational Filtering and Parameter Learning},
 url = {https://proceedings.neurips.cc/paper_files/paper/2021/file/9a6a1aaafe73c572b7374828b03a1881-Paper.pdf},
 volume = {34},
 year = {2021}
}

@article{Peyron2021,
author = {Peyron, Mathis and Fillion, Anthony and Gürol, Selime and Marchais, Victor and Gratton, Serge and Boudier, Pierre and Goret, Gael},
title = {Latent space data assimilation by using deep learning},
journal = {Quarterly Journal of the Royal Meteorological Society},
volume = {147},
number = {740},
pages = {3759-3777},
doi = {https://doi.org/10.1002/qj.4153},
url = {https://rmets.onlinelibrary.wiley.com/doi/abs/10.1002/qj.4153},
eprint = {https://rmets.onlinelibrary.wiley.com/doi/pdf/10.1002/qj.4153},
year = {2021}
}

@article {Fan2025LDA,
      author = "Hang Fan and Yubao Liu and Yuewei Liu and Zhaoyang Huo and Baojun Chen and Yu Qin",
      title = "A Novel Latent Space Data Assimilation Framework with Autoencoder-Observation to Latent Space (AE-O2L) Network. Part II: Observation and Background Assimilation with Interpretability",
      journal = "Monthly Weather Review",
      year = "2025",
      publisher = "American Meteorological Society",
      address = "Boston MA, USA",
      volume = "153",
      number = "8",
      doi = "10.1175/MWR-D-24-0058.1",
      pages=      "1349 - 1363",
      url = "https://journals.ametsoc.org/view/journals/mwre/153/8/MWR-D-24-0058.1.xml"
}

@inproceedings{SIXO,
 author = {Lawson, Dieterich and Ravent\'{o}s, Allan and Warrington, Andrew and Linderman, Scott},
 booktitle = {Advances in Neural Information Processing Systems},
 editor = {S. Koyejo and S. Mohamed and A. Agarwal and D. Belgrave and K. Cho and A. Oh},
 pages = {38844--38858},
 publisher = {Curran Associates, Inc.},
 title = {SIXO: Smoothing Inference with Twisted Objectives},
 url = {https://proceedings.neurips.cc/paper_files/paper/2022/file/fddc79681b2df2734c01444f9bc2a17e-Paper-Conference.pdf},
 volume = {35},
 year = {2022}
}

@article{Ramgraber23b,
title = {Ensemble transport smoothing. Part II: Nonlinear updates},
journal = {Journal of Computational Physics: X},
volume = {17},
pages = {100133},
year = {2023},
issn = {2590-0552},
doi = {https://doi.org/10.1016/j.jcpx.2023.100133},
url = {https://www.sciencedirect.com/science/article/pii/S2590055223000112},
author = {Maximilian Ramgraber and Ricardo Baptista and Dennis McLaughlin and Youssef Marzouk},
}

@article{Ramgraber23a,
title = {Ensemble transport smoothing. Part I: Unified framework},
journal = {Journal of Computational Physics: X},
volume = {17},
pages = {100134},
year = {2023},
issn = {2590-0552},
doi = {https://doi.org/10.1016/j.jcpx.2023.100134},
url = {https://www.sciencedirect.com/science/article/pii/S2590055223000124},
author = {Maximilian Ramgraber and Ricardo Baptista and Dennis McLaughlin and Youssef Marzouk},
}

@InProceedings{VIFLE,
  title = 	 {Variational Inference for Sequential Data with Future Likelihood Estimates},
  author =       {Kim, Geon-Hyeong and Jang, Youngsoo and Yang, Hongseok and Kim, Kee-Eung},
  booktitle = 	 {Proceedings of the 37th International Conference on Machine Learning},
  pages = 	 {5296--5305},
  year = 	 {2020},
  editor = 	 {III, Hal Daumé and Singh, Aarti},
  volume = 	 {119},
  series = 	 {Proceedings of Machine Learning Research},
  month = 	 {13--18 Jul},
  publisher =    {PMLR},
  url = 	 {https://proceedings.mlr.press/v119/kim20d.html},
}

@misc{LEVDA,
      title={{LEVDA}: Latent Ensemble Variational Data Assimilation via Differentiable Dynamics}, 
      author={Phillip Si and Peng Chen},
      year={2026},
      eprint={2602.19406},
      archivePrefix={arXiv},
      primaryClass={cs.LG},
      url={https://arxiv.org/abs/2602.19406}, 
}
